\documentclass{article}

% if you need to pass options to natbib, use, e.g.:
%     \PassOptionsToPackage{numbers, compress}{natbib}
% before loading neurips_2024

% ready for submission
% \usepackage{neurips_2024}
\usepackage[final]{neurips_2024}
\usepackage{wrapfig}
% to compile a preprint version, e.g., for submission to arXiv, add add the
% [preprint] option:
%     \usepackage[preprint]{neurips_2024}

% to compile a camera-ready version, add the [final] option, e.g.:
%     \usepackage[final]{neurips_2024}

% to avoid loading the natbib package, add option nonatbib:
%    \usepackage[nonatbib]{neurips_2024}

\usepackage[utf8]{inputenc} % allow utf-8 input
\usepackage[T1]{fontenc}    % use 8-bit T1 fonts
\usepackage[hidelinks]{hyperref}       % hyperlinks

\usepackage{url}            % simple URL typesetting
\usepackage{booktabs}       % professional-quality tables
\usepackage{amsfonts}       % blackboard math symbols
\usepackage{nicefrac}       % compact symbols for 1/2, etc.
\usepackage{microtype}      % microtypography
\usepackage{xcolor}         % colors
\usepackage{enumitem}
\usepackage{graphicx}
\usepackage{subcaption}
\usepackage{tikz}
\usepackage{xcolor}
\usepackage{amsmath}
\usepackage{amssymb}
\usepackage{mathtools}
\usepackage{amsthm}
\usepackage{verbatim}
\usepackage{float}
\usepackage{tikz}
\usepackage{multirow}
\usepackage{dirtytalk}
\usepackage{xurl}
\usepackage[textsize=tiny]{todonotes}
\usepackage{wrapfig}
\usepackage{pifont}

\usepackage{tcolorbox}
\DeclareRobustCommand{\mybox}[2][gray!10]{%
	\begin{tcolorbox}[   %
		left=0pt,
		right=0pt,
		top=0pt,
		bottom=0pt,
		colback=#1,
		colframe=#1,
		width=\dimexpr\textwidth\relax, 
		enlarge left by=0mm,
		boxsep=5pt,
		arc=0pt,outer arc=0pt,
		]
		#2
	\end{tcolorbox}
}

\usepackage{amsmath}

\usepackage{bbding}

\usepackage{diagbox}

\usepackage{xcolor}

\usepackage{thm-restate}
\newtheoremstyle{theoremdd}% name of the style to be used
  {\topsep}% measure of space to leave above the theorem. E.g.: 3pt
  {\topsep}% measure of space to leave below the theorem. E.g.: 3pt
  {\itshape}% name of font to use in the body of the theorem
  {0pt}% measure of space to indent
  {\bfseries}% name of head font
  {. }% punctuation between head and body
  { }% space after theorem head; " " = normal interword space
  {\thmname{#1}\thmnumber{ #2}\textnormal{\thmnote{ (#3)}}}
\theoremstyle{theoremdd}

\newcommand{\bZ}{\mathbf{Z}}
\newcommand{\bz}{\mathbf{z}}

\usepackage{nicematrix}

\newcommand{\norm}[1]{\left\lVert#1\right\rVert}

\DeclareMathOperator{\diag}{diag}

\newcommand\R{\mathbb{R}}
\newcommand\Cmp{\mathbb{C}}

\newcommand{\Hs}{\mathcal{H}}
\newcommand{\K}{\mathcal{K}}

%s\newcommand{\norm}[1]{\left\lVert#1\right\rVert}
\newcommand{\sprod}[2]{\langle #1, #2 \rangle}
\newcommand{\iid}{\overset{\text{iid}}{\sim}}

\newcommand{\W}{\mathcal{W}}
\newcommand{\Sig}{\text{Sig}}
\usepackage{shuffle}

\newcommand{\iu}{\mathrm{i}\mkern1mu}

\newcommand\N{\mathbb{N}}

\newcommand\Xe{\text{\fontsize{6}{6}\selectfont X}}

\newcommand\Ne{\text{\fontsize{6}{6}\selectfont N}}

\newcommand\Ye{\text{\fontsize{6}{6}\selectfont Y}}

%\usepackage{array}
%\setlength\extrarowheight{1pt}

       % "top" strut
 % "bottom" strut
 % top&bottom struts

\usepackage{amsthm}
\usepackage{enumitem}

%uncomment for minted
% \usepackage{tcolorbox}
% \tcbuselibrary{minted,breakable,xparse,skins}

% \definecolor{bg}{gray}{0.95}
% \DeclareTCBListing{mintedbox}{O{}m!O{}}{%
%   breakable=true,
%   listing engine=minted,
%   listing only,
%   minted language=#2,
%   minted style=default,
%   minted options={%
%     linenos,
%     gobble=0,
%     breaklines=true,
%     breakafter=,,
%     fontsize=\small,
%     numbersep=8pt,
%     #1},
%   boxsep=0pt,
%   left skip=0pt,
%   right skip=0pt,
%   left=25pt,
%   right=0pt,
%   top=3pt,
%   bottom=3pt,
%   arc=5pt,
%   leftrule=0pt,
%   rightrule=0pt,
%   bottomrule=2pt,
%   toprule=2pt,
%   colback=bg,
%   colframe=white!70,
%   enhanced,
%   overlay={%
%     \begin{tcbclipinterior}
%     \fill[white!20!white] (frame.south west) rectangle ([xshift=20pt]frame.north west);
%     \end{tcbclipinterior}},
%   #3}

\newcommand{\bX}{\mathbb{X}}
\newcommand{\bK}{\mathbb{K}}

%%%%%%%%%%%%%%%%%%%%%%%%%%%%%%%%
% THEOREMS
%%%%%%%%%%%%%%%%%%%%%%%%%%%%%%%%
\theoremstyle{plain}
\newtheorem{theorem}{Theorem}[section]
\newtheorem{proposition}[theorem]{Proposition}
\newtheorem{lemma}[theorem]{Lemma}
\newtheorem{corollary}[theorem]{Corollary}
\theoremstyle{definition}
\newtheorem{definition}[theorem]{Definition}

\theoremstyle{remark}
\newtheorem{remark}[theorem]{Remark}

\title{Theoretical Foundations of Deep Selective\\ State-Space Models}

% The \author macro works with any number of authors. There are two commands
% used to separate the names and addresses of multiple authors: \And and \AND.
%
% Using \And between authors leaves it to LaTeX to determine where to break the
% lines. Using \AND forces a line break at that point. So, if LaTeX puts 3 of 4
% authors names on the first line, and the last on the second line, try using
% \AND instead of \And before the third author name.

\author{%
  Nicola~ Muca Cirone \\
  Department of Mathematics\\
  Imperial College London\\
  % examples of more authors
   \And
    Antonio ~ Orvieto \\
    MPI for Intelligent Systems,\\ Tübingen AI Center \\
  ELLIS Institute Tübingen \\
  \And
  Benjamin ~ Walker \\
  Mathematical Institute\\
  University of Oxford \\
  \And
  Cristopher ~ Salvi \\
  Department of Mathematics\\
  Imperial College London \\
  \And
  Terry ~ Lyons \\
  Mathematical Institute\\\
  University of Oxford \\
  % \texttt{email} \\
  % \And
  % Coauthor \\
  % Affiliation \\
  % Address \\
  % \texttt{email} \\
}

\begin{document}

\maketitle

\begin{abstract}
    Structured state-space models~(SSMs) are gaining popularity as effective foundational architectures for sequential data, demonstrating outstanding performance across a diverse set of domains alongside desirable scalability properties. 
    Recent developments show that if the linear recurrence powering SSMs allows for a selectivity mechanism leveraging multiplicative interactions between inputs and hidden states~(e.g. Mamba, GLA, Hawk/Griffin, HGRN2), then the resulting architecture can surpass attention-powered foundation models trained on text in both accuracy and efficiency, at scales of billion parameters. In this paper, we give theoretical grounding to the selectivity mechanism, often linked to in-context learning, using tools from Rough Path Theory. We provide a framework for the theoretical analysis of generalized selective SSMs, fully characterizing their expressive power and identifying the gating mechanism as the crucial architectural choice. Our analysis provides a closed-form description of the expressive powers of modern SSMs, such as Mamba, quantifying theoretically the drastic improvement in performance from the previous generation of models, such as S4. Our theory not only motivates the success of modern selective state-space models, but also provides a solid framework to understand the expressive power of future SSM variants. In particular, it suggests cross-channel interactions could play a vital role in future improvements. 
\end{abstract}

\section{Introduction}
\label{sec:intro}

Sequence-to-sequence blocks are fundamental components of modern deep learning models for language, images, video, audio, time series, and genomics. 
For the last five years, attention~\citep{vaswani2017attention,dosovitskiy2020image} has been the dominant mechanism powering these architectures. However, competitive results have recently been achieved without attention, by using state-space models~(SSMs): GPU-efficient linear recurrent sequence-to-sequence blocks stemming from S4~\citep{gu2021efficiently}. 
SSMs achieve state-of-the-art results on long-range-reasoning benchmarks~\citep{tay2020long} and show outstanding performance in various domain including vision~\citep{nguyen2022s4nd}, audio~\citep{goel2022sashimi}, biological signals~\citep{gu2021efficiently}, reinforcement learning~\citep{lu2023structured} and online learning~\citep{zucchet2023online}. 
SSMs recently have gained significant interest in the community since their computational complexity scales linearly in sequence length, while attention scales quadratically; moreover, unlike other recurrent mechanisms such as LSTMs~\citep{hochreiter1997long} and GRUs~\citep{cho2014learning}, they can be efficiently parallelized on GPUs during training using parallel scans~\citep{martin2017parallelizing, smith2023simplified}.

While standard SSMs were shown to be particularly powerful on signal processing tasks, their computation power is limited: the core sequential mechanism of S4 is equivalent to a convolution~(filtering)~\citep{li2022makes}. This represents a drawback in challenging domains such as text and genetics, where the ability to select data efficiently in an input-dependent manner -- i.e., perform content-based reasoning -- is crucial~(see~\citep{wang2022pretraining, fu2022hungry, arora2023zoology}). Towards reaching this goal with recurrent models, various adaptations of S4 have been proposed in the last few months. Notably, Mamba~\citep{gu2023mamba} implements simple and efficient gating mechanisms on the S4 recurrence, unlocking input selectivity in the memory update. Mamba achieved state-of-the-art performance in various language modeling tasks while greatly improving the inference throughput. Similar ideas can be found in recent developments inspired by attention, such as RWKV~\citep{peng2023rwkv}, RetNet~\citep{sun2023retentive}, Gateloop~\citep{katsch2023gateloop}, Gated Linear Attention~(GLA)~\citep{yang2023gated}, and HGRN2~\citep{qin2024hgrn2}. Very recently, \cite{de2024griffin} surpassed the performance of Mamba with a gated RNN architecture -- Griffin -- based on an improved version of the LRU~\citep{orvieto2023resurrecting}, and \citep{feng2024rnnsneeded} introduced minimal versions of GRU and LSTM as gated SSMs.

\vspace{-2mm}
\paragraph{Contributions} At the core of the models discussed above is a time-varying dynamical system, where reasoning is performed through an efficient and parallelizable update\textit{ linear} in the hidden state. In this paper, we generalize the structure of such models, drawing a direct link to \emph{controlled differential equations~(CDEs)} \citep{young1936inequality, lyons1994differential, kidger2020neural, morrill2021neural, Fermanian_RNNs, salvi2022neural, hoglund2023neural, walker2024log} and use tools from \emph{rough path theory}~\citep{lyons2007differential} to study expressivity. 
\vspace{-2mm}
\begin{enumerate}[leftmargin=*]
    \item In Sec.~\ref{sec:cdes} we provide a framework for the analysis of (input-controlled) linear (in the hidden state) recurrences such as S4 and Mamba. 
    This framework allows the use of powerful tools and results in the Rough Path Theory literature by casting a large family of SSMs as Linear CDEs driven by the two possibly nonlinear embeddings $X \mapsto \omega^X$ and $X \mapsto \xi^X$, defining gates. Appendices \ref{app:sect:Sigs} and \ref{app:sect:Wronsky} provide a largely self-contained exposition of the key theoretical tools now available to us.

    \item  In Sec.~\ref{sec:expressive} we fully characterize the closure (i.e. the class of functions which can be arbitrarily well approximated) of our generalized models. This provides a generalization of the results by \citet{li2022approximation, orvieto2023universality, wang2023state}, who only consider the case of S4. The Mamba setting is more rich, complex, and relevant given the rising interest in selective SSMs. 
    %We however do not restrict our attention to Mamba, whilst we do not prescribe any specific choice for $\omega^X$ and $\xi^X$, we provide tools for an understanding of the universality properties for any such specification.
    \item We show (Thm.~\ref{thm:randomised_cde}) that full expressivity can be obtained by training only a linear layer on a Linear CDE with random parameters, providing a direct link to kernel methods and reservoirs.
    \item We point out (Thm.~\ref{thm:diagonal_expr}) that if the recurrence is diagonal, as the case for Mamba, the closure is strictly smaller than in the general dense case. Interestingly though, the closure is a peculiar set of filters that unlock some specific context-dependent processing. Full expressive power is recovered by stacking multiple SSMs without MLPs in between (Prop.~\ref{prop:stack}).
    %\item We finally extend Thm.~\ref{thm:Main} to a more general path-valued setting in Prop.~\ref{prop:pathpath}.
\end{enumerate}

Our framework not only provides significant theoretical insight regarding some recently proposed SSM architectures, but we also envision it to be a useful tool in analysing, and perhaps developing, future architectural advances.

\section{State-space Models}
\label{sec:review}

We describe here the structure of the main SSMs-based strategies for processing length-$L$ input sequences of $d$ dimensional tokens: $x\in\R^{d\times L}$. We denote by $x_\ell$ the $\ell$-th column of $x$~(the $\ell$-th token) and by $x^i$ the $i$-th row of $x$~(time series for the $i$-th channel). We will write $A \cdot v$ for matrix-vector multiplication when this enhances comprehension, and use bold letters for ``tensors'' of order greater than 2 (such as $\bz \in \times_i \R^{N_i \times L}$ introduced below).

\subsection{Review of Modern SSMs}

We start with a quick simplified recap of S4~\citep{gu2021efficiently}, the first SSM proposed in the literature, and then describe recent improved variants such as Mamba~(in particular, the S6 block)~\citep{gu2023mamba}.
%and Gated Linear Attention~(GLA)~\citep{yang2023gated}. 
We restrict our focus to the recurrent mechanism and invite the reader to refer to the original papers for a description of the token-wise operations following and preceding each block.

\paragraph{SSM basics and S4.}    Most\footnote{The LRU and S5 instead build a single recurrence operating on multidimensional~(\# channels) inputs.} SSMs~\citep{gu2021efficiently,gu2022parameterization} operate independently on input channels. Each time series $x^i\in\R^L$ is seen as the result of sampling a latent continuous-time signal $ X^i: [0,1] \to \R$  at multiples of a channel-dependent stepsize $\Delta_i>0$: $X_{\Delta_i \ell}^i := X^i(\Delta_i \ell) = x_{\ell}^i$.
In S4, each path $X^{i}$ produces a complex-valued hidden state signal $\bZ_{i} : [0,1] \to \mathbb{C}^{N_i}$ as
\begin{align}
    d\bZ_{i;t} = A_i \cdot \bZ_{i;t} ~ dt + B ~ X_{t}^i dt,
    \label{eq:s4_ct}
\end{align}
where $A_{i} = \text{diag}(a_{i,1}, a_{i,2}, \dots a_{i,N})$ is channel-specific \textit{diagonal} $N_i\times N_i$ complex valued matrix and $B\in\Cmp^{N_i}$ is an input projection shared across input components $i\in[d]$. SSMs are based on a stable discretization of the continuous system above: each input sequence channel $x^i \in \R^L$ produces a sequence of hidden states $\bz_i = [ \bz_{i;1} | \bz_{i;2} | \dots | \bz_{i;L}]\in \mathbb{C}^{N_i\times L}$ as follows:
\begin{align}
    \bz_{i; \ell} = \bar A_i \cdot \bz_{i;\ell-1} + \bar B_i x_{\ell}^i,
    \label{eq:SSM_disc}
\end{align}
where $\bar A_i$ and $\bar B_i$ are determined by the discretization technique and the channel-dependent stepsize $\Delta_i$. Under the commonly used Zero-Order Hold discretization\footnote{This corresponds to: (i) considering the continuous underlying signal $X_t$ to be piecewise constant, (ii) solving exactly ODE (\ref{eq:s4_ct}) and finally (iii) sampling $Z_t$ at the sample times of $X_t$. Refer to Appendix \ref{app:sect:ZOH}.}, 
\begin{equation}
    \bar A_i = \exp(\Delta_i A_i), \quad \bar B_i = (\Delta_i A_i)^{-1} (\exp(\Delta_i A_i)-I)\Delta_i B\approx \Delta_i B.
    \label{eq:AB_formula}
\end{equation}
Note from~\eqref{eq:SSM_disc} that SSMs at inference time are equivalent to linear recurrent neural networks~(RNNs). Yet, learning with gradient descent is performed on the continuous-time variables, unlocking stable signal propagation and alleviating vanishing gradients~\citep{orvieto2023resurrecting, zucchet2024recurrent}. 
Finally, at each channel $i$, the sequence of hidden states is mapped back to real numbers, and linear projections $C_i:\mathbb{C}^{N_i} \to \R $ are performed to produce an output a sequence of tokens $y\in\R^{d\times L}$ with the same dimensions as $x$:
$$y_\ell^i := C_i \cdot \bz_{i; \ell}$$
To conclude, we point out that the transition matrices $A_i$ are often structured, i.e. initialized deterministically through HiPPO theory~\citep{gu2020hippo} in diagonal form. 
Common choices~\citep{gu2022parameterization} are $ a_{\cdot,n} = -\frac{1}{2}+ \iu\pi n$ (S4D-Lin\footnote{$\iu$ denotes the imaginary unit $\sqrt{-1}$, not to be confused with the index $i$.}) and $a_{\cdot, n} = -\frac{1}{2}$~(S4D-Real).

% \begin{remark}[Equivalent SSM dimension]
%     One can easily write S4 as a diagonal RNN of size $N_1+N_2+\dots+N_d$. In practice, all $N_i$s are equal. In our theory, we will only consider one recurrence with the number of hidden neurons denoted by $N$.
% \end{remark}

\vspace{-3mm}
\paragraph{Mamba.} 
As done in practice, let us consider all channels' hidden dimensions $N_i$ equal to $N$. The Selective SSM~(S6) powering the Mamba architecture~\citep{gu2023mamba} augments S4 with input-controlled matrices:
\begin{align}
    \mathbf{z}_{i;\ell} =  {\bar A_i(x_{\ell})} \cdot  \mathbf{z}_{i; \ell-1} +  { \bar B_i (x_{\ell}) } x_{\ell}^i,
    \label{eq:S6_disc}
\end{align}
where the most crucial component ~(see in-context learning argument by~\citet{gu2023mamba}) is the dependency of the \emph{diagonal} matrix $\bar A_i(x_{\ell}) \in \R^{N \times N}$ at timestamp $\ell$ on \emph{all} input channels at timestamp $\ell$. This makes the operation $\bar A_i(x_{\ell}) \cdot  \mathbf{z}_{i; \ell-1}$ effectively a \textit{gate}.
The dependency of $\bar A_i: \R^d \to \R^{N \times N}$ on the input is achieved efficiently by letting $\Delta_i$ in~\eqref{eq:AB_formula} be computed, at step $\ell$, as $\Delta_i(x_\ell)$ where 
$$\Delta_{i}: \R^d \to \R, \quad \Delta_{i}(x) = \text{softplus}(\alpha_i \cdot x + \beta_i) \in \R$$
%\vspace{-3mm}
% \begin{equation}
%     \Delta_{\ell}^i = \text{softplus}(\alpha_i x_{\ell}^i + \beta_i)
%     \label{eq:S6_delta}
% \end{equation}
where $\cdot$ is the scalar product and 
\footnote{In practice,~\citet{gu2023mamba}use a low-rank projection to construct the $\Delta_i$s. In other words, the matrix $[\alpha_i^\top, \alpha_2^\top,\dots,\alpha_d^\top]$ is low rank. Another distinction that Mamba has compared to S4, is that the $A_i$ matrices controlling the recurrence on channels $i = 1,2,\dots d$, is now shared and can be thus referred to as $A$.} 
$\alpha_i \in \R^d, ~\beta_i\in\R$.  
Further, $\bar B_i: \R^d \to \R^N$ is computed via a, shared between channels, linear map  $B\in\R^{N\times d}$ via $\bar B_i (x_l) =  ( B \cdot x_\ell) ~ \Delta_{i}(x_\ell)  \in\R^{N}$. Finally, each $\bz_{i; \ell} \in\R^{N}$ is projected to $y_{\ell}^i\in\R$ via a matrix $C_i$. This step can also be done by means of output gating~($C_i$ function of the input), but we avoid this complication here as it can be seen as an architectural component outside the recurrence. 

\begin{remark}
    %The output of Mamba is hard to write down compactly means of a single RNN, it is rather the ensemble of distinct RNNs. 
    While each channel evolves separately, the laws of evolution are pointwise determined by all input features: $A_i$ and $B_i$ can be functions of $x_\ell$, and not just of $x_\ell^i$. We will discuss this in Sec.~\ref{subsect:diagonal} after presenting our general results.
\end{remark}

The RG-LRU~\citep{de2024griffin} works similarly, yet processing all input channels at once with a diagonal recurrence. Gateloop~\citep{katsch2023gateloop}, GLA~\citep{yang2023gated}, and HGRN2~\citep{qin2024hgrn2} leverage similar ideas, though they differ in parametrization and gating strategies.

\subsection{Known properties of (non-linear) recurrences}
\label{eq:related_theory_works}
The expressiveness of standard nonlinear RNNs of the form $z_{\ell} = A\sigma(z_{\ell-1}) + B x_{\ell}$, where $\sigma$ is a nonlinearity, has been extensively studied since the seminal work of~\citet{siegelmann1992computational}, with recent contributions such as~\citet{korsky2019computational} and~\citet{hanson2020universal}. In particular,~\citet{hanson2020universal} proved that wide enough non-linear RNNs can approximate up to vanishing precision non-linear time-homogeneous systems of differential equations driven by input paths. The argument used here is based on the celebrated Barron's theorem~\citep{barron1993universal} for approximation of continuous functions with neural networks with one hidden layer. Indeed, note that non-linear RNNs are recurrent perceptrons with one hidden layer, acting both on the state and the input~\citep{tallec2018recurrent}.
Instead, (selective) SSMs such as S4 and Mamba have transition map which is linear in the state -- unlocking parallelization~\citep{smith2023simplified, gu2023mamba}. \\
In the context of linear RNNs and non-selective SSMs, many results~(classic and new) exist that characterize expressivity. \citet{li2022approximation} showed that linear RNNs~(i.e. S4-like recurrences) can approximate arbitrary convolution filters in the width limit. Further,~\citet{hanson2019universal} proved that stacking exponentially~(in the sequence length) many temporal convolution filters, chained together with ReLU activations, leads to approximation of arbitrary non-linear filters. Recent works~\citep{orvieto2023universality,wang2023state} prove the universality of linear recurrences~(one layer) when equipped with a fixed (timestamp independent) point-wise MLP acting across the recurrence output, with intriguing connections to Volterra series~\citep{boyd1985fading}.\\
Mamba~(alongside with gated linear attention variants e.g.~\citet{yang2023gated}) falls neither in the linear RNN nor the nonlinear RNN setting: its recurrence is linear on the hidden state~(can be parallelized) but unlike S4, it is not linear time-invariant as the input controls the recurrence eigenvalues. In this paper, we are interested in this hybrid setting. It is worth noting that some work exploring Mamba's expressiveness has already been performed to study some interesting toy tasks~\citep{jelassi2024repeat} and to understand its limitation using the framework of formal language theory~\citep{merrill2024illusion}. Compared to these works, which outline interesting failure cases, this paper studies a more general class of models, allowing to identify how architectural choices impact expressivity.

\section{SSMs as Linear CDEs}
\label{sec:rsig}
The crucial component that unlocks in-context learning and selectivity in modern SSMs is \emph{the input-dependent state-to-state transition matrix~\citep{gu2023mamba}, gating the hidden state} and thus allowing the system to filter out unnecessary context and remember relevant information indefinitely.

% At the core of most modern SSMs is the following strategy for input processing:
% \vspace{-2mm}
% \begin{center}
%     \textit{\tu  \ Recurrent neural networks, linear in the hidden state but potentially non-linear in the input.}
% \end{center}
% \vspace{-2mm}
% This class includes also many recent SSM-based/inspired models, but crucially does not contain classical RNNs, LSTMs, and GRUs -- for which results are known and crucially rely on the non-linear dependency on the hidden state in the update rule~(Sec.~\ref{eq:related_theory_works}).\\
% As we will shortly see, one can study the structure and features of (selective) SSMs within a unified, convenient, continuous-time framework~(Linear Controlled Differential equations). This allows us to answer the question: ``\textit{What is the most we can achieve from the \tu \ processing strategy?}''

At the core of most modern SSMs is a a recurrence which is \textit{linear in the hidden state, but potentially non-linear in the input}. This class includes many recent SSM-based or inspired models. Crucially, it does not contain classical RNNs, LSTMs, and GRUs -- for which results are known and rely on the non-linear dependence of the hidden state in the update rule~(Sec.~\ref{eq:related_theory_works}).
As we will shortly see, structure and features of (selective) SSMs can be studied within a unified, convenient, continuous-time framework~(Linear Controlled Differential Equations). 
This allows to answer the following question: 
\vspace{-1mm}
\begin{center}
``\textit{What is the most one can achieve using a recurrence which is\\ linear in the hidden state and potentially non-linear in the input?}''
\end{center}

\subsection{Linear CDEs}
\label{sec:cdes}

According to their continuous-time formulation, SSMs process input data sampled from a continuous path $X:[0,1]\to\R^d$, where $d$ is the number of channels. By $X_t^i$ we denote the input channel $i$, evaluated at time $t\in[0,1]$. More formally, we consider input trajectories in the separable Banach space $\bX = C_{1,0}([0,1]; \R^d)$ of absolutely continuous $\R^d$ dimensional paths. In this space, one can write $X = \int_0^t \dot{X}_s ~ ds$ since $X\in L^1([0,1];\R^d)$; and the norm is $\norm{X}_{1;[0,1]} := \int_0^1 |\dot X_s| ~ ds$. 

To model gates~(see Mamba in~\eqref{eq:S6_disc}), we introduce two maps transforming the path $X$, which output trajectories $\omega^{\Xe}, \xi^{\Xe}$ living in potentially higher dimensions: $d_{\omega}$ and $d_\xi$
\begin{align*}
    &\omega:\bX \to C_{1,0}([0,1]; \R^{d_{\omega}}), \quad \xi:\bX \to C_{1,0}([0,1];\R^{d_{\xi}})
\end{align*}
where we used the shorthand notation $\omega^{\Xe} := \omega(X), \xi^{\Xe} =\xi(X)$. Akin the notation for the $i$-the channel of $X$~(i.e. $X^i$), we denote by $\omega^{\Xe, i}$ the $i$-th channel in $\omega^{\Xe}$.\\
The role of the $\xi,\omega$ functions -- which we denote \textit{gating functions} -- will be clear very soon.

\begin{definition}[Linear CDE]
    Fix $N \in \N$~(hidden-state dimension), matrices $A_1,...,A_{d_\omega} \in \R^{N\times N}$ and $B \in \R^{N \times d_{\xi}}$. The hidden state $Z^{\Xe}:=Z(X)$ is computed by the Linear CDE through a linear (in the hidden state) differential equation driven by $\omega,\xi$ -- functions of the input path:
    \mybox[gray!8]{
    \vspace{-2mm}
    \begin{equation}\label{eqn: Linear_CDE}
        dZ^{\Xe}_t = \sum_{i=1}^{d_{\omega}}  A_i Z^{\Xe}_t d\omega^{\Xe,i}_t  + B d\xi^{\Xe}_t, \quad Z^{\Xe}_0 = Z_0 \in \R^N 
    \end{equation}
    \vspace{-3mm}}
\end{definition}

We show that both S4 and Mamba can be written in continuous-time as systems of parallel Linear CDEs. The key ingredient setting the two models apart is the choice of drivers $\omega$ and $\xi$. 
As in the preceding section, we will use the bold tensor notation to stress the parallel nature of these CDEs.

\begin{itemize}[leftmargin=*]
\item \textbf{S4 is a Linear CDE:} It is sufficient to consider the setting $d=1$, $N=N_i$. The S4 model~\citep{gu2021efficiently} in this case is $dZ_{t} = A \cdot Z_{t} ~ dt + B ~ (X_{t} dt)$, where $A\in\R^{N \times N}$ is diagonal. This can be written as a Linear CDE with 
\begin{align}\label{eqn:classical_SSM_gates}
    \omega^{\Xe}_t = t \in \R, \qquad \xi^{\Xe}_t =\int_0^t X_s ds \in \R,
\end{align}
since $d\omega^{\Xe}_t = dt$ and $d \xi^{\Xe}_t = X_t dt$. 
As the reader can promptly notice, here $\omega^\Xe$ is not a function of $X$ -- this will have a crucial role in expressivity~(see Sec.~\ref{sec:intuition}).

\item \textbf{Mamba is a Linear CDE:} Recall from (\ref{eq:S6_disc}) that the recurrence inside Mamba~(i.e. S6), can be written as $\bz_{i; \ell} =  {\bar A_i(x_{\ell})} \cdot \bz_{i; \ell-1}+  { \bar B_i (x_{\ell}) } x^i_{\ell}$ where for generic timestamp features $x\in\R^d$, we have $\bar A_i(x)= \exp(\Delta_i(x) A_i)$, $\bar B_i(x) \approx (B \cdot x) \Delta_i(x)$ and $\Delta_i(x) = \text{softplus}(\alpha_i \cdot x + \beta_i)$. 
Let us introduce a parameter $\delta>0$, and consider $\tilde \alpha_i = \alpha_i/\delta, \tilde \beta_i = \beta_i/\delta$. Let us further approximate the softplus function with a ReLU ($\sigma(x) = \text{ReLU}(x)\simeq\log(1+e^{x})$) to obtain 
  $\Delta_i(x) = \sigma(\delta\tilde\alpha_i \cdot x + \delta\tilde\beta_i) = \sigma(\tilde\alpha_i \cdot x + \tilde\beta_i)\delta$.
Therefore, as $\delta\to 0$, one has
$\bar A_i(x)= \exp(\Delta_i(x) A_i) = \exp(\sigma(\tilde\alpha_i \cdot x + \tilde\beta_i)\delta A_i) \overset{\delta\to 0}{\to} 1+\sigma(\tilde\alpha_i \cdot x + \tilde\beta_i)\delta A_i$, leading to the recurrence
$$ \bz_{i;\ell} =   \bz_{i;\ell-1} + A_i \cdot \bz_{i; \ell-1} ~ \sigma(\tilde\alpha_i \cdot x_\ell + \tilde\beta_i)  \delta + ( B  \cdot x_{\ell}) ~ x^i_{\ell} \sigma(\tilde\alpha_i \cdot x_\ell + \tilde\beta_i) \delta.$$
As we show formally in Appendix~\ref{app:sect:ZOH}, $\delta$ plays the role of the differential $dt$. The equation above is the Euler discretization of the differential equation
\begin{align*}
    \R^N \ni d \bZ_{i; t}^{\Xe} 
    &= A_i \cdot \bZ_{i; t}^{\Xe} ~ \sigma(\tilde\alpha_i \cdot X_t + \tilde\beta_i)dt + (B \cdot X_t) ~ X^i_t \sigma(\tilde\alpha_i \cdot X_t + \tilde\beta_i)dt
\end{align*}
where for each $i$, $\bZ_{i}^{\Xe}: [0,1] \to \R^N$.
These are Linear CDEs with carefully chosen $\omega$ and $\xi$: 
\begin{align*}
    & \boldsymbol{\omega}_{i; t}^{\Xe} = \int_0^t \sigma(\tilde\alpha_i \cdot X_s + \tilde\beta_i) ds \in \R, 
    \quad \boldsymbol{\xi}_{i; t}^{\Xe} = \int_0^t X_s ~ X^i_s \sigma(\tilde\alpha_i \cdot X_s + \tilde\beta_i) ds \in \R^d.
\end{align*}
\end{itemize}
Note that here the $\boldsymbol{\xi}_i$ depend on higher powers of $X$'s dimensions, an intriguing feature connected to input gating~\citep{hochreiter1997long,cho2014learning}. 

\begin{remark}[Are hidden state components recurrently mixed?]
 In the Linear CDE defined by $dZ^{\Xe}_t = [\sum_{j=1}^{d_{\omega}}  A_j d\omega^{\Xe,j}_t] \cdot  Z^{\Xe}_t  + B \cdot d\xi^{\Xe}_t$ channels of the transformed input path $\omega^\Xe$ are mixed in a linear way in the recurrent step and stored in a shared hidden state $Z^\Xe$ . This allows our framework to be more general compared to S4 and Mamba, where each channel of the input is processed individually, and \textit{hidden states are later combined}. We know already from~\citet{merrill2024illusion} that this distinction between hidden state mixing strategies is crucial for expressivity. Our discussion in Sec.~\ref{subsect:diagonal} provides an in-depth look at the effects of separate channel processing, achieved in our framework by choosing the $A_i$s to be diagonal and non-zero only around a channel-specific portion.
 \label{rmk:mix}
\end{remark}

\section{Expressivity of Linear CDEs}
\label{sec:expressive}
Having established the connection between SSMs and Linear CDEs, we now provide an explicit characterization of the \textit{uniform closure} of Linear CDEs, i.e. a description of all the functions from compact subsets of $\mathbb X$ to $\mathbb R$ that can be uniformly approximated at an arbitrary precision by a Linear CDE of the form given in (\ref{eqn: Linear_CDE}). 

\subsection{Characterization of the closure}

The proof of the main theorem we present here~(Thm.~\ref{thm:Main}) is involved and requires tools from Rough Path theory; we provide a full derivation in Appendix~\ref{app:sect:expressivity} with tools reviewed in the self contained Appendix~\ref{app:sect:Wronsky}, as well as an introduction to the Signature approach in Section~\ref{sec:sig_main} and Appendix~\ref{app:sect:Sigs}.
While familiarity with these concepts is not necessary to grasp the main results as stated, it is essential for a thorough comprehension.

In this subsection, Linear CDEs are analyzed in full generality -- i.e., in the dense setting. While for efficiency reasons non-diagonal recurrences are rarely used in SSMs, our results allow to precisely characterize the Linear CDE hypothesis class~(i.e. the \textit{closure}), thus to the answer the question \textit{``What is the most we can achieve from recurrences which are linear in the hidden state?''}. We show that Linear CDEs can model, at fixed time $t$, \textit{arbitrary continuous functions} of the \emph{whole} seen inputs. \\
Of course, understanding the diagonal setting with separate channel processing is of utmost importance in the current research landscape. The tools and results in this subsection allow us to directly discuss this case and compare it to the dense setting -- this is presented in Sec.~\ref{subsect:diagonal}.

In this section, for any path $\gamma \in C_{1,0}([0,1],\R^{d_\gamma})$ 
and any sub-interval $[s,t] \subset [0,1]$, we denote by $\gamma_{[s,t]} \in C_{1,0}([0,1],\R^{d_\gamma})$ the path $\gamma_{[s,t]}(u) = \gamma_{t \wedge u} - \gamma_{s \wedge u}$, where $a\wedge b = \min(a,b)$.

%the following path 
%\begin{equation*}
%     \omega_{[s,t]}(u) = \omega_{t \wedge u} - \omega_{s \wedge u}
%     = \begin{cases}
%         0  & \text{if } u < s \\
%         \omega_u - \omega_s & \text{if } u \in [s,t]\\
%         \omega_t - \omega_s & \text{if } u > t.
%     \end{cases}
% \end{equation*}

%%%% ORIGINAL FORMULATION %%%%%%%%%%%%%%%%%%%%%%%%%%%%%%%%%%%%%%%%%%%%%%%%%%
% \begin{theorem}\label{thm:Main}
%         Consider continuous gating functions $\omega, \xi$ such that $\omega^{\Xe,1}_t = t$ and $\omega^{\Xe,2}_t = t^2$
%         \footnote{ These assumptions are of technical nature and can be relaxed at the cost of talking about \emph{tree-like} equivalence of paths \emph{cf.} \cite{hambly2010uniqueness}. $\omega^{\Xe,2}_t = t^2$ not needed for path-to-point tasks.}. 
%         Let $\mathbb K \subset \bX$ be a compact subset. Then, the uniform closure of the Linear CDE model in (\ref{eqn: Linear_CDE}) over $\mathbb K \times [0,1]$ is given by the following set of functionals
%         \begin{equation}\label{eqn:main_F}
%             \left\{
%                 (X,t) \mapsto \Psi(\omega^{\Xe}_{[0,t]}) + \int_0^t \Phi(\omega^{\Xe}_{[s,t]}) \cdot d\xi^{\Xe}_{s} 
%             \right\}
%         \end{equation}
%          where $\cdot$ is the scalar product on $\R^{d_\xi}$, and with generic continuous functions
%          $$\Psi : C_{1,0}([0,1],\R^{d_\omega}) \to \R, \quad \Phi : C_{1,0}([0,1],\R^{d_\omega}) \to \R^{d_\xi}.$$
% \end{theorem}
%%%%%%%%%%%%%%%%%%%%%%%%%%%%%%%%%%%%%%%%%%%%%%%%%%%%%%%%%%%%%%%%%%%%%%%%%%%%%%

%\antonio
{
\begin{theorem}\label{thm:Main}
        Let $\bX\subset C_{1,0}([0,1],\R^{d})$ be compact and choose continuous gating functions $\omega, \xi$ such that
        \footnote{These assumptions are of technical nature and can be relaxed at the cost of talking about \emph{tree-like} equivalence of paths \emph{cf.} \cite{hambly2010uniqueness}. Intuitively $\omega^{\Xe,1}_t = t$ ensures general $\bX$-uniformity while $\omega^{\Xe,2}_t = t^2$ ensures $[0,1]$-uniformity.} 
        $\omega^{\Xe,1}_t = t$ and $\omega^{\Xe,2}_t = t^2$. Consider the Linear CDE model (\ref{eqn: Linear_CDE}). Let $\Psi$, $\Phi$ be generic continuous functions from paths to real vectors:
         $$\Psi : C_{1,0}([0,1],\R^{d_{\omega}}) \to \R, \quad \Phi : C_{1,0}([0,1],\R^{d_\omega}) \to \R^{d_\xi}.$$
         There exist \textbf{dense} matrices $A_1,...,A_{d_\omega},B$ such that, after a fixed final linear projection $C\in\R^{1\times N}$, the output $Y^\Xe_t=CZ^\Xe_t$ is arbitrarily close, uniformly on $\bX \times [0,1]$, to
        \begin{equation}\label{eqn:main_F}
         \Psi(\omega^{\Xe}_{[0,t]}) + \int_0^t \Phi(\omega^{\Xe}_{[s,t]}) \cdot d\xi^{\Xe}_{s} 
        \end{equation}
         where $\cdot$ is the scalar product.
         Moreover $Y^\Xe_t=CZ^\Xe_t$ is itself of form (\ref{eqn:main_F}).
\end{theorem}
}

% \antonio{\begin{theorem}\label{thm:Main}
%         Let $\bX\subset C_{1,0}([0,1],\R^{d})$ be compact and choose continuous gating functions $\omega, \xi$ such that
%         %\footnote{These assumptions are of technical nature and can be relaxed at the cost of talking about \emph{tree-like} equivalence of paths \emph{cf.} \cite{hambly2010uniqueness}. $\omega^{\Xe,2}_t = t^2$ is not needed for path-to-point tasks.} 
%         $\omega^{\Xe,1}_t = t$ and $\omega^{\Xe,2}_t = t^2$. Consider the Linear CDE model (\ref{eqn: Linear_CDE}). Let $\Psi$, $\Phi$ be generic continuous functions from paths to real vectors:
%          $$\Psi : C_{1,0}([0,1],\R^{d_{\omega}}) \to \R, \quad \Phi : C_{1,0}([0,1],\R^{d_\omega}) \to \R^{d_\xi}.$$
%          There exist \textbf{dense} matrices $A_1,...,A_{d_\omega},B$ such that, after a fixed final linear projection $C:\R^N\to\R$
%         \begin{equation}\label{eqn:main_F}
%          CZ^\Xe_t \simeq \Psi(\omega^{\Xe}_{[0,t]}) + \int_0^t \Phi(\omega^{\Xe}_{[s,t]}) \cdot d\xi^{\Xe}_{s} 
%         \end{equation}
%          where $\cdot$ is the scalar product and by $\simeq$ we mean that the two are arbitrarily close, uniformly on $\bX \times [0,1]$.
% \end{theorem}}

 In Theorem~\ref{thm:Main}, the $A_i$ are \emph{dense} matrices constructed \textit{ad hoc} for the proof. We show next that, with high probability, \textit{random} Glorot-initialized matrices~\citep{LeCuBottOrrMull9812} provide enough expressivity -- one only has to choose the appropriate matrix $C$. This is a similar mechanism to the paradigm advocated in reservoir computing \citep{Lukoeviius2009ReservoirCA, discrete_signatures,  compagnoni2023effectiveness}. The next result, however, is novel and of independent interest in Rough Path Theory.

\begin{theorem}\label{thm:randomised_cde}
    Under the hypothesis of Theorem \ref{thm:Main}, pick 
    $[A_j]_{n,n'} \iid \mathcal{N}(0,\frac{1}{N})$ and $ [Z_0]_{n}, [B]_{n,j} \iid \mathcal{N}(0,1)$. 
    For any functional $F : \mathbb X \times [0,1] \to \mathbb R$ of the form (\ref{eqn:main_F}) and $\epsilon>0$ it holds that
    \begin{align*}
        \lim\limits_{N \to \infty} \mathbb{P}\Big[\Big\{&
        \exists C\in \R^{1 \times N} \text{ such that }  \sup\limits_{(X,t) \in \bX \times [0,1]} |F(X,t) - CZ^{\Xe}_{t}| \leq \epsilon 
        \Big\}\Big] = 1.
    \end{align*}
    \end{theorem}
%This result also can be stated using the RKHS formalism. We present this in Appendix~\ref{app:sect:expressivity}.

\subsection{Intuition on the closure result: role of gates}
\label{sec:intuition}
\vspace{-2mm}
Roughly speaking, our result shows that dense Linear CDEs have drastically superior expressive power compared to dense linear RNNs. This contrast is to be attributed completely to the gate $\omega$.

    \vspace{-2.5mm}
    \paragraph{Warmup -- Linear RNNs.}
    Thm~\ref{thm:Main} can be seen as a generalization of the \emph{Universal Approximation for Linear RNNs} presented by \citet{li2022approximation}[Thm. 7] for generic gates $\omega,\xi$. In fact, their setting is restricted to $\omega^\Xe_t = t$ and $\xi^\Xe_t = \int_0^t X_s ds$. This is also the case for S4, S5 and the LRU -- which are linear RNNs at test time. Then the only information contained in $\omega_{[s,t]}$ is the increment $t-s$ so that family (\ref{eqn:main_F})  reduces to $\left\{
                (X,t) \mapsto \psi(t) + \int_0^t \phi(t - s) \cdot X_s ds
    \right\}$.
    This is, fundamentally, the set of linear filters on the input. As shown in~\citet{gu2023mamba}, such processing is unable to adapt information flow in-context.

    %  \textcolor{orange}{NICO: Revert to old one, this is mostly wrong}
    % The theorem above can be seen as a generalization of the \emph{Universal Approximation for Linear RNNs} presented by \citet{li2022approximation}[Thm. 7]. Indeed, consider the linear RNN $dZ^\Xe_t = A Z^\Xe_t dt + BX_t dt$ with initial value $Z^\Xe_0 = DX_0$. Akin to S4, we have $\omega^\Xe_t = t$ and $\xi^\Xe_t = \int_0^t X_s ds$ ~({\color{orange} element-wise, $d_\omega = d$}). The theorem guarantees we can choose $A,B,C$ such that $Y^\Xe_t = \Psi(\omega^{\Xe}_{[0,t})\cdot X_0 + \int_0^T \Phi(\omega^{\Xe}_{[s,t]}) \cdot d\xi^{\Xe}_{s}$, for any nonlinear functions $\Psi, \Phi$.  The second part is instead input-dependent and has form $\int_0^T \Phi(\omega_{[s,t]}) \cdot X_{s}$. Notice that $\omega_{[s,t]} = t-s$, so the uniform closure of linear RNNs is
    % \[
    %  \left\{
    %             (X,t) \mapsto \int_0^t \Phi(t- s) \cdot X_s ds
    % \right\}
    % \]
    % That is, it is the set of linear filters on the input.
    \vspace{-2.5mm}
    \paragraph{How does a Linear CDE process inputs?} Take without loss in generality\footnote{Let us ignore the terms $t,t^2$ at this stage; their role is purely technical due to possible path equivalences.} $\omega^\Xe_t = X_t$. The first term in the function class $\{\Psi(X_{[0,t]}) + \int_0^t \Phi(X_{[s,t]}) \cdot d\xi^{\Xe}_{s}\}$ is already enough to establish that the output $CZ^\Xe_t$ is a nonlinear function of all previously seen inputs $X_{[0,t]}$ -- $\xi$ could be set to zero. The term $\Psi(X_{[0,t]})$ is however non-trivial only in the $Z_0\ne 0$ case~(\textit{cf.} Appendix~\ref{app:sect:expressivity}): picking $\xi^\Xe_t = 0\in\R^N$ for all $t$ indeed leads to $dZ^{\Xe}_t = \left[\sum_{i=1}^{d_{\omega}}  A_i d\omega^{\Xe,i}_t\right]   Z^{\Xe}_t$, which is evolving only if $Z_0\ne0$. While this setting is interesting and suggests a clear direction for future research, a case more similar to Mamba~(see Sec.~\ref{subsect:diagonal}) is $Z_0=0$ and $\xi^\Xe_t = \int_0^t X_s ds$. Here, we can approximate arbitrarily well outputs of the form
    \begin{equation}
     \left\{
                (X,t) \mapsto  \int_0^t \Phi(X_{[s,t]}) \cdot X_s ds
    \right\}
    \label{eq:lcde_integral}
    \end{equation}
    where $\Phi$ is any continuous function of the input path, restricted to the portion $[s,t]$. This clearly shows that dense Linear CDEs are capable of \textit{context-dependent filtering}: the output is again a linear combination of previously seen inputs, but weights are not predetermined as in Linear RNNs~(S4) -- they are a function of the context. A similar yet less powerful processing is happening in the diagonal setting, which we explore next.

\subsection{The Diagonal Case}
    \label{subsect:diagonal}

    % The matrices $A_1,...,A_{d_\omega}\in\R^{N\times N}$ defining the Linear CDE (\ref{eqn: Linear_CDE}) are generally dense matrices. However, at the core of the efficient GPU-efficient computation strategy of S4 and Mamba is the need for diagonal recurrences~(see speed comparisons in~\citep{orvieto2023resurrecting}). Indeed, dense matrix multiplication inevitably takes the computational cost of the forward pass to $O(N^2L)$ complexity, where $L$ is the sequence length. Instead, using diagonal recurrences results in a $\mathcal{O}(LN)$ complexity.
    %Motivated by the efficiency requirements in S4 and Mamba we discuss the question of whether this choice of diagonal restricts the family of functionals that one can learn. 

    The choice of diagonal weights considerably restricts the family of learnable functionals.
    Intuitively the diagonal choice corresponds to running $N$ independent $1$-dimensional systems, this absence of mixing between the different hidden dimensions is the main culprit for the loss of expressivity. 
    The full power can however be recovered by \emph{chaining} the diagonal schemes (\emph{cf.} Prop. \ref{prop:stack}).
    
   % If $A_i := \diag(v_i)$, due to commutativity of diagonal matrices, one has that for any permutation $\sigma(I)$ of a word $I$ the equality $A_I = A_{\sigma(I)}$ holds. This means that a linear functional on $Z_t^{\Xe}$ such as (\ref{eqn:linear_on_sig}) can fundamentally only see the symmetric part of $\Sig(\omega^{\Xe})_{s,t}$, which is just the tensor exponential 
   % \begin{equation}
   %     e^{\otimes \int_s^t d\omega^{\Xee}_r} := \left(
   %      \prod_{i=1}^{d_{\omega}} 
   %      \frac{ (\omega^{\Xe, i}_t - \omega^{\Xe, i}_s)^{k_i}}{k_i!}
   %     \right)_{\boldsymbol{k} \in \N^{d_{\omega}}},
   % \end{equation} and as such does not capture higher order channel combinations (\emph{cf.} See Sect. \ref{sec:empirical} for empirical validation).
   % In this setting the explicit solution a can be written as
   %  \begin{equation}\label{eqn:cde_diag}
   %      Z_t^{\Xe} = e^{\diag( V\omega^{\Xee}_t)}Z_0 + \int_0^t e^{\diag( V (\omega^{\Xee}_t - \omega^{\Xee}_s) )  } B d\xi^{\Xe}_s
   %  \end{equation}
   %  where $V := \left[ v_1 | \cdots | v_{d_{\omega}} \right] \in \R^{N \times d_{\omega}}$.
    
    % If $A_i := \diag(\mathbf{a}_i)$ for vectors $\mathbf{a}_i \in \R^N$ the explicit solution a can be written as
    % \begin{equation}\label{eqn:cde_diag}
    %     Z_t^{\Xe} = e^{\diag( \mathbf{A} \omega^{\Xee}_t)}Z_0 + \int_0^t e^{\diag( \mathbf{A} (\omega^{\Xee}_t - \omega^{\Xee}_s) )  } B d\xi^{\Xe}_s
    % \end{equation}
    % where $\mathbf{A} := \left[ \mathbf{a}_1 | \cdots | \mathbf{a}_{d_{\omega}} \right] \in \R^{N \times d_{\omega}}$.
    % We then have the following result:
    \begin{theorem}[Diagonal Case]\label{thm:diagonal_expr}
         If the matrices $A_1,...,A_{d_\omega}$ are constrained to be diagonal, {the requirements $\omega^{\Xe,1}_t = t$, $\omega^{\Xe,2}_t = t^2$ can be dropped and} the closure reduces to
        \begin{equation}\label{eqn:main_poly_family}
        \left\{
                (X,t) \mapsto \psi(\omega^{\Xe}_t) +  \int_0^t \phi(\omega^{\Xe}_t - \omega^{\Xe}_s) \cdot d\xi^{\Xe}_{s}
        \right\}
        \end{equation}
        for continuous $\psi: \R^{d_{\omega}} \to \R$ and $\phi : \R^{d_{\omega}} \to \R^{d_{\xi}}$.
\end{theorem}
Compared to the dense setting, the effect of diagonality is pretty clear. Let us again assume $\omega^\Xe_t=X_t$ and $\xi^\Xe_t = \int_0^t X_s ~ ds$ for simplicity. While $\int_0^t \Phi(X_{[s,t]}) \cdot X_s ~ds$ unlocks filtering based on the entire trajectory $X_{[s,t]}$, the diagonal case term $\int_0^t \phi(X_t - X_s) \cdot X_{s} ~ ds$ indicates that filtering coefficients can only be chosen by comparing two elements of the~(potentially transformed through $\omega$) input sequence. While this precise and tight result reveals a pitfall of diagonal recurrence, it also brings about an interesting connection to attention~\citep{vaswani2017attention}, where only a finite number of tokens are compared at each layer. While a smart choice of gating functions $\omega,\xi$ can improve on the learned nonlinear filtering strategy~(e.g. based on filtered input versions as in Mamba), our theory reveals the fundamental processing discrepancy compared to the dense setting, a property which was also explored in recent literature~\citep{merrill2024illusion} using different tools.

As already noted, on top of diagonality, recent SSMs also mix inputs as linear combinations of independently run channel-dependent systems. This slightly modifies the function class as:
\begin{corollary}[Mamba Case]\label{thm:diagonal_expr_mamba}
         In the Mamba setting, the closure reduces to
        \begin{equation}\label{eqn:main_poly_family_mamba}
        \left\{
                (X,t) \mapsto \sum_{i=1}^{d_\omega} \psi_i(\omega^{\Xe,i}_t) +  \sum_{i=1}^{d_\omega} \int_0^t \phi_i(\omega^{\Xe,i}_t - \omega^{\Xe,i}_s) ~ d\xi^{\Xe,i}_{s}
        \right\}
        \end{equation}
        for continuous $\psi_i: \R \to \R$ and $\phi_i : \R\to \R$.
\end{corollary}

    \subsection{Chaining Diagonal CDEs}
    \label{susect:chaining}
    
    Fortunately it is possible to re-gain expressivity without sacrificing the computational advantages of diagonal schemes through \emph{chaining}.
    This means driving a new Linear CDE by the solution of a previous Linear CDE, and repeating this procedure $K$ times (\emph{cf.} Appendix \ref{app:sect:chaining}).
    %With this procedure, one can recover signature entries up to depth $K$. %with complexity $\mathcal{O}(KLN)$.

    \begin{proposition}\label{prop:stack}
   Assume a compact $\bX \subset C_{1,0}([0,1];\R^d)$. 
   For any functional $F : \mathbb X \times [0,1] \to \mathbb R$ of the form (\ref{eqn:main_F}) and $\epsilon>0$ there exists a sequence of linear maps $W_k \in \R^{M_k \times N_k}$, \emph{diagonal} weights $A^{(k)}_i$ and $B^{(k)}$ for the following family of \emph{chained} diagonal Linear CDEs
   \begin{equation}
       Z^{0, \Xe}_t \equiv 0, \quad
       dZ_t^{k+1,\Xe} = \sum_{i=1}^{d+M_k} A^{(k+1)}_i Z^{k+1,\Xe}_t d\begin{bmatrix}
           W_kZ^{k, \Xe}
           \\ {X}
       \end{bmatrix}^i_t 
       + B^{(k+1)}dX_t \in \R^{N_{k+1}}, 
   \end{equation}
    % \begin{gather*}
    %     dZ_t^{1,\Xe} = \sum_{i=1}^{d} A^{(1)}_i Z^{1,\Xe}_t dX^i_t + B^{(1)}d{{X}}_t \in \R^{N_{1}},
    %     \\
    %    dZ_t^{k+1,\Xe} = \sum_{i=1}^{d+M_k} A^{(k+1)}_i Z^{k+1,\Xe}_t d\begin{bmatrix}
    %        W_kZ^{k, \Xe}
    %        \\ {X}
    %    \end{bmatrix}^i_t 
    %    + B^{(k+1)}dX_t \in \R^{N_{k+1}}, 
    % \end{gather*}
   such that \emph{eventually}, as $k\to\infty$, there exists $C_k \in \R^{1 \times N_{k}}$ with $\sup\limits_{(X,t) \in \bX \times [0,1]} |F(X,t) - C_k Z_t^{k, \Xe} | \leq \epsilon$.
\end{proposition}

    %\begin{proposition}\label{prop:stack}
    %    Assume a compact input set $\bX \subset C_{1,0}([0,1];\R^d)$. 
    %    For any $I \in \mathbb{W}_{d}$ with $|I| \geq 2$ and $\epsilon > 0$ there is a sequence of linear maps $W_k \in \R^{N_k \times 1}$ and weights for the following family of \emph{chained} diagonal Linear CDEs
    %     \begin{gather*}
    %         dZ_t^{1,\Xe} = \sum_{i=1}^{d} A^{(1)}_i Z^{1,\Xe}_t dX^i_t + B^{(1)}d{{X}}_t,
    %         \\
    %        dZ_t^{k+1,\Xe} = \sum_{i=1}^{d+1} A^{(k+1)}_i Z^{k+1,\Xe}_t d\begin{bmatrix}
    %            W_kZ^{k, \Xe}
    %            \\ {X}
    %        \end{bmatrix}^i_t 
    %        + B^{(k+1)}dX_t, 
    %     \end{gather*}
    %     with $Z^{k} \in \R^{N_k}$, such that for some $v \in \R^{N_{|I|-1}}$ one has
    %     \begin{equation}
    %         \sup\limits_{(X,t) \in \bX \times [0,1]} |Sig(X)^{(I)}_{0,t} - \sprod{v}{Z_t^{|I|-1, \Xe}} | \leq \epsilon
    %     \end{equation}
    % \end{proposition}
    
    % The intuition behind this result is the following: with a single diagonal layer it's possible
    % \footnote{This is not straightforward since the time in the integral part of (\ref{eqn:main_poly_family}) is going "backwards"!}
    % to recover the terms
    % \[
    % \int_0^t (\omega^{\Xe,i}_s -  \omega^{\Xe,i}_0) d\xi^{\Xe,j}_s
    % \]
    % which are just length $2$ terms from $\Sig((\omega^{\Xe},\xi^{\Xe}))_{0,t}$. 
    % If we pass these terms as $\omega$ in another layer then we can obtain a length $3$ term from $\Sig((\omega^{\Xe},\xi^{\Xe}))_{0,t}$, and so on.

    Intuitively, with chaining one recovers the mixing between the input dimensions which was so important for the expressiveness of \emph{dense} Linear CDEs.
    This does not happen immediately but crucially depends on the length of the chain, the result in fact tells us that the recovery holds for \emph{long enough} chains (and big enough hidden states). 
    In Appendix \ref{app:subs:relu_chaining} we argue that the same conclusions hold also in the Mamba setting with non-linear gates.

    % \begin{remark}
    %     The structure of the chaining in Prop. \ref{prop:stack} seems quite restrictive, in particular it's not clear to which extent gates which are non-linear in the input $X$ allow for such a construction.
    %     In the Appendix we address this issue in the simple but important case $d\omega^{\Xe}_t = \text{ReLU}(WX_t + B)dt$, where we show that no information is lost and numerical stability can be enforced.
    % \end{remark}

\subsection{Proof Idea - Signature expansion}
    \label{sec:sig_main}

    In this section, we introduce the primary tools, objects, and techniques from Rough Path theory used in our proofs. Given their technical nature, we have chosen to present the main results of this paper without explicit reference to them, allowing readers to understand the work without needing specialized knowledge. For those interested in the finer details, here we provide a brief overview and refer to Appendix \ref{app:sect:Sigs} for additional details and references.
    
    %\antonio{better motivation needed: why are we introducing those now if we already have a result? Solution: how to choose $A$}
    To study the expressivity of Linear CDEs it is convenient to introduce the so-called \emph{signature transform}~\citep{lyons2007differential, kidger2019deep, fermanian2023new}, a classical path-transform from stochastic analysis. The main reason for doing so is that, as a simple consequence of the Stone-Weirestrass theorem, linear functionals on the signature provide the essential building blocks (analogous to monomials on Euclidean spaces) to approximate  continuous functions on path space. 
    
    Consider a path $\gamma \in C_{1,0}([0,1];\R^{d_\gamma})$ and define as $\mathbb{W}_{d_{\gamma}}$ the set of \emph{words} (\emph{i.e.} ordered sequences) in $\{1,\dots, d_{\gamma}\}$
    \footnote{So when we write $I \in \mathbb{W}_{d_{\gamma}}$ we mean $I = i_1i_2\cdots i_M$ for some integer $M \geq 0$ and letters $i_m \in \{1,\dots, d_{\gamma}\}$, then  $Ii$ will be the word $i_1i_2\cdots i_M i$ for $i \in \{1,\dots, d_{\gamma}\}$.}.
    The signature transform is the following infinite collection of scalar iterated integrals
    $$\text{Sig}(\gamma)_{s,t} := \left(\Sig(\gamma)_{s,t}^{(I)} \right)_{I \in \mathbb W_{d_\gamma}},
    \quad
    \Sig(\gamma)_{s,t}^{(I)} := \int_{s<u_{1}<...<u_{n}<t} \dot\gamma_{u_{1}}^{(i_1)}  ...  \dot\gamma_{u_{n}}^{(i_n)}du_1...du_n.
    $$ 
    
    A classical result from rough path theory states that a Linear CDE can be expanded explicitly as an (infinite) linear combination of terms in the signature of the driving path.
    \begin{proposition}
        For any choice of matrices $A_1,...,A_{d_\omega}$ and $B$, the unique solution Linear CDE (\ref{eqn: Linear_CDE})
        % \begin{equation*}
        %     dZ^{\Xe}_t = \sum_{i=1}^{d_{\omega}}  A_i Z^{\Xe}_t d\omega^{\Xe, i}_t  + B d\xi^{\Xe}_t, \quad Z^{\Xe}_0 = Z_0 \in \R^N 
        % \end{equation*} 
        is, using the notation $A_{I} := A_{i_n}...A_{i_1}$, given by 
        \begin{equation} \label{eqn:linear_on_sig}
        \begin{aligned}
            Z^{\Xe}_t = & \sum_{I \in \mathbb W_{d_\omega}} A_I Z_0 ~ \Sig(\omega^{\Xe})_{0,t}^{(I)} 
             + 
            \sum_{i=1}^{d_{\xi}} \sum_{I \in \mathbb W_{d_\omega}} A_I B_{i} ~ \int_0^t  \Sig(\omega^{\Xe})_{s,t}^{(I)} d\xi^{\Xe, i}_s \in \R^N.
        \end{aligned} \end{equation}
    \end{proposition}

    Notice that the previous result \emph{does not} rely on any assumptions on the nature of $Z_0$, $A_i$, and $B$; 
    for any such choice the result is a \emph{time-independent linear map} on a \emph{feature vector} $T(X)_{0,t} := ( \Sig(\omega^{\Xe})_{0,t}^{(I)}, \int_0^t  \Sig(\omega^{\Xe})_{s,t}^{(I)} d\xi^{\Xe, i}_s )_{(I,i)}$, where the index $(I,i)$ runs over $\mathbb W_{d_\omega} \times \{1,...,d_\xi\}$.\\
    The main takeaway is that any linear projection $Y^\Xe_t := CZ^\Xe_t$ is written as an (infinite) linear combination of the terms in $T(X)_{0,t}$. 
    This means that the expressive power of such schemes is almost \emph{completely determined} by the gate $\omega$, which is the only path of which high order information is taken into consideration through its full \emph{signature}. It is evident then that the classical choice of input-independent $\omega$~(i.e. $\omega^{\Xe}_t = t$) then \emph{precludes} the use of higher order statistics of $X$. 

    % \paragraph{Example: Signature expansion of S4.}

    %     In case $\omega^{\Xe}_t = t$ and $\xi^{\Xe}_t = \int_0^t X_s ds$, like in S4, one has $d_{\omega} = 1$. Thus, $\mathbb W_{d_\omega} = \mathbb N$ and the signature reduces to
    % $ \Sig(\omega^{\Xe})_{s,t} := \left(\Sig(\omega^{\Xe})_{s,t}^{(k)}\right)_{k \in \mathbb N}$, $\Sig(\omega^{\Xe})_{s,t}^{(k)}=\frac{(t-s)^k}{k!}.$
    %     This in turns implies the following expressions for $Z^X$  and $T$
    %     \begin{equation*}
    %          Z_t^{\Xe} = \sum_{i=1}^{d_{\xi}} \int_0^t e^{A(t-s)}B_i X^i_s ds,
    %          \quad
    %          T(X)_{0,t} = \left( \int_0^t  \frac{(t-s)^k}{k!} X^i_s ds \right)_{(k,i)}
    %     \end{equation*}
    %     where the index $(k,i)$ runs over $\mathbb N \times \{1,...,d_\xi\}$. As a result, linear maps on $T$ are of the type 
    %     \[
    %     \int_0^t p(t-s) \cdot X_s ds
    %     \]
    %     for polynomials $p^1, \dots, p^d$, which reconciles our framework with the results of \citet{li2022approximation}.

\section{Path-to-Path Learning}
 In Section~\ref{sec:expressive}, we showed that Linear CDEs~(and chained Mamba) can model, \textit{at fixed time $t$}, arbitrary continuous functions of the whole seen inputs.
Assume wanting to learn a functional of type $(X,t) \mapsto \Psi_t(\omega^{\Xe}_{[0,t]})$ where the map $\Psi_t$ is changing with time as well, this is an example of a general continuous path-to-path model. Note in particular how this is a more general family than the one of Theorem \ref{thm:Main}, where the maps $\Psi$ and $\Phi$ are fixed once and for all. Here, we discuss how passing the hidden state through a multi-layer perceptron~(MLP), instead of just a linear readout~(matrix $C$), allows us to efficiently approximate this richer class, interpolating between the $\Psi_t$s.

% Up to now we dealt with a class of path-to-path maps essentially built by functionals of type $(X,t) \mapsto \sprod{\alpha}{T(X)_{0,t}}$, for a \emph{fixed} linear map $\alpha$.
% This constraint enters the formulation of Theorem \ref{thm:Main} trough the definition of the restriction $\gamma|_{[s,t]}$ since $Sig(\gamma)_{s,t} = Sig(\gamma|_{[s,t]})_{0,1}$.
% Crucial for modern machine learning are however general continuous path-to-path models of type $C_{0,1}([0,1];\R^{d}) \to C_{0,1}([0,1];\R)$, which are approximable by maps $(X,t) \mapsto \sprod{\alpha_t}{T(X)_{0,t}}$, for \emph{continuous}  $t \mapsto \alpha_t$.

As shown in \cite{orvieto2023universality}, classical SSMs followed by an MLP are universal on sequences.
Given their nature of input-independent convolutions, their construction defers all reasoning to the MLP acting on the output: in S4, the SSM is simply providing an input compression -- with no added reasoning. In our setting, we instead characterized the processing power of input-controlled (dense or diagonal) SSMs precisely, showing how it greatly surpasses linear filtering.  For this reason, \textit{the computational burden for an MLP action on a general Linear CDE would be greatly diminished} and its actual function reduced to an interpolation of the maps $\Psi_t$. We defer the proof to Appendix~\ref{app:sect:path-path}. %This method is robust to different input discretization schemes and does not offload all the complexity to the network MLP follows the SSM \emph{cf.} Appendix \ref{app:sect:path-path}.

\begin{proposition}\label{prop:pathpath}
    Fix a compact set $\bK \subseteq \bX$ and continuous $\omega, \xi$ with $\omega^{\Xe,1}_t \equiv t$. 
    Then for all $\epsilon > 0$ and all \emph{causal}
    \footnote{A \emph{causal} map is one which does not ``look in the future'' \emph{cf.} Appendix \ref{app:sect:path-path}.}
    continuous mapping
    $G : C_{1,0}([0,1], \R^{d_{\omega}}) \times [0,1] \to \R$ there exist an integer $N \geq 0$, some MLP $F: \R^{N} \to \R$, and parameters $Z_0 \in \R^N , A_i \in \R^{N \times N}, B \in \R^{N \times d_{\xi}}$ such that 
    \begin{equation}
        \sup\limits_{(X,t) \in \bK \times [0,1]}  |F(Z^{\Xe}_t) - G(\omega^{\Xe},t)| < \epsilon.
    \end{equation}
\end{proposition}

%Actually, $F$ in the above result does not have to be a neural network, it suffices for it to be able to interpolate linear maps in a time-consistent manner 

\section{Empirical Validation}
\label{sec:empirical}

Code to reproduce all of our experiments can be found at:
\begin{center}
    \url{https://github.com/Benjamin-Walker/selective-ssms-and-linear-cdes}
\end{center}

\paragraph{Datasets.} 
The first task is based on a dataset from \cite{walker2024log} where the aim is to predict terms in the anti-symmetric part of the input path’s signature. The dataset's objective aligns with  the proofs of Theorem \ref{thm:Main} and \ref{thm:randomised_cde}, which characterise the closure using the path's signature.
We created two datasets with dimensions $2$ and $3$ respectively. The increment in each channel at each step is an integer-rounded sample from a standard Normal distribution. The 2D dataset's target is an area integral $\int_0^1\int_0^vd X^1_u dX^2_v$, and the 3D dataset's target is a volume integral $\int_0^1\int_0^w\int_0^v dX^1_u dX^2_v dX^3_w$.

The second task is the $A_5$ benchmark from \cite{merrill2024illusion}. It tests models on state-tracking, a crucial ability for tasks involving permutation composition, such as chess. The dataset comprises sequences from the group of even permutations on five elements, $A_5$, where the target is the cumulative composition of all preceding permutations. Datasets vary by sequence length, ranging from $3$ to $20$.

\paragraph{Models.} On the anti-symmetric signature task, we considered seven models: (i-ii) S4 or Mamba recurrence with linear readout, (iii-iv) two stacked S4 or Mamba recurrences with a linear mixing layer in-between and a linear readout, (v-vi) two stacked S4 or Mamba recurrences with a linear mixing layer + GeLU in-between, and a linear readout, (vii) a linear CDE with gates $\omega^{\Xe}_t = \xi^{\Xe}_t = (t,X_t)$ and a linear readout. All state space models have \textbf{trainable} matrices in their recurrences, whereas the linear CDE is using \textbf{fixed} matrices. 
All models use a hidden dimension of $256$, with the state space models using a state dimension of $256$. 
The state space models are trained using gradient descent with a batch size of $32$ and Adam with a learning rate of $10^{-4}$. The output from the linear CDE's recurrence is obtained using the Tsit5 adaptive ODE solver, with an absolute and relative tolerance of $10^{-2}$. The linear CDEs linear readout is optimised via ordinary least squares.

On the $A_5$ benchmark, we consider five models: a linear CDE, a RNN, a transformer, and S4 and Mamba recurrences. All models use an embedding layer followed by a series of blocks that combine the sequence-to-sequence model, linear mixing with a non-linear activation function, layer normalization, and a residual connection. Furthermore, all models have \textbf{trainable} matrices in their recurrences and are trained using batch gradient descent with a batch size of $32$ and AdamW with a weight decay of $0.01$.
The RNN, transformer, S4, and Mamba use a hidden dimension of $1024$, with the state space models using a state dimension of $64$ and the transformer using $64$ heads. 
Due to memory constraints, the linear CDE uses a hidden dimension of $256$.
Note that the IDS4 recurrence introduced by \cite{merrill2024illusion} corresponds directly to a linear CDE with dense transition matrices, hence we did not include the model as a baseline.

\paragraph{Results.}

%All experiments were run on an NVIDIA RTX 4090 and required less than $2$GB of GPU memory.

\begin{wrapfigure}[18]{r}{0.5\textwidth}
\vspace{-5mm}

    \includegraphics[width=\linewidth]{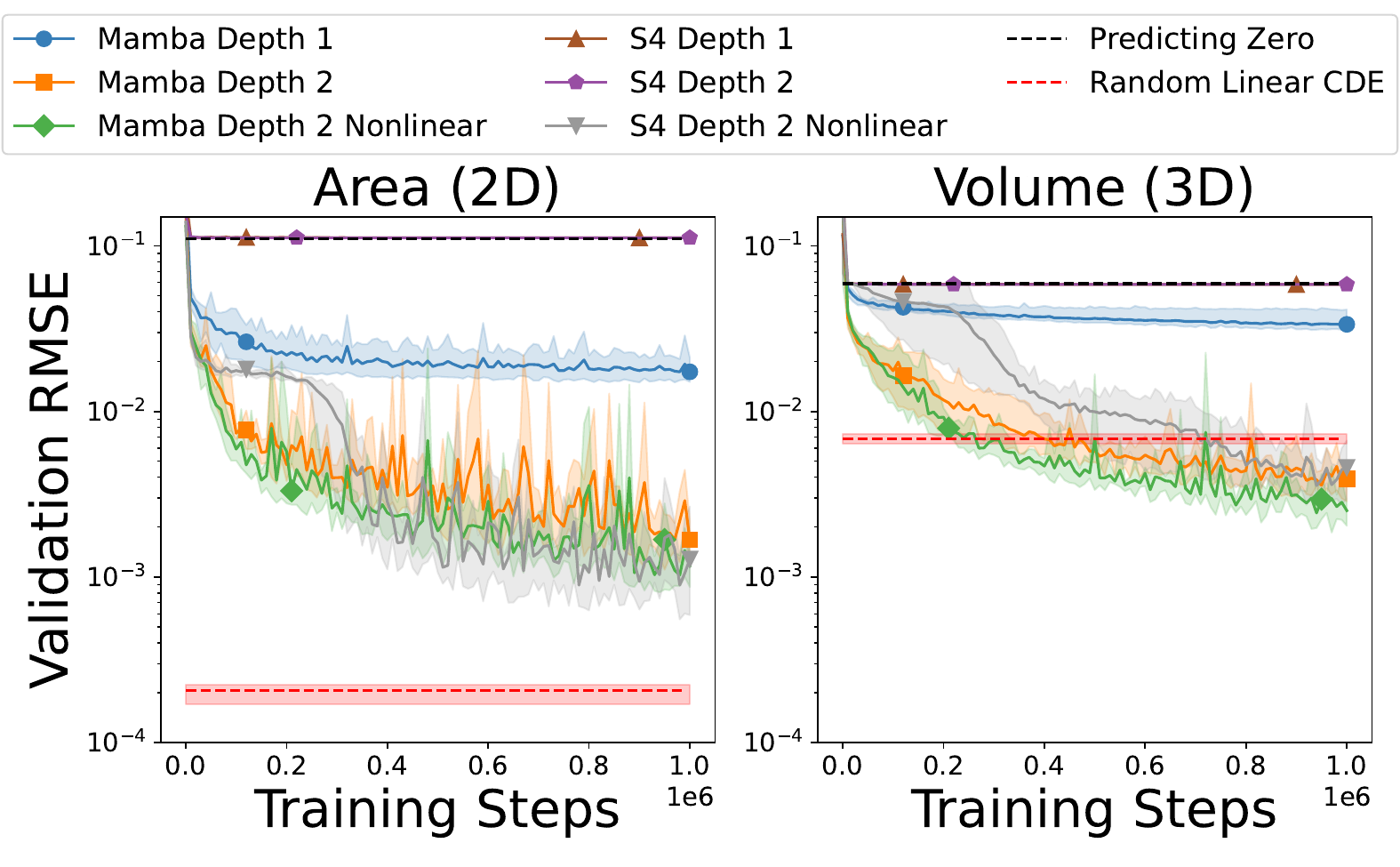}
    \caption{\small Comparison of the Linear CDE, Mamba, and S5 on the anti-symmetric signature prediction tasks. For each model, we plotted the mean and range of the validation accuracy over 5 independent runs.
    }
    \label{fig:exp_toy}
\end{wrapfigure}

% \begin{figure}[!ht]
%     \centering
%     \includegraphics[width=0.6\linewidth]{}
%     \caption{\small Comparison of the Linear CDE, Mamba, and S5 on the anti-symmetric signature prediction tasks. For each model, we plotted the mean and range of the validation accuracy over 5 independent runs.
%     }
%     \label{fig:exp_toy}
% \end{figure}

%\vspace*{-10pt}
% Fig.~\ref{fig:exp_toy} is a plot of the results on the anti-symmetric signature prediction task.
% These results empirically demonstrate a number of the theoretical results presented in this paper. 

Results on the anti-symmetric signature prediction task (Fig.~\ref{fig:exp_toy}) empirically demonstrate a number of the theoretical results presented in this paper. 
Firstly, as discussed in Sec.~\ref{sec:intuition}, recurrences which are linear in the input, such as S4, require a non-linearity in-between the layers to perform well. 
Furthermore, as stated in Thm.~\ref{thm:diagonal_expr} and Prop.~\ref{prop:stack}, even if the recurrence is non-linear in the input, such as Mamba, the expressivity of models with diagonal matrices is improved by stacking. 
Additionally, the inclusion of the non-linearity in-between the Mamba layers does not improve performance, as the recurrences themselves are expressive enough. 
Finally, as stated in Thm.~\ref{thm:randomised_cde}, dense matrices can achieve strong expressivity with random initialization, no stacking, and only a trainable linear readout.

\begin{wrapfigure}[18]{r}{0.48\textwidth}
\vspace{-12mm}
    \includegraphics[width=\linewidth]{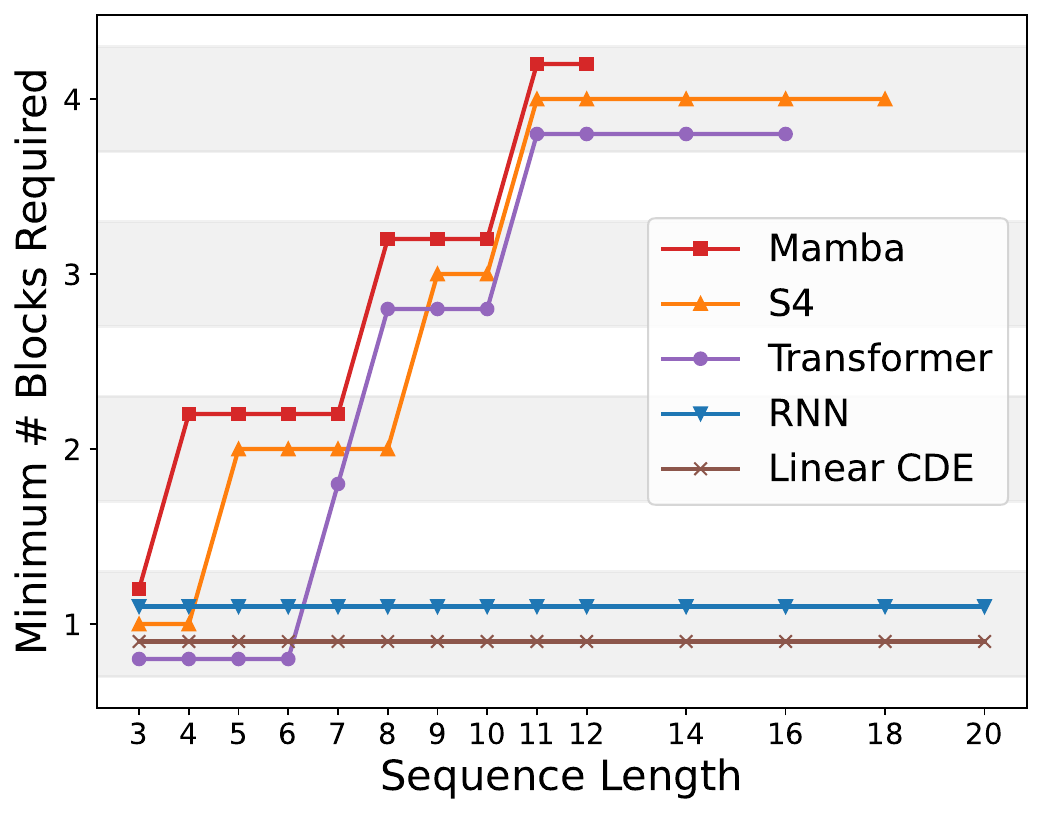}
    \caption{\small For each sequence length, the plot shows the minimum number of blocks required to achieve at least $90\%$ validation accuracy, with each grey band corresponding to a number of blocks. Missing points mean the model did not achieve at least $90\%$ validation accuracy with $4$ blocks or less.
    }
    \label{fig:exp_a5}
\end{wrapfigure}

% \begin{figure}[!ht]
%     \centering
%     \includegraphics[width=0.6\linewidth]{img/A5_plot_shaded_regions.pdf}
%     \caption{\small Comparison of the Transformer, Mamba, S4, RNN, and Linear CDE on the $A_5$ benchmark. For each sequence length, the plot shows the minimum number of blocks required to achieve at least $90\%$ validation accuracy, with each grey band corresponding to a number of blocks. Missing points mean the model did not achieve at least $90\%$ validation accuracy with $4$ blocks or less.
%     }
%     \label{fig:exp_a5}
% \end{figure}

Fig.~\ref{fig:exp_a5} is a plot of the results on the $A_5$ benchmark. The figure shows that the number of blocks S4, Mamba, and the transformer require to achieve greater than $90\%$ validation accuracy grows with the sequence length. In fact, given our model architecture, none of these models could achieve greater than $90\%$ validation accuracy for sequences of length twenty\footnote{\cite{merrill2024illusion} show that with tailored architectures, S4, Mamba, and a Transformer can achieve over $90\%$ validation accuracy on length-$20$ sequences using $4$ blocks. However, in our experiments we wished to keep the architecture consistent between models, only changing the sequence-to-sequence model.}. On the other hand, the RNN and Linear CDE are able to achieve greater than $90\%$ validation accuracy for all lengths considered using only one block. These empirical results further validate Thm.~\ref{thm:diagonal_expr} and Prop.~\ref{prop:stack}: there exists a gap in expressivity between diagonal and dense transition matrices, and stacking is required to recover the expressivity. Furthermore, they provide empirical evidence that even for simple state-tracking problems, the number of blocks required can grow quickly with sequence length.

%ahah. btw: we should have a line on limitations in the conclusion

\section{Conclusions}

This paper explores Linear CDEs, a model family that extends both classical and modern SSM architectures, including recent gated RNNs. Using Rough Paths theory, we have characterized their uniform closure,  generalizing the results of \cite{li2022approximation} for linear RNNs. We precisely identified the advantages of input-controlled transition dynamics, which allow you to capture high-order statistics of the input as opposed to just the linear ones extracted by convolutions. While dense models reach full expressiveness, imposing diagonality~(e.g. Mamba) weakens the model capabilities. This is in direct contrast to S4, where dense and diagonal settings share the same closure. 
Our analysis lays the theoretical foundation for analyzing the expressive power of future SSM variants and hints at non-diagonality as a potential source of improvements. We believe that light and efficient channel mixing in the recurrent block might already unlock the modeling of higher-order statistics.

\textbf{Limitations.} The current framework examines expressivity from a continuous-time perspective with real-valued inputs. The RNN literature suggests that finite precision often plays a significant role in practice, making it an interesting direction for future exploration. 
Additionally, although dense transition matrices are shown to be theoretically and empirically more expressive than diagonal ones, their increased computational cost makes them impractical for large-scale models.
%As a theoretical study focused on expressiveness, we do not propose any new architecture that fully leverages our insights, but we view this as a promising avenue for further research.

% We
% A remedy to this loss is presented in the \emph{chaining} of these models.

% Finally we show how the substitution of the linear readout with an MLP allows to approximate general path-to-path functions without, in contrast to the S4 case, offloading all the complexity to the neural network.

\newpage

\section*{Acknowledgments}

Antonio Orvieto acknowledges the financial support of the Hector Foundation. Terry Lyons was funded in part by the EPSRC [EP/S026347/1], in part by The Alan Turing Institute [EP/N510129/1], in part by the Defence and Security Programme, in part by the Office for National Statistics, and in part by the Hong Kong Innovation and Technology Commission (InnoHK Project CIMDA). Benjamin Walker was funded by the Hong Kong Innovation and Technology Commission (InnoHK Project CIMDA). The authors would like to acknowledge the use of the University of Oxford Advanced Research Computing (ARC) facility in carrying out this work. http://dx.doi.org/10.5281/zenodo.22558

%\newpage
\bibliographystyle{unsrtnat}
\bibliography{main}
%\bibliographystyle{neurips_2024}

%%%%%%%%%%%%%%%%%%%%%%%%%%%%%%%%%%%%%%%%%%%%%%%%%%%%%%%%%%%%
\newpage
\appendix

\section{Introduction to Signatures}\label{app:sect:Sigs}

This initial section of the Appendix is devoted to a brief introduction to the topic of Signature Transform.
For a more in-depth account we refer the interested reader to \cite{cass2024lecture}.

\subsection{Intuition - Controlled Differential Equations}

In the simplest setting of smooth paths a CDE is a differential equation of form
\[
    \frac{dZ_t}{dt} = F\Big(\frac{dX_t}{dt},Z_t\Big), \hspace*{10pt} Z_0  \in \R^n
\]
where $X: [0,1] \to \R^d$ is a known smooth path to which we refer as \emph{control}, $Z_0$ the known initial condition
and $Z: [0,1] \to \R^n$ the unknown solution.

The natural generalization is the following: assume to have two spaces $\R^{d_x}$ and $\R^{d_z}$, $X \in C_{1}([0,1];\R^{d_x})$, 
$Z  \in C_{1}([0,1];\R^{d_z})$,  $F: \R^{d_z} \to \mathcal{L}(\R^{d_x},\R^{d_z})$ and $Z_0 \in \R^{d_z}$. 
We say that $(Z,X,F,Z_0)$ satisfy the CDE 
\[
    dZ_t = F(Z_t)dX_t,\quad Z_0 \in \R^{d_z}
\]
whenever
\[
    Z_t = Z_0 + \int_0^t F(Z_s)dX_s
\]

The theory of \emph{Rough Paths} has its origins in the study of such types of differential equations and provides
a theoretical framework to define and work in rough settings i.e. when $X$ is not kust BV but even $\alpha$-Hölder for $\alpha \in (0,1)$
\emph{cf.} \cite{friz_victoir_2010}.

Assume thus to have the CDE 
$$dZ_t = \sum_{i=1}^{d_x} V_i(Z_t)dX^i_t, \quad Z_0 \in \R^{d_z}$$
for sufficiently regular vector fields $V_i$ and $X \in C_{1}([0,1];\R^{d_x})$.
Given a smooth $f: \R^n \to \R$, by the change of variable formula (\emph{i.e.} fundamental theorem of calculus) we have 
$$f(Z_t) = f(Z_0) + \sum_{i=1}^{d_x}  \int_0^t V_if(Z_s)dX^i_s$$
where $V_if(z) := df_y[V_i(z)]$. 
Iterating this procedure on the $V_if$s, \emph{i.e.} substituting in the previous equation the analogously obtained equality 
\[
    V_if(Z_s) = V_if(Z_0) + \sum_{j=1}^{d_x}  \int_0^s V_j(V_if)(Z_u)dX^j_u,
\]
we get 
$$f(Z_t) = f(Z_0) + \sum_{i=1}^d  V_if(Z_0) \int_0^t dX^i_s +  \sum_{i,j=1}^d \int_0^t \int_0^s V_jV_if(Z_u) dX_u^jdX_s^i$$
Keeping with this procedure for $N$ steps we get
$$f(Z_t) = f(Z_0) + \sum_{k=1}^N \sum_{|I|=k} V_If(Z_0) \idotsint\limits_{s<u_1<\cdots<u_k<t}dX^{i_1}_{u_1} \cdots dX^{i_{k}}_{u_{k}} + R_N(t)$$
where $I = (i_1,\dots,i_k)$ runs through the multi-indices, 
$V_If := V_{i_1}V_{i_2}\dots V_{i_k}f$
and
$$R_N(t) := \sum_{|J|= k+1} \hspace*{5pt}\idotsint\limits_{s<u_1<\cdots<u_{k+1}<t} V_J f(Z_{u_1})  dX^{j_1}_{u_1} \cdots dX^{j_{k+1}}_{u_{k+1}}$$

As one can imagine, under reasonable regularity assumptions, the remainder goes to $0$ as $N \to \infty$ and at the limit
$$f(Z_t) = f(Z_0) + \sum_{k=1}^{\infty} \sum_{|I|=k} V_If(Z_0) \idotsint\limits_{s<u_1<\cdots<u_k<t} dX^{i_1}_{u_1} \cdots dX^{i_{k}}_{u_{k}}$$

This is a remarkable result: to know the solution $Z_t$ to the original CDE it suffices to know the quantities $V_If$ for all 
multi-indices and $f$ in the coordinate maps, together with the iterated integrals 
$$\Sig(X)^I_{s,t} := \idotsint\limits_{s<u_1<\cdots<u_k<t} dX^{i_1}_{u_1} \cdots dX^{i_{k}}_{u_{k}}.$$

This observation is at the core of \emph{Rough Path Analysis}, the theory can in a sense be considered an extreme development of it.
The collection of iterated integrals, the \emph{Signature}, will be the main for our analysis.

In Appendix \ref{app:sect:Wronsky} we expand and make rigorous the arguments of this section in the case of affine vector fields.

\subsection{Basic Definitions}

Denote by $(\R^d)^{\otimes n} := \R^d \otimes \cdots \otimes \R^d$ the tensor product of $n$ copies $\R^d$, set $(\R^d)^{\otimes 0} := \R$.
Let $T(\R^d) := \bigoplus_{k=0}^{\infty} (\R^d)^{\otimes k}$ be the tensor algebra equipped with sum and tensor product.

\begin{definition}
    Let $\{e_1,\dots, e_d\}$ be the canonical basis of $\R^d$, then 
    \[
        \{e_{i_1} \otimes \cdots \otimes e_{i_k} : (i_1,\dots,i_k) \in [d]^k\}    
    \]
    is a basis of $(\R^d)^{\otimes k}$. We equip  $(\R^d)^{\otimes k}$ with the inner product $\sprod{\cdot}{\cdot}_{(\R^d)^{\otimes k}}$
    defined on basis elements as 
    \[
        \sprod{e_{i_1} \otimes \cdots \otimes e_{i_k}}{e_{j_1} \otimes \cdots \otimes e_{j_k}}_{(\R^d)^{\otimes k}} = \delta_{(i_1,\dots,i_k)}^{(j_1,\dots,j_k)}
    \]
    We extend this product to $T(\R^d)$ by 
    \[
        \sprod{A}{B}_{T(\R^d)} := \sum_{k=0}^{\infty} \sprod{a_k}{b_k}_{(\R^d)^{\otimes k}}
    \]
    where $A = (a_0,a_1,\dots)$ and $B = (b_0,b_1,\dots)$ 
\end{definition}

Note how for any $A,B \in T(\R^d)$ we have $ \sprod{A\otimes e_i}{B\otimes e_j}_{T(\R^d)} = \sprod{A}{B}_{T(\R^d)}\sprod{e_i}{e_j}_{\R^d}$, 
we refer to this as to the \emph{coproduct property}.

\begin{definition}[Infinite Tensor Algebra]
The infinite tensor algebra is defined as the space $T((\R^d)) := \prod_{k=0}^{\infty} (\R^d)^{\otimes k}$ 
equipped with the operations $+$ and $\otimes$ which act in the natural algebraic way; its elements are called \emph{tensor series}.
\end{definition}

It is easily seen that $(T((\R^d)),+,\otimes)$ is an algebra with unit $\textbf{1} = (1,0,0,\cdots)$ and we can endow it with a natural product which inherits the coproduct property.

Another point of view could be taken on the definitions of these spaces, one that we will prefer later on.
If we define $\mathbb{W}_d$ to be the set of words in $d$ letters then $T((\R^d)) \sim \R^{\mathbb{W}_d}$, 
$T(\R^d)$ is the subset of such functions with finite support and 
$$\sprod{A}{B}_{T(\R^d)} = \sum_{I\in\mathbb{W}_d} A_I B_I = \sum_{k=0}^{\infty} \sum_{|I| = k} A_I B_I$$
where $|I|$ is the length of the word $I$. The empty word, the only one with length $0$, is denoted by $()$ and corresponds to the basis element of $(\R^d)^{\otimes 0}$.
In this view the tensor product coincides with concatenation of words accordingly distributed and the closure of $T(\R^d)$ with respect to its product is just the $l^2$ space $l^2(\mathbb{W}_d)$.

\begin{definition}[Signature]
Given  $\gamma\in C_{1,0}([0,1];\R^d)$ and $s,t \in [0,1]$ s.t. $s \leq t$, the signature $Sig(\gamma)_{s,t} \in T((\R^d)) $ of the path $\gamma$ over $[s,t]$ is defined as 
\begin{equation}
    \Sig(\gamma)_{s,t} := (1,\int\limits_{s<u_1<t}d\gamma_{u_1},\hspace{3pt}\cdots,\idotsint\limits_{s<u_1<\cdots<u_k<t}d\gamma_{u_1}\otimes \cdots \otimes d\gamma_{u_k}, \cdots)
    \footnote{Here the integral is intended in the Riemann-Stjeltes sense.}
\end{equation}
Equivalently $\Sig(\gamma)_{s,t}$ is that element of $l^2(\mathbb{W}_d)$ defined recursively on words as 
\begin{equation}
    \Sig(\gamma)_{s,t}^{()}= 1,
    \quad
    \Sig(\gamma)^{Ij}_{s,t} = \int_s^t \Sig(\gamma)^{I}_{s,r} d\gamma^j_r.
\end{equation}
\end{definition}

\subsection{Notable Results}

Here we present some notable results of which we will make use through the paper. 
We omit the proofs if they can be easily found in the suggested references.

The first result is about bounding the norm of Signature entries:

\begin{proposition}[Factorial Decay Rate] \label{!decay}
Given $\gamma\in C_{1,0}([0,1];\R^d)$, for all $k \geq 1$ and $s,t \in [0,1]$ s.t. $s \leq t$ one has
\begin{equation}
    \norm{ \hspace{5pt}\idotsint\limits_{s<u_1<\cdots<u_k<t} d\gamma_{u_1}\otimes \cdots \otimes d\gamma_{u_k} ~}_{(\R^d)^{\otimes k}} \leq \frac{\norm{\gamma}_{1-var,[s,t]}^k}{k!}
\end{equation}
\end{proposition}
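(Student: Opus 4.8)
The plan is to reduce the estimate to a computation on the ordered $k$-simplex, using two standard facts about the objects just defined: the inner product on $(\R^d)^{\otimes k}$ makes the tensor norm multiplicative on pure tensors, and the $(\R^d)^{\otimes k}$-valued iterated integral obeys the triangle inequality. First I would note that, from the definition of $\sprod{\cdot}{\cdot}_{(\R^d)^{\otimes k}}$ on basis elements via the Kronecker delta, the norm of a pure tensor factorizes as $\norm{v_1 \otimes \cdots \otimes v_k}_{(\R^d)^{\otimes k}} = \prod_{j=1}^k |v_j|$. Combining the triangle inequality for the vector-valued Riemann--Stieltjes integral with this factorization yields
\[
    \norm{\idotsint\limits_{s<u_1<\cdots<u_k<t} d\gamma_{u_1}\otimes \cdots \otimes d\gamma_{u_k}}_{(\R^d)^{\otimes k}} \leq \idotsint\limits_{s<u_1<\cdots<u_k<t} |d\gamma_{u_1}| \cdots |d\gamma_{u_k}|.
\]

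Next I would introduce the \emph{control measure}. Since $\gamma \in C_{1,0}([0,1];\R^d)$ is absolutely continuous, I can write $|d\gamma_u| = |\dot\gamma_u|\,du =: d\mu(u)$, a finite nonatomic measure on $[s,t]$ of total mass $\mu([s,t]) = \int_s^t |\dot\gamma_u|\,du = \norm{\gamma}_{1-var,[s,t]}$. Writing $F(r) := \mu([s,r])$, the right-hand side above is exactly the $\mu$-volume of the ordered simplex, $I_k(t) := \idotsint\limits_{s<u_1<\cdots<u_k<t} d\mu(u_1)\cdots d\mu(u_k)$, so it remains to show $I_k(t) = F(t)^k/k!$.

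Finally I would compute $I_k(t)$ by induction on $k$. Peeling off the outermost integration variable gives the recursion $I_k(t) = \int_s^t I_{k-1}(u)\,d\mu(u)$ with $I_0 \equiv 1$. Feeding in the inductive hypothesis $I_{k-1}(u) = F(u)^{k-1}/(k-1)!$ turns this into $\int_s^t \frac{F(u)^{k-1}}{(k-1)!}\,dF(u) = \frac{F(t)^k}{k!}$ by the fundamental theorem of calculus, valid because $F$ is absolutely continuous. Evaluating at $F(t) = \norm{\gamma}_{1-var,[s,t]}$ delivers the claimed bound.

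The only genuinely delicate points are the two ingredients invoked in the first step. Justifying the triangle inequality for the vector-valued iterated integral is cleanest via approximation by Riemann--Stieltjes sums, for which the scalar triangle inequality applies termwise and then passes to the limit; the multiplicativity of the tensor norm is immediate from the definition but relies on $\gamma$ being of bounded variation for all integrals to be well defined. The simplex computation itself is routine once $\mu$ is in place. Equivalently, one could bypass the induction by the symmetry argument that the $k!$ orderings of $[s,t]^k$ tile the cube up to a $\mu^{\otimes k}$-null set (using that $\mu$ has no atoms), giving $I_k(t) = \mu([s,t])^k/k!$ directly.
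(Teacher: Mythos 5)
Your proof is correct, and it is the standard argument: reduce to a scalar integral over the ordered simplex via the cross-norm property of the Hilbertian tensor norm and the triangle inequality for vector-valued integrals, then compute the simplex volume for the control measure $d\mu = |\dot\gamma_u|\,du$ either by induction (fundamental theorem of calculus applied to $F^k/k!$) or by the $k!$-fold symmetry tiling of the cube. The paper itself omits the proof of this proposition, deferring to the cited references on signatures, so there is no in-paper argument to compare against; your write-up correctly identifies and handles the only delicate points (multiplicativity of the norm on pure tensors, passage of the triangle inequality through the iterated Riemann--Stieltjes integral, and $F(s)=0$ so that the boundary term vanishes).
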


The most important fact about Signature is that it acts as the basis for a Taylor expansion in path space. 
In fact just as finite linear combinations of monomials are dense in the continuous functions with a compact input set, finite linear combinations of Signature entries are dense in continuous functions from compact path-spaces:

\begin{theorem}[Universal Approximation \cite{fermanian2020embedding}]
\label{universal_approx}
    Fix $K \subset  C_{1,0}([0,1];\R^{d+1})$ compact such that for any $\gamma \in K$  it holds $\gamma^1_t = t$. 
    For any $F \in C^0(K;\R)$ and $\epsilon > 0$ there is an integer $N \geq 0$ such that 
    \begin{equation}
        \sup\limits_{\gamma \in K} |F(\gamma) - \sum_{|I|\leq N} \alpha_I \Sig(\gamma)^I_{0,1} | \leq \epsilon
    \end{equation}
    for some finite sequence $(\alpha_I)_{|I|\leq N}$ of real numbers.
\end{theorem}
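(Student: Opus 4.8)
The plan is to derive this classical statement as a direct application of the Stone--Weierstrass theorem on the compact metric space $K$, using finite linear functionals on the signature as the candidate subalgebra. Concretely, I would set
\[
\mathcal{A} := \left\{ \gamma \mapsto \sum_{|I| \leq N} \alpha_I \Sig(\gamma)_{0,1}^I \ : \ N \in \N, \ \alpha_I \in \R \right\} \subseteq C^0(K;\R),
\]
and show that $\mathcal{A}$ is a subalgebra of $C^0(K;\R)$ that contains the constant functions and separates the points of $K$. Density of $\mathcal{A}$ in $(C^0(K;\R), \norm{\cdot}_\infty)$ then follows at once, which is precisely the claim.

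First I would check that $\mathcal{A} \subseteq C^0(K;\R)$: each coordinate map $\gamma \mapsto \Sig(\gamma)_{0,1}^I$ is continuous for the $1$-variation topology, which follows from standard continuity estimates for iterated Riemann--Stieltjes integrals together with the factorial decay bound of Proposition \ref{!decay}; finite linear combinations preserve continuity. Containment of the constants is immediate, since the empty-word entry satisfies $\Sig(\gamma)_{0,1}^{()} \equiv 1$, so $\R \subseteq \mathcal{A}$.

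The algebraic heart of the argument is closure under pointwise multiplication, for which I would invoke the \emph{shuffle identity}: for any two words $I, J$,
\[
\Sig(\gamma)_{0,1}^I \, \Sig(\gamma)_{0,1}^J = \sum_{K \in I \shuffle J} \Sig(\gamma)_{0,1}^K ,
\]
where $I \shuffle J$ denotes the finite multiset of interleavings of $I$ and $J$. This expresses the product of two signature coordinates as a finite linear combination of signature coordinates, so the product of any two elements of $\mathcal{A}$ lies again in $\mathcal{A}$, confirming that $\mathcal{A}$ is a subalgebra.

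The step I expect to be the main obstacle is separation of points, i.e.\ proving that $\gamma \neq \tilde\gamma$ in $K$ forces $\Sig(\gamma)_{0,1}^I \neq \Sig(\tilde\gamma)_{0,1}^I$ for at least one word $I$. This is the injectivity of the signature transform, which fails for generic paths: signatures are invariant under reparametrisation and determine a path only up to \emph{tree-like equivalence} \cite{hambly2010uniqueness}. The hypothesis $\gamma^1_t = t$ is exactly what rescues the argument, since a strictly monotone first coordinate pins down the parametrisation and rules out any tree-like cancellation, so the Hambly--Lyons uniqueness theorem guarantees that distinct time-augmented paths in $K$ have distinct signatures. Once separation is in place, Stone--Weierstrass produces, for every $F \in C^0(K;\R)$ and $\epsilon>0$, a finite sequence $(\alpha_I)_{|I|\leq N}$ with $\sup_{\gamma \in K} |F(\gamma) - \sum_{|I|\leq N}\alpha_I \Sig(\gamma)_{0,1}^I| \leq \epsilon$, completing the proof.
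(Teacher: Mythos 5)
Your proof is correct and follows exactly the route the paper indicates: the paper omits a full proof, citing \cite{fermanian2020embedding}, but its accompanying remark states that the result ``is just an application of Stone--Weierstrass enabled by the rich algebraic structure of iterated integrals,'' which is precisely your argument (shuffle identity for the subalgebra property, the empty word for constants, and time-augmentation plus Hambly--Lyons uniqueness for point separation). No gaps.
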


\begin{remark}
    There is no magic in this result, it is just an application of Stone-Weiestrass enabled by the rich algebraic structure of iterated integrals, studied originally in \cite{chen58}.
\end{remark}

We will need to restrict some paths to sub-intervals of $[0,1]$ in such a way to still be able to consider them meaningfully as elements of $C_{1,0}([0,1];\R^d)$, this is done in the following way:

\begin{definition}\label{app:def:restriction}
    Given any path $\gamma \in C_{1,0}([0,1];\R^d)$ we define its restriction on a sub-interval $[s,t] \subseteq [0,1]$ as the path $\gamma_{[s,t]} \in C_{1,0}([0,1];\R^d)$ with values
    \begin{equation}
        \gamma_{[s,t]}(r) := \begin{cases}
                0 & \text{ if } r < s 
            \\  \gamma_r - \gamma_s & \text{ if } s \leq r \leq t 
            \\  \gamma_t - \gamma_s & \text{ if } r > t 
        \end{cases}
    \end{equation}
\end{definition}

This definition is such that the following important equation holds
\begin{equation}
    \Sig(\gamma)_{s,t} = \Sig(\gamma_{[s,t]})_{0,1}
\end{equation}

With the right augmentation of the paths one can see that the Signature distinguishes between different sections of paths, this will be crucial for some of the original results presented in this work.
\begin{lemma}\label{app:prop:no_tree_like}
    Assume  $\omega, \gamma \in C_{1,0}([0,1];\R^{d+2})$ with $\omega^{1}_t = \gamma^1_t \equiv t$ and $\omega^{2}_t = \gamma^2_t \equiv t^2$.
    Then
    \begin{equation}
         \Sig(\omega)_{s,t} = \Sig(\gamma)_{s',t'} \iff \omega_{[s,t]} = \gamma_{[s',t']}
    \end{equation}
\end{lemma}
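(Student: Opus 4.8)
The plan is to prove the two implications separately, treating the reverse direction as immediate and concentrating the work on the forward direction, where the uniqueness theorem of \cite{hambly2010uniqueness} does the heavy lifting. For ($\Leftarrow$): if $\omega_{[s,t]} = \gamma_{[s',t']}$ as elements of $C_{1,0}([0,1];\R^{d+2})$, then applying $\Sig(\cdot)_{0,1}$ to both sides and invoking the restriction identity $\Sig(\omega)_{s,t} = \Sig(\omega_{[s,t]})_{0,1}$ stated just before the lemma immediately yields $\Sig(\omega)_{s,t} = \Sig(\gamma)_{s',t'}$. So the entire content is in ($\Rightarrow$).

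For ($\Rightarrow$), I would first reduce to the restricted paths: by the same restriction identity, the hypothesis is equivalent to $\Sig(\omega_{[s,t]})_{0,1} = \Sig(\gamma_{[s',t']})_{0,1}$. The first step is to recover the interval endpoints from the low-order signature entries in the two distinguished coordinates. Using $\omega^1_u = u$ and $\omega^2_u = u^2$, a direct computation gives the depth-one entries $\Sig(\omega)^{(1)}_{s,t} = t - s$ and $\Sig(\omega)^{(2)}_{s,t} = t^2 - s^2 = (t-s)(t+s)$, and likewise for $\gamma$ with $s', t'$. Equality of signatures forces $t - s = t' - s'$ and $t^2 - s^2 = (t')^2 - (s')^2$; assuming the active length $t-s$ is nonzero (otherwise the path is constant and the claim is trivial), dividing gives $t + s = t' + s'$, and together these yield $s = s'$ and $t = t'$. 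This is exactly the role of the $t^2$ coordinate: the pair $\big(\Sig^{(1)}, \Sig^{(2)}\big)$ pins down both endpoints rather than just their difference, consistent with the theorem's footnote that $t^2$ is unnecessary for path-to-point tasks where only one endpoint varies.

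The core step is then to upgrade equality of signatures to equality of paths. By \cite{hambly2010uniqueness}, two bounded-variation paths have equal signature over $[0,1]$ if and only if they are \emph{tree-like equivalent}, and each tree-like equivalence class contains a unique tree-reduced representative up to reparametrization. I would argue that the strictly increasing first coordinate forces the active portions of $\omega_{[s,t]}$ and $\gamma_{[s',t']}$ (on $[s,t]$ and $[s',t']$) to already be tree-reduced: a tree-like excursion requires the path to return to a previously visited point, which is impossible while the first coordinate $r \mapsto r - s$ is strictly increasing. Hence the two active paths coincide up to reparametrization; and since that same first coordinate \emph{is} the identity parametrization $r \mapsto r - s$ on $[s,t]$ (with $s = s'$ already established), the reparametrization relating them must be the identity, giving equality of the active parts on the nose. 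On the complementary intervals $[0,s]$ and $[t,1]$ both restricted paths are constant, equal to $0$ and to the common total increment respectively, so they agree there as well, yielding $\omega_{[s,t]} = \gamma_{[s',t']}$ throughout $[0,1]$.

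The main obstacle is this final step: carefully invoking \cite{hambly2010uniqueness} and converting its ``up to tree-like equivalence and reparametrization'' conclusion into \emph{genuine pointwise equality}. This hinges on two distinct uses of the monotone time coordinate — ruling out tree-like cancellation, so that the active paths are already reduced, and fixing the parametrization, so that ``up to reparametrization'' collapses to equality. The endpoint-recovery computation, while routine, is the ingredient that legitimises comparing the restrictions at aligned flat portions and is precisely what necessitates the $t^2$ assumption.
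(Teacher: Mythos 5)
Your proof is correct and follows essentially the same route as the paper's: the easy direction via the restriction identity $\Sig(\gamma)_{s,t}=\Sig(\gamma_{[s,t]})_{0,1}$, recovery of the endpoints from the depth-one entries in the $t$ and $t^2$ coordinates, and then the standard uniqueness of signature for time-augmented paths. You merely spell out two points the paper leaves implicit — the degenerate case $t=s$ and the tree-reduced/reparametrization argument behind the uniqueness step — which is a welcome refinement rather than a different approach.
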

\begin{proof}
    The \emph{if} part is follows from $\Sig(\gamma)_{s,t} = \Sig(\gamma_{[s,t]})_{0,1}$.
    For the \emph{only if} part, If $s = s'$ and $t = t'$ the statement holds; this is because if the signatures over the time interval $[s,t]$ of two time-augmented paths are equal, then the two paths must be equal on $[s,t]$. We now show that augmenting the path with $t^2$ and imposing equality of signatures, implies $s=s'$ and $t=t'$, which will in turn allow us to conclude the proof by the previous remark.
    Assume $\Sig(\omega)_{s,t} = \Sig(\gamma)_{s',t'}$, in particular we must have
    \begin{gather}
        \int_s^t d(r^2) = t^2 - s^2 = (t')^2 - (s')^2 = \int_{s'}^{t'} d(r^2)
        \\
        \int_s^t d(r) = t - s = t' - s' =  \int_{s'}^{t'} d(r)
    \end{gather}
    which reduces to the system
    \[
    \begin{cases}
        t^2 - s^2 = (t')^2 - (s')^2 
        \\
        t - s = t' - s' 
    \end{cases}
    \begin{cases}
        t + s = t' + s' 
        \\
        t - s = t' - s' 
    \end{cases}
    \begin{cases}
        2t = 2t' 
        \\
        2s = 2s' 
    \end{cases}
    \]
    Hence it must be true that $t = t'$ and $s = s'$.
\end{proof}

\newpage
\section{Expressivity}\label{app:sect:expressivity}

\subsection{Model Recap}

 In the body of the paper we have presented the main results with the simplified assumption of $Z^{\Xe}_0 = 0$ or at best $Z^{\Xe}_0 = Z_0$ \emph{i.e.} with an initial value independent from the input.
In this appendix we will carry on the proofs in a more general setting in which $Z^{\Xe}_0$ is allowed to be input-dependent, as previously discussed the choice of initial value is, in contrast to the classical setting, meaningful inasmuch it allows to approximate linear maps on the signature of $\omega^{\Xe}_{[0,1]}$.
In order to do so we have to introduce a new gate, the initial value gate, in the form of a map
\begin{equation}
\begin{aligned}
     (\cdot)_0 &: \bX \to \R^{d_0} \\
    & X \mapsto X_0
\end{aligned}
\end{equation}

Despite the notation, there is no reason why $(X)_0$ should be the initial value of the path $X$, one should think of this map as the one summarizing the data which still matters for the task but which does not have a time-series nature. 

To recapitulate, the general setting of our models is the following: a topological \emph{input space} space $\bX$,
\begin{align*}
    &(\cdot)_0: \bX \to \R^{d_0}, \tag{$(\cdot)_0$-gate}\\
    &\omega:\bX \to C_{1,0}([0,1]; \R^{d_{\omega}}),\tag{$\omega$-gate}\\
    &\xi:\bX \to C_{1,0}([0,1]; \R^{d_{\xi}}).\tag{$\xi$-gate}
\end{align*}
where all the gates are continuous functions on $\bX$.
The space $\bX$ does not have to be a space of paths, a topological structure suffices, as long as the gates
$(\cdot)_0, \omega, \xi$ are well defined and continuous.

\begin{remark}
    Typical examples for the choice of gates are $\bX$ space of paths and
\begin{gather*}
     (X)_0 = 0 \quad \omega^{\Xe}_t = t \quad \xi^{\Xe}_t = \int_0^t X_s ds
    \tag{S4}
    \\
     (X)_0 = 0 \quad \omega^{\Xe}_t = \int_0^t softplus(\alpha X_s + \beta) ds \quad \xi^{\Xe}_t = \int_0^t softplus(\alpha X_s + \beta) X_s ds
    \tag{Mamba}
\end{gather*}
\end{remark}

Then the main object of study, "gated" Linear CDEs, are defined as:

\begin{definition}\label{app:def:CDE}
    Fix gates $(\cdot)_0,\omega,\xi$ as above, $N \in \N$,  matrices $\{A_i\}_{i=1,\dots,d_{\omega}}$~($A_i\in\R^{N\times N}$), $B \in \R^{N \times d_{\xi}}$, $C \in \R^{N \times d_{0}}$. The corresponding Linear CDE is the functional 
    \begin{gather}
        Z: \bX \to C_{1}([0,1]; \R^N)
        \\ \label{eqn:Linear_CDE_full}
        Z^{\Xe}_0=CX_0, \quad Z^{\Xe}_t = \sum_{i=1}^{d_{\omega}}  A_i Z^{\Xe}_t d\omega^{\Xe,i}_t  + B d\xi^{\Xe}_t
    \end{gather}
\end{definition}

\subsection{Main Result - Statement and Strategy}

Here we present the unified expressivity result in its most general form:
\begin{theorem}\label{app:thm:Main}
        For any compact set $\bK \subseteq \bX$ and continuous gates $(\cdot)_0, \omega, \xi$ with $\omega^{\Xe,1}_t \equiv t$ and $\omega^{\Xe,2}_t \equiv t^2$. 
        For any $\epsilon>0$ and any
        \begin{equation}
            F \in \left\{
                (X,t) \mapsto \Psi(\omega^{\Xe}_{[0,t]}) \cdot X_0 + \int_0^t \Phi(\omega^{\Xe}_{[s,t]}) \cdot d\xi^{\Xe}_{s} 
            \right\}
        \end{equation}
         where $\Psi \in C^{0}(C_{1,0};\R^{d_0})$ and $\Phi \in C^{0}(C_{1,0};\R^{d_{\xi}})$, there exist a choice of hidden dimension $N \geq 1$ and parameters $v \in \R^N, A_i \in \R^{N \times N}, B \in \R^{N \times d_{\xi}}, C \in \R^{N \times d_0}$ such that 
        \begin{equation}\label{app:eqn:dense_RKHS_main}
            \sup\limits_{(X,t) \in \bK \times [0,1]} |F(X,t) - \sprod{v}{Z^{\Xe}_{t}}| \leq \epsilon
        \end{equation}
        Moreover generic parameters suffice with high probability in the sense that under LeCun initialization 
        \[
        [A_i]_{n, j} \iid \mathcal{N}(0,\frac{1}{N}) \quad C_{n,j}, B_{n,j} \iid \mathcal{N}(0,1)
        \]
        the following holds:
        \[
        \lim\limits_{N \to \infty} \mathbb{P}\big[
        \exists v \in \R^N : \text{  (\ref{app:eqn:dense_RKHS_main}) holds} 
        \big] = 1
        \]

       If the $A_i$s are constrained to be diagonal, as often is the case in practice, the requirements $\omega^{\Xe,1}_t \equiv t$, $\omega^{\Xe,2}_t \equiv t^2$ can be dropped and the existence result only holds with
        \begin{equation}\label{app:eqn:main_poly_family}
        F \in \left\{
                (X,t) \mapsto \psi(\omega^{\Xe}_t) \cdot X_0 +  \int_0^t \phi(\omega^{\Xe}_t - \omega^{\Xe}_s) \cdot d\xi^{\Xe}_{s}
        \right\}
        \end{equation}
        for $\psi \in C^0(\R^{d_{\omega}}; \R^{d_0})$ and $\phi \in C^0(\R^{d_{\omega}}; \R^{d_{\xi}})$.

       Moreover in both the dense and diagonal cases the "reverse" also holds in the sense that, given any choice of matrices $A_i,B,C$ there is an $\epsilon$-close map $F$ in the corresponding family.
\end{theorem}

As one can see the theorem is composed of different sub-results, which we believe are better understood separately from each other.
The proof will thus be split in the following steps:
\begin{enumerate}
    \item  Using the theory developed in Appendix \ref{app:sect:Wronsky} we see how linear functions on the  $Z_t$s can be seen as linear functions on certain terms of the Signature Transform.

    \item Such terms define a \emph{feature map} $T(X)_{0,t}$ which generates a Reproducing Kernel Hilbert Space $\Hs^{\tiny (\cdot)_0,\omega,\eta}_t$. This abstract space acts as an upper bound on expressivity: linear functions on $Z_t$ always belong to its closure (in uniform norm), independently of dimension and weights chosen, hence they cannot reach what functions in $\Hs^{\tiny (\cdot)_0,\omega,\eta}_t$ can't approximate.
    
    \item The full expressive range of $\Hs^{\tiny (\cdot)_0,\omega,\eta}_t$ is shown to be captured by generic $Z_t$s.

    \item Diagonal systems are shown to be restricted to a subset of the $\Hs^{\tiny (\cdot)_0,\omega,\eta}_t$ of which they capture the full expressive range. 
\end{enumerate}

\subsection{Main Result - Proofs}

\subsubsection{An expansion for $Z^{\Xe}_t$}
\begin{proposition} \label{app:prop:Z_exp}
        For any choice of $A_i \in \R^{N \times N}, B \in \R^{N \times d_{\xi}}$ and $C \in \R^{N \times d_0}$, the unique solution to 
        \begin{equation} \label{app:eqn:Z_CDE} \begin{aligned}
            & dZ^{\Xe}_t = \sum_{i=1}^{d_{\omega}}  A_i Z^{\Xe}_t d\omega^{\Xe, i}_t  + B d\xi^{\Xe}_t
            \\
            & Z^{\Xe}_0 = CX_0 \in \R^N 
        \end{aligned} \end{equation}
        is given, using the notation $A_{Ij} := A_jA_I$, by 
        \begin{equation} \label{app:eqn:linear_on_sig}
        \begin{aligned}
            Z^{\Xe}_t =& \sum_{i=1}^{d_0}\sum_{I \in \mathbb{W}_{d_{\omega}}} A_I C_i \hspace{3pt} X_0^i\Sig(\omega^{\Xe})_{0,t}^I  +
            \sum_{j=1}^{d_{\xi}} \sum_{I \in \mathbb{W}_{d_{\omega}}} A_I B_{j}  \int_0^t  \Sig(\omega^{\Xe})_{s,t}^I d\xi^{\Xe, j}_s \in \R^N
        \end{aligned} \end{equation}
        
    Notice here $ A_IC_i, A_I B_{j} \in \R^N$ and $X_0^i\Sig(\omega^{\Xe})_{0,t}^I, \int_0^t  \Sig(\omega^{\Xe})_{s,t}^I d\xi^{\Xe, j}_s \in \R$.
    \end{proposition}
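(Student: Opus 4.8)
The plan is to exhibit the right-hand side of \eqref{app:eqn:linear_on_sig} as a well-defined, absolutely convergent series, to verify by direct differentiation that it solves the integral form of \eqref{app:eqn:Z_CDE}, and then to invoke uniqueness of solutions to linear CDEs driven by bounded-variation paths. Writing $\hat Z_t$ for the claimed series, the integral equation to be matched is
\[
Z^{\Xe}_t = CX_0 + \sum_{i=1}^{d_\omega}\int_0^t A_i Z^{\Xe}_s \, d\omega^{\Xe,i}_s + \int_0^t B\, d\xi^{\Xe}_s,
\]
the last term equalling $B\xi^{\Xe}_t$ since the gates lie in $C_{1,0}$.

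\emph{Convergence.} First I would show the two sums in \eqref{app:eqn:linear_on_sig} converge absolutely and uniformly in $t\in[0,1]$ and $X\in\bK$. Setting $M := \max_i \norm{A_i}$ (operator norm), one has $\norm{A_I} \le M^{|I|}$; combined with the factorial decay estimate of Proposition \ref{!decay}, $|\Sig(\omega^{\Xe})^I_{s,t}| \le \norm{\omega^{\Xe}}_{1;[s,t]}^{|I|}/|I|!$, and the fact that there are $d_\omega^k$ words of length $k$, the first sum is dominated termwise by $\sum_k (d_\omega M \norm{\omega^{\Xe}}_{1;[0,1]})^k/k!$, and likewise the $\xi$-sum after bounding $\int_0^t|\Sig(\omega^{\Xe})^I_{s,t}||d\xi^{\Xe,j}_s|$ by the same decaying factor times $\norm{\xi^{\Xe,j}}_{1;[0,1]}$. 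Compactness of $\bK$ and continuity of the gates make all these norms uniformly bounded, so both series are dominated by convergent exponentials; this also licenses the term-by-term differentiation below.

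\emph{Verification.} At $t=0$ only the empty word survives, $\Sig(\omega^{\Xe})^I_{0,0} = \delta_{I,()}$ and the $\xi$-integrals vanish, giving $\hat Z_0 = \sum_i A_{()} C_i X_0^i = CX_0$ since $A_{()}=\mathrm{Id}$. For the derivative I use the recursive identity $d_t \Sig(\omega^{\Xe})^{I'k}_{0,t} = \Sig(\omega^{\Xe})^{I'}_{0,t}\, d\omega^{\Xe,k}_t$ and the bijective reindexing under which every nonempty word factors uniquely as $I=I'k$ (shorter word $I'$, last letter $k$), giving $A_I = A_k A_{I'}$ by the stated convention. Differentiating the first sum then yields $\sum_k A_k\big(\sum_{I'} A_{I'} C_i X_0^i \Sig(\omega^{\Xe})^{I'}_{0,t}\big)\,d\omega^{\Xe,k}_t$, i.e. $\sum_k A_k(\text{first part of }\hat Z_t)\,d\omega^{\Xe,k}_t$. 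In the $\xi$-sum the variable $t$ appears both as the upper integration limit and as the right endpoint of $\Sig(\omega^{\Xe})^I_{s,t}$, so differentiation splits into two contributions: the boundary term at $s=t$ uses $\Sig(\omega^{\Xe})^I_{t,t}=\delta_{I,()}$ and leaves exactly $\sum_j A_{()}B_j\,d\xi^{\Xe,j}_t = B\,d\xi^{\Xe}_t$, while the inner endpoint derivative contributes $\sum_k A_k(\text{second part of }\hat Z_t)\,d\omega^{\Xe,k}_t$. Summing the two gives $d\hat Z_t = \sum_k A_k \hat Z_t\, d\omega^{\Xe,k}_t + B\,d\xi^{\Xe}_t$, so $\hat Z$ solves \eqref{app:eqn:Z_CDE}. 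Uniqueness (a Gronwall/contraction argument for linear CDEs with $1$-variation drivers, or equivalently the affine-vector-field development of Appendix \ref{app:sect:Wronsky}) then forces $\hat Z = Z^{\Xe}$.

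I expect the main obstacle to be the bookkeeping of the $\xi$-integral term: correctly isolating the boundary contribution at $s=t$ that produces the $B\,d\xi^{\Xe}$ forcing from the endpoint ("reversed-time") derivative of $\Sig(\omega^{\Xe})^I_{s,t}$, and justifying that differentiation commutes with the infinite summation. The decay estimate of Proposition \ref{!decay} is precisely what makes that interchange rigorous; the remaining content is just the reindexing $I\leftrightarrow(I',k)$ together with the recursive definition of the signature.
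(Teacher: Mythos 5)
Your proof is correct, but it takes a genuinely different route from the paper's. The paper's own proof is a two-line citation: it factors the result through the Wronskian theory of Appendix \ref{app:sect:Wronsky}, first constructing the fundamental solution $W_{s,t}$ by a Banach fixed-point argument (Theorem \ref{thm:uniqueness_wronskian}), identifying it with the signature series $\sum_I A_I \Sig(\omega^{\Xe})^I_{s,t}$ (Theorem \ref{app:thm:Sig_Wronskian}), and then applying variation of constants (Theorem \ref{thm:general_variation_constants}) to handle the forcing term $B\,d\xi^{\Xe}$. You instead write down the full inhomogeneous series at once and verify it directly: absolute and uniform convergence via the factorial decay of Proposition \ref{!decay}, term-by-term differentiation using the signature recursion and the reindexing $I \leftrightarrow (I',k)$ with $A_{I'k} = A_k A_{I'}$, correct isolation of the $s=t$ boundary contribution $\Sig(\omega^{\Xe})^I_{t,t}=\delta_{I,()}$ which produces the forcing $B\,d\xi^{\Xe}_t$, and finally uniqueness for linear CDEs with bounded-variation drivers. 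Both arguments are sound; yours is more self-contained and direct for this single proposition, while the paper's Wronskian machinery buys reusability (it is what later yields the closed form in the commuting/diagonal case, the stability analysis via the eigenvalues of $W_{s,t}$, and Liouville's formula). One point worth making explicit in your write-up: the cleanest justification of the ``interior endpoint'' derivative of $\int_0^t \Sig(\omega^{\Xe})^{I'k}_{s,t}\,d\xi^{\Xe,j}_s$ is the Fubini identity $\int_0^t \int_s^t \Sig(\omega^{\Xe})^{I'}_{s,r}\,d\omega^{\Xe,k}_r\,d\xi^{\Xe,j}_s = \int_0^t \bigl(\int_0^r \Sig(\omega^{\Xe})^{I'}_{s,r}\,d\xi^{\Xe,j}_s\bigr)\,d\omega^{\Xe,k}_r$, which the paper itself uses when computing the entries of the feature map $T(X)_{0,t}$; this turns your two-contribution differentiation into an exact integral identity and sidesteps any delicate boundary-term analysis for Stieltjes integrals.
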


    \begin{proof}
        Just apply Theorems (\ref{app:thm:Sig_Wronskian}) and (\ref{thm:general_variation_constants}) of Appendix \ref{app:sect:Wronsky}.
    \end{proof}

    \begin{remark}
        A property highlighted by the previous result is the interpretability of these models.
        After training the CDEs one can compute the matrix multiplications and observe which entries of 
        the signature the model chooses to take into consideration, to attend.
    \end{remark}

    \subsubsection{The feature map $T$ and its RKHS}
    The expression in (\ref{app:eqn:linear_on_sig}) is a \emph{linear map} on a \emph{feature vector} given, at time $t$, by 
    \begin{equation}
        T(X)_{0,t} := \left( X_0^i\Sig(\omega^{\Xe})_{0,t}^I, \int_0^t  \Sig(\omega^{\Xe})_{s,t}^I d\xi^{\Xe, j}_s : i \in [d_0], I \in \mathbb{W}_{d_{\omega}}, j \in [d_{\xi}] \right)
    \end{equation}
    This feature vector can be understood as a tensor in the following way:

    \begin{definition}
        Let $\mathbb{W}_{d_0, d_{\omega}, d_{\xi}}$ be the set of words in the alphabet 
        \[
            \mathcal{A}_{d_0,d_{\omega},d_{\xi}} := \{  \boldsymbol{e}_i \}_{i=1,\dots,d_0} \cup 
            \{  \boldsymbol{\epsilon}^{\xi}_j \}_{j=1,\dots,d_{\xi}} \cup
            \{  \boldsymbol{\epsilon}^{\omega}_k \}_{k=1,\dots,d_{\omega}}
        \]
        Fixed the gates $(\cdot)_0, \omega, \xi$ we define $T(X): [0,1] \to l^2(\mathbb{W}_{d_0, d_{\omega}, d_{\xi}}) \subseteq T((\mathcal{A}_{d_0,d_{\omega},d_{\xi}}))$
        as the unique solution to:
        \begin{equation} \label{app:eqn:T_CDE} \begin{aligned}
            T(X)_{0,t} = & ~ \sum_{i=1}^{d} X_0^i \boldsymbol{e}_i
            + \sum_{j=1}^{d_{\xi}} \xi^{\Xe, j}_t \boldsymbol{\epsilon}^{\xi}_j
            + \sum_{k=1}^{d_{\omega}}  \int_0^t T(X)_{0,s} ~ d\omega^{\Xe, k}_s 
            \otimes\boldsymbol{\epsilon}^{\omega}_k
        \end{aligned} \end{equation}
    \end{definition}

    In fact one readily sees that the only non-zero terms of $T(X)_{0,t}$ defined as above are
    \begin{align*}
            & \sprod{T(X)_{0,t}}{\boldsymbol{e}_i} = X_0^i 
            \\
            & \sprod{T(X)_{0,t}}{\boldsymbol{e}_i \otimes \boldsymbol{\epsilon}^{\omega}_{Ik}} = \int_0^t \sprod{T(X)_{0,s}}{\boldsymbol{e}_i \otimes \boldsymbol{\epsilon}^{\omega}_{I}} d\omega^{\Xe, k}_s  = X_0^i ~ \Sig(\omega^{\Xe})_{0,t}^{Ik} 
            \\
            & \sprod{T(X)_{0,t}}{\boldsymbol{\epsilon}^{\xi}_j} = \xi^{\Xe, j}_t = 
            \int_0^t d\xi^{\Xe, j}_s
            \\
            & \sprod{T(X)_{0,t}}{\boldsymbol{\epsilon}^{\xi}_j \otimes \boldsymbol{\epsilon}^{\omega}_{Ik}} = 
            \int_0^t \sprod{T(X)_{0,s}}{\boldsymbol{\epsilon}^{\xi}_j \otimes \boldsymbol{\epsilon}^{\omega}_{I}} d\omega^{\Xe,k}_s
            = \int_{s=0}^t \int_{r=0}^s \Sig(\omega^{\Xe})_{r,s}^I  d\xi^{\Xe, j}_r d\omega^{\Xe,k}_s
            \\
            & \hspace{15pt} =  \int_{r=0}^t \int_{s=r}^t \Sig(\omega^{\Xe})_{r,s}^I  d\omega^{\Xe,k}_s d\xi^{\Xe, j}_r
            = \int_0^t \Sig(\omega^{\Xe})_{r,t}^I ~  d\xi^{\Xe, j}_r
    \end{align*}

    This is similar to the tensor-valued CDE defining the signature as a tensor \emph{i.e.} \cite{salvi_goursat}
    \[
    \Sig(\omega)_{0,t} = () + \int_0^t  \Sig(\omega)_{0,s} \otimes d\omega_s
    \]
    with the addition of two terms to track $X_0$ and $\xi^{\Xe}$.
    One could also understand $T(X)_{s,t}$ as a sub-tensor of 
    \[
    X_0\otimes \Sig((\omega^{\Xe},\xi^{\Xe}))_{s,t}
    \]
    but in doing this one would have to explicitly ignore most of the terms of this vector; the CDE (\ref{app:eqn:T_CDE}) does exactly this, but implicitly.
    In any case the subtensor view shows that $T: \bX \times [0,1]^2 \to l^2(\mathbb{W}_{d_0, d_{\omega}, d_{\xi}})$ is well defined and continuous.

    To the feature map $T(\cdot)_{0,t}$ with values in the Hilbert space $l^2(\mathbb{W}_{d_0, d_{\omega}, d_{\xi}})$ is then associated a Reproducing Kernel Hilbert Space \cite{berlinet2011reproducing}, where the Kernel is the one induced by the $l^2$ product, which we denote by 
    \begin{equation}
        \Hs^{\tiny (\cdot)_0,\omega,\eta}_t \subseteq C^0(\bX;\R)
    \end{equation}

    Classical RKHS theory tells us that we can characterize its elements as:
    \begin{proposition}\label{app:prop:RKHS_characterization}
        A map $F(\cdot)_t: \bX \to \R$ is an element of $\Hs^{\tiny (\cdot)_0,\omega,\eta}_t$ if and only if it is of the form
        \begin{equation} \begin{aligned} \label{app:eqn:RKHS_F_t}
            F(x)_t = & \sum_{i=1}^{d_0} X_0^i \sprod{ \alpha_i}{\Sig(\omega^{\Xe})_{0,t}}
            + \sum_{j=1}^{d_{\xi}}  \int_0^t \sprod{\beta_j}{\Sig(\omega^{\Xe})_{s,t}} d\xi^{\Xe, j}_s
        \end{aligned} \end{equation}
        for $\alpha_i, \beta_j \in l^2(\mathbb{W}_{d_{\omega}})$.
        Moreover $\norm{F(\cdot)_t}^2_{\Hs^{\tiny (\cdot)_0,\omega,\eta}_t}$ is equal to the minimal value of
        \[
            \sum_{i=1}^{d_0} \norm{\alpha_i}^2_{l^2(\mathbb{W}_{d_{\omega}})}
            + \sum_{j=1}^{d_{\xi}} \norm{\beta_j}^2_{l^2(\mathbb{W}_{d_{\omega}})}
         \]
        taken over those $\gamma, \beta$ for which the above equality holds.
    \end{proposition}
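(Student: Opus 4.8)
The plan is to treat this as a direct instance of the standard correspondence between a feature map into a Hilbert space and its associated reproducing kernel Hilbert space, the only genuine content being the explicit re-indexing of the abstract Hilbert coordinate into the two families $\alpha_i,\beta_j$. Concretely, write $H := l^2(\mathbb{W}_{d_0,d_{\omega},d_{\xi}})$ and recall from the discussion preceding the statement that $T(\cdot)_{0,t}:\bX \to H$ is a well-defined continuous feature map, and that by definition $\Hs^{\tiny (\cdot)_0,\omega,\eta}_t$ is the RKHS generated by the kernel $k_t(X,X') = \sprod{T(X)_{0,t}}{T(X')_{0,t}}$. The classical feature-map characterisation of RKHS \cite{berlinet2011reproducing} then states that $F(\cdot)_t$ lies in this space if and only if $F(X)_t = \sprod{w}{T(X)_{0,t}}$ for some $w \in H$, and that $\norm{F(\cdot)_t}^2_{\Hs^{\tiny (\cdot)_0,\omega,\eta}_t}$ equals the minimum of $\norm{w}_H^2$ over all $w$ realising this representation, the minimum being attained at the orthogonal projection of any such $w$ onto $\overline{\mathrm{span}}\{T(X)_{0,t} : X \in \bX\}$.

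The second step is to unfold the pairing $\sprod{w}{T(X)_{0,t}}$ using the explicit list of the only nonzero coordinates of $T(X)_{0,t}$ established just above the statement. I would set $\alpha_i := (\sprod{w}{\boldsymbol{e}_i \otimes \boldsymbol{\epsilon}^{\omega}_I})_{I \in \mathbb{W}_{d_{\omega}}}$ and $\beta_j := (\sprod{w}{\boldsymbol{\epsilon}^{\xi}_j \otimes \boldsymbol{\epsilon}^{\omega}_I})_{I \in \mathbb{W}_{d_{\omega}}}$, both elements of $l^2(\mathbb{W}_{d_{\omega}})$ as subfamilies of the $l^2$ sequence $w$. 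Substituting the four coordinate formulas and using that the empty-word signature equals $1$, the pairing collapses exactly to (\ref{app:eqn:RKHS_F_t}), because every coordinate of $T$ indexed by a word \emph{not} of the form $\boldsymbol{e}_i\boldsymbol{\epsilon}^{\omega}_I$ or $\boldsymbol{\epsilon}^{\xi}_j\boldsymbol{\epsilon}^{\omega}_I$ vanishes identically. This yields the ``only if'' direction; the ``if'' direction is the same computation read backwards, since any prescribed $\alpha_i,\beta_j \in l^2(\mathbb{W}_{d_{\omega}})$ assemble into a legitimate $w \in H$.

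For the norm claim, I would observe that the image of $T$ lies in the closed \emph{active} subspace $H_{\mathrm{act}} \subseteq H$ spanned by $\{\boldsymbol{e}_i \otimes \boldsymbol{\epsilon}^{\omega}_I\} \cup \{\boldsymbol{\epsilon}^{\xi}_j \otimes \boldsymbol{\epsilon}^{\omega}_I\}$, and that the coordinate map $w \mapsto ((\alpha_i)_i,(\beta_j)_j)$ restricts to an isometric isomorphism $H_{\mathrm{act}} \cong \bigoplus_{i=1}^{d_0} l^2(\mathbb{W}_{d_{\omega}}) \oplus \bigoplus_{j=1}^{d_{\xi}} l^2(\mathbb{W}_{d_{\omega}})$, so that $\norm{w_{\mathrm{act}}}_H^2 = \sum_i \norm{\alpha_i}^2 + \sum_j \norm{\beta_j}^2$. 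Since any component of $w$ orthogonal to $H_{\mathrm{act}}$ leaves $F$ unchanged while only increasing $\norm{w}_H$, minimising $\norm{w}_H$ over all representations is equivalent to minimising $\sum_i \norm{\alpha_i}^2 + \sum_j \norm{\beta_j}^2$ over all $(\alpha,\beta)$ producing (\ref{app:eqn:RKHS_F_t}), which is precisely the stated identity.

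The main obstacle I anticipate is not conceptual but lies in the careful bookkeeping that makes the clean norm identity come out: verifying convergence of all the iterated-integral series so that the pairing is well-defined by Cauchy--Schwarz in $l^2$ (here the factorial decay of signature terms from Proposition \ref{!decay} is what guarantees $T(X)_{0,t} \in H$), confirming that the enumeration of nonzero coordinates of $T$ is genuinely exhaustive (no word outside $\{\boldsymbol{e}_i\boldsymbol{\epsilon}^{\omega}_I,\,\boldsymbol{\epsilon}^{\xi}_j\boldsymbol{\epsilon}^{\omega}_I\}$ contributes, which follows from the recursive structure of the CDE (\ref{app:eqn:T_CDE})), and checking that the reindexing $w \leftrightarrow (\alpha,\beta)$ is an isometry onto the \emph{full} direct sum. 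These are the points requiring care, rather than the invocation of the abstract RKHS theory itself.
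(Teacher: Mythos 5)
Your proposal is correct and follows exactly the route the paper intends: the paper gives no explicit proof, simply invoking the classical feature-map characterisation of an RKHS (citing \cite{berlinet2011reproducing}), and your argument is precisely that characterisation spelled out, with the reindexing $w \leftrightarrow (\alpha_i,\beta_j)$ via the nonzero coordinates of $T(X)_{0,t}$ and the observation that components of $w$ orthogonal to the active subspace do not affect $F$ while only increasing the norm. The points you flag as requiring care (factorial decay ensuring $T(X)_{0,t}\in l^2$, exhaustiveness of the coordinate list from the recursive CDE, and the isometry of the reindexing) are exactly the right ones and are all covered by the material the paper establishes just before the statement.
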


\emph{Signature kernels}  \cite{salvi_goursat} are a class of \emph{universal kernels} on sequential data which have received attention in recent years thanks to their efficiency in handling path-dependent problems \cite{lemercier2021distribution, salvi2021higher, cochrane2021sk, lemercier2021siggpde, cirone2023neural, issa2023non, pannier2024path, manten2024signature}.
 
Just as signature kernels, the kernel associated to $T(X)_{0,t}$ can be explicitly written as the solution of a two-parameter CDE:
\begin{lemma}\label{app:lemma:kers}
Let $\K^{\Xe, \Ye}(s,t) := \sprod{T(X)_{0,s}}{T(Y)_{0,t}}_{l^2}$ then 
\begin{equation} \begin{aligned}
    \K^{\Xe, \Ye}(s,t) = & 
    ~ \sprod{X_0}{Y_0} + \sprod{\xi^{\Xe}_s}{\xi^{\Ye}_t }
    + \int_{\eta =0 }^s \int_{\tau=0}^t
        \K^{\Xe, \Ye}(\eta,\tau) \sprod{d\omega^{\Xe}_{\eta}}{d\omega^{\Ye}_{\tau}}
\end{aligned} \end{equation}
or, directly in terms of Signature, also
\begin{equation} \label{app:eqn:Ker_sign}
\begin{aligned}
\K^{\Xe, \Ye}(s,t) =& \sprod{X_0}{Y_0}\sprod{\Sig(\omega^{\Xe})_{0,s}}{\Sig(\omega^{\Ye})_{0,t}}
+ \int_{\eta =0 }^s \int_{\tau=0}^t
\sprod{\Sig(\omega^{\Xe})_{\eta,s}}{\Sig(\omega^{\Ye})_{\tau,t}}%_{l^2(\mathbb{W}_{d_{\omega}})}
    \sprod{d\xi^{\Xe}_{\eta}}{d\xi^{\Ye}_{\tau}}%_{\R^{d_{\xi}}}
\end{aligned}\end{equation}
\end{lemma}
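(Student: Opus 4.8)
The plan is to prove both displayed identities by exploiting the recursive CDE definition (\ref{app:eqn:T_CDE}) of the feature tensor $T(X)_{0,t}$ together with the orthonormal splitting of the alphabet $\mathcal{A}_{d_0,d_\omega,d_\xi}$ into the three disjoint families $\{\boldsymbol{e}_i\}$, $\{\boldsymbol{\epsilon}^{\xi}_j\}$, $\{\boldsymbol{\epsilon}^{\omega}_k\}$. First I would establish the integral (CDE) form. Substituting the defining equation for $T(X)_{0,s}$ and $T(Y)_{0,t}$ into $\K^{\Xe,\Ye}(s,t)=\sprod{T(X)_{0,s}}{T(Y)_{0,t}}_{l^2}$ and expanding bilinearly produces nine pairings. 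Because the three families of letters are mutually orthonormal, and every word in the recursive term $\int_0^s T(X)_{0,\eta}\,d\omega^{\Xe,k}_\eta\otimes\boldsymbol{\epsilon}^{\omega}_k$ ends in an $\omega$-letter while the $\boldsymbol{e}_i$- and $\boldsymbol{\epsilon}^{\xi}_j$-terms carry none, all six cross pairings vanish. The three surviving diagonal pairings give, respectively, $\sprod{X_0}{Y_0}$ from the $\boldsymbol{e}$-terms, $\sprod{\xi^{\Xe}_s}{\xi^{\Ye}_t}$ from the $\boldsymbol{\epsilon}^{\xi}$-terms, and — using the coproduct property $\sprod{A\otimes\boldsymbol{\epsilon}^{\omega}_k}{B\otimes\boldsymbol{\epsilon}^{\omega}_{k'}}=\sprod{A}{B}\,\delta_{kk'}$ together with bilinearity of the Riemann–Stieltjes integral — the double integral $\int_0^s\int_0^t \K^{\Xe,\Ye}(\eta,\tau)\sprod{d\omega^{\Xe}_\eta}{d\omega^{\Ye}_\tau}$ from the recursive terms. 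This is exactly the first identity.

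For the explicit signature form I would instead expand the $l^2$ product directly over the basis words of $\mathbb{W}_{d_0,d_\omega,d_\xi}$, using the closed expressions for the nonzero coordinates of $T(X)_{0,s}$ computed immediately before the lemma. Since every nonzero word is rooted either at a single $\boldsymbol{e}_i$ or at a single $\boldsymbol{\epsilon}^{\xi}_j$ and then carries an arbitrary $\omega$-suffix $I\in\mathbb{W}_{d_\omega}$, the product splits into an $\boldsymbol{e}$-rooted sum and a $\boldsymbol{\epsilon}^{\xi}$-rooted sum. The former collapses to $\sum_i X_0^i Y_0^i \sum_I \Sig(\omega^{\Xe})_{0,s}^I\Sig(\omega^{\Ye})_{0,t}^I=\sprod{X_0}{Y_0}\sprod{\Sig(\omega^{\Xe})_{0,s}}{\Sig(\omega^{\Ye})_{0,t}}$, while the latter, after interchanging the finite sums over $j$ and $I$ with the integrals, yields $\int_0^s\int_0^t \sprod{\Sig(\omega^{\Xe})_{\eta,s}}{\Sig(\omega^{\Ye})_{\tau,t}}\sprod{d\xi^{\Xe}_\eta}{d\xi^{\Ye}_\tau}$, the empty-suffix case $I=()$ being absorbed since $\Sig^{()}\equiv 1$ and $\xi^{\Xe,j}_s=\int_0^s d\xi^{\Xe,j}_\eta$. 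Summing the two recovers (\ref{app:eqn:Ker_sign}).

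The main obstacle is analytic rather than algebraic: justifying that the $l^2$-inner product commutes with the Riemann–Stieltjes integration against $d\omega$ and $d\xi$, and that the word-indexed sums $\sum_I \Sig(\omega^{\Xe})^I\Sig(\omega^{\Ye})^I$ converge and may be reordered (Fubini). Both follow from the factorial decay bound of Proposition \ref{!decay}, which controls the norm of the $k$-th signature level by $\norm{\omega^{\Xe}}_{1;[s,t]}^k/k!$ and is therefore summable, uniformly over the compact set $\bK$; this uniform control makes $T(\cdot)_{0,t}$ an honest continuous $l^2$-valued map (as already noted in the sub-tensor discussion) and legitimizes every interchange. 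A final cross-check I would carry out is that the explicit form (\ref{app:eqn:Ker_sign}) solves the integral equation of the first identity, via the signature recursion $\Sig(\omega)^{Ik}_{\eta,s}=\int_\eta^s \Sig(\omega)^I_{\eta,r}\,d\omega^k_r$; this reconciles the two displays and confirms that no term has been dropped in either derivation.
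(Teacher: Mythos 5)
Your proposal is correct and follows the same route as the paper's (one-line) proof: the first identity by substituting the defining CDE (\ref{app:eqn:T_CDE}) into the inner product and using orthogonality of the three letter families, the second by summing the products of the explicit nonzero coordinates of $T(X)_{0,s}$ and $T(Y)_{0,t}$ listed just before the lemma. The extra care you take with the vanishing cross-pairings, the absorption of the $I=()$ term, and the factorial-decay justification of the interchanges simply fills in details the paper leaves implicit.
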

\begin{proof}
    The first expression follows immediately from (\ref{app:eqn:T_CDE}) the second one by summing the products of $T(X)_{0,s}$'s and $T(Y)_{0,t}$'s entries given above.
\end{proof}

\begin{definition}
    Define the space $\Hs^{\tiny (\cdot)_0,\omega,\eta}_{[0,1]} \subseteq C^0(\bX \times [0,1];\R)$ as the space of functions of form
    \begin{equation} \begin{aligned} \label{app:eqn:RKHS_F_[0,1]}
            (X,t) \mapsto & \sum_{i=1}^{d_0} X_0^i \sprod{ \alpha_i}{\Sig(\omega^{\Xe})_{0,t}}
            + \sum_{j=1}^{d_{\xi}}  \int_0^t \sprod{\beta_j}{\Sig(\omega^{\Xe})_{s,t}} d\xi^{\Xe, j}_s
    \end{aligned} \end{equation}
    for $\alpha_i, \beta_j \in l^2(\mathbb{W}_{d_{\omega}})$.
    Thus for all $t \in [0,1]$ and $F \in \Hs^{\tiny (\cdot)_0,\omega,\eta}_{[0,1]}$ it holds $F(\cdot, t) \in \Hs^{\tiny (\cdot)_0,\omega,\eta}_t$.
\end{definition}

\subsubsection{Linear maps on $Z^{\Xe}_t$ are close to the RKHS}

The following proposition will show how linear maps on $Z^{\Xe}_t$ cannot be more expressive than elements of the RKHS $\Hs^{\tiny (\cdot)_0,\omega,\eta}_{[0,1]}$ since their closure is in the closure of $\Hs^{\tiny (\cdot)_0,\omega,\eta}_{[0,1]}$.
In this precise sense these spaces act like upper bounds to expressiveness.

\begin{proposition}\label{app:prop:H_dense_Z}
    Assume $\bX$ compact. Consider fixed  the gates and $A_i \in \R^{N \times N}, B \in \R^{N \times d_{\xi}}$ and $C \in \R^{N \times d_0}$. Consider a linear readout $v \in \R^N$. For any $\epsilon > 0$ there exist choices of $\alpha_i, \beta_j \in l^2(\mathbb{W}_{d_{\omega}})$ such that 
    \begin{equation}
        \sup_{(X,t) \in \bX \times [0,1]} | \sprod{v}{Z^{\Xe}_t} - F(X,t)| \leq \epsilon
    \end{equation}
    where $F \in \Hs^{\tiny (\cdot)_0,\omega,\eta}_{[0,1]}$. In other words, linear maps on the $Z^{\Xe}_t$ are in the uniform closure of $\Hs^{\tiny (\cdot)_0,\omega,\eta}_{[0,1]}$.
\end{proposition}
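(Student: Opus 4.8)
The plan is to read the desired RKHS element directly off the signature expansion of $Z^{\Xe}_t$. Applying the linear readout $v$ to the expression (\ref{app:eqn:linear_on_sig}) of Proposition \ref{app:prop:Z_exp} and interchanging the pairing with the sums gives
\begin{equation*}
\sprod{v}{Z^{\Xe}_t} = \sum_{i=1}^{d_0} X_0^i \sum_{I \in \mathbb{W}_{d_\omega}} \sprod{v}{A_I C_i}\, \Sig(\omega^{\Xe})_{0,t}^I + \sum_{j=1}^{d_\xi} \sum_{I \in \mathbb{W}_{d_\omega}} \sprod{v}{A_I B_j} \int_0^t \Sig(\omega^{\Xe})_{s,t}^I\, d\xi^{\Xe,j}_s .
\end{equation*}
This already has the shape (\ref{app:eqn:RKHS_F_t}) of the RKHS characterization in Proposition \ref{app:prop:RKHS_characterization}, with the natural candidate coefficient sequences $\alpha_i := (\sprod{v}{A_I C_i})_{I}$ and $\beta_j := (\sprod{v}{A_I B_j})_{I}$ indexed by $I \in \mathbb{W}_{d_\omega}$. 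If these sequences were in $l^2(\mathbb{W}_{d_\omega})$ we would be done with equality; the whole point of the proposition, and the source of the difficulty, is that in general they are \emph{not}.

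The obstacle is precisely this square-summability failure. Since $A_I = A_{i_n}\cdots A_{i_1}$ we only control the entries geometrically, $|\sprod{v}{A_I C_i}| \leq \norm{v}\,\norm{C_i}\, M_A^{|I|}$ with $M_A := \max_i \norm{A_i}_{\mathrm{op}}$, and because there are $d_\omega^k$ words of length $k$ the norm of the depth-$k$ block of $\alpha_i$ can grow like $(d_\omega^{1/2} M_A)^k$, which diverges. The remedy is truncation: fix a depth $M$ and let $\alpha_i^{(M)}, \beta_j^{(M)}$ agree with $\alpha_i, \beta_j$ on words of length $\leq M$ and vanish beyond. These have finite support, hence lie in $l^2(\mathbb{W}_{d_\omega})$, so by Proposition \ref{app:prop:RKHS_characterization} the associated functional $F^{(M)}$ is a genuine element of $\Hs^{\tiny (\cdot)_0,\omega,\eta}_{[0,1]}$, and the difference $\sprod{v}{Z^{\Xe}_t} - F^{(M)}(X,t)$ is exactly the tail of the two series over words with $|I| > M$.

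The remaining step is to bound this tail uniformly in $(X,t) \in \bK \times [0,1]$, which is where the factorial decay of Proposition \ref{!decay} defeats the geometric growth above. Grouping the tail by word length and applying Cauchy--Schwarz within each level $k$ separates the coefficient block (of norm $\leq \norm{v}\norm{C_i}(d_\omega^{1/2}M_A)^k$, resp.\ with $B_j$) from the signature block; Proposition \ref{!decay} bounds the latter by $L_\omega^k/k!$ for the $\alpha$-part, and, after pulling the $d\xi$-integral inside the $l^2$-norm via Minkowski's inequality, by $L_\omega^k L_\xi / k!$ for the $\beta$-part, where $L_\omega := \sup_{X\in\bK}\norm{\omega^{\Xe}}_{1;[0,1]}$ and $L_\xi := \sup_{X\in\bK}\norm{\xi^{\Xe}}_{1;[0,1]}$ are finite by compactness of $\bK$ and continuity of the gates (which also bound $\sup_{X\in\bK}|X_0|$). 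Each level thus contributes at most $C\, r^k/k!$ for constants $C,r$ independent of $M$, so the whole residual is dominated by $C\sum_{k>M} r^k/k!$, the tail of a convergent exponential series, which drops below $\epsilon$ for $M$ large enough. The main obstacle is genuinely the $l^2$-failure of the candidate coefficients, forcing the statement to be about uniform closure rather than exact membership; everything after the truncation is a routine factorial-versus-geometric tail estimate.
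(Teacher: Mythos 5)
Your proof is correct and follows essentially the same route as the paper's: truncate the signature expansion of $\sprod{v}{Z^{\Xe}_t}$ at word length $M$ to obtain finite-support (hence $l^2$) coefficients $\alpha_i^I = v^{\top}A_I C_i\,\mathbb{I}(|I|<M)$, $\beta_j^I = v^{\top}A_I B_j\,\mathbb{I}(|I|<M)$, and control the tail uniformly over $\bK\times[0,1]$ by playing the factorial decay of the signature against the geometric growth of the coefficients, using compactness and continuity of the gates. The only cosmetic difference is that you bound each word-length level via Cauchy--Schwarz in $l^2$ (giving a $d_\omega^{k/2}$ factor) where the paper sums term-by-term (giving $d_\omega^{k}$); both yield a convergent exponential tail.
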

\begin{proof}
    Using (\ref{app:eqn:linear_on_sig}) we see that $\sprod{v}{Z^{\Xe}_t}$ is a linear map on $T(X)_{0,t}$ with coefficients 
    \[
    v^{\top}A_IB_j \quad v^{\top}A_IC_i
    \]
    using Cauchy-Schwartz it's moreover easy to see the existence of a constant $\lambda \geq 0$ such that for all $I,i,j$ one has
    \[
    |v^{\top}A_IB_j| \leq \lambda^{|I|} \quad |v^{\top}A_IC_i| \leq \lambda^{|I|}.
    \]

    Since $|\Sig(\omega)^I_{s,t}| \leq \frac{1}{|I|!}\norm{\omega}_{1-var,[s,t]}^{|I|}$ we have that, given an integer $M \geq 0$, the bound
    \begin{align*}
        R_M(t) &:= \left| \sum_{i=1}^{d_0}\sum_{|I| \geq M} v^{\top} A_I C_i \hspace{3pt} X_0^i\Sig(\omega^{\Xe})_{0,t}^I  +
            \sum_{j=1}^{d_{\xi}} \sum_{|I| \geq M}  v^{\top} A_I B_{j}  \int_0^t  \Sig(\omega^{\Xe})_{s,t}^I d\xi^{\Xe, j}_s
        \right| 
        \\
        & \leq (\norm{X_0}_1 + \norm{\xi^{\Xe}_t}_1) 
        \sum_{m=M}^{\infty} \frac{\lambda^m d_{\omega}^m \norm{\omega^{\Xe}}_{1-var,[0,t]}^{m}}{m!} 
        \\
        & \leq (\norm{X_0}_1 + \norm{\xi^{\Xe}_t}_1) 
        \sum_{m=M}^{\infty} \frac{\lambda^m d_{\omega}^m \norm{\omega^{\Xe}}_{1-var,[0,1]}^{m}}{m!} \leq K \sum_{m=M}^{\infty} \frac{(\lambda d_{\omega} K)^{m}}{m!}
    \end{align*}
    where $K \geq 0$ is a constant which must exist by compactness of $\bX$ and continuity of the gates.
    Since $K \sum_{m=M}^{\infty} \frac{(\lambda d_{\omega} K)^{m}}{m!}$ is just the tail of the taylor expansion of $Ke^{\lambda d_{\omega} K}$ there must be an $M$ such that $\sup_{t \in [0,1]} R_M(t) \leq \epsilon$. But then the choice 
    \[
    \alpha_i^I := v^{\top}A_IC_i ~ \mathbb{I}(|I|<M)
    \quad
    \beta_j^I := v^{\top}A_IB_j ~ \mathbb{I}(|I|<M)
    \]
    suffices for the required bound.
    
\end{proof}

\subsubsection{Uniform closure of the RKHS}
    Now that we have established the theoretical interest of the $\Hs^{\tiny (\cdot)_0,\omega,\eta}_{[0,1]}$ we proceed to characterize which maps $(X,t) \to \R$ can be uniformly approximated through them.

    \begin{proposition}
        Fix a compact input set $\bX$ and continuous gates $(\cdot)_0, \omega, \xi$ with $\omega^{\Xe,1}_t \equiv t$ and $\omega^{\Xe,2}_t \equiv t^2$. 
        For any $\epsilon>0$ and any
        \begin{equation}\label{app:eqn:full_expressive_family}
            F \in \left\{
                (X,t) \mapsto \Psi(\omega^{\Xe}_{[0,t]}) \cdot X_0 + \int_0^t \Phi(\omega^{\Xe}_{[s,t]}) \cdot d\xi^{\Xe}_{s} 
            \right\}
        \end{equation}
         where $\Psi \in C^{0}(C_{1,0};\R^{d_0})$ and $\Phi \in C^{0}(C_{1,0};\R^{d_{\xi}})$, there exist a $G \in \Hs^{\tiny (\cdot)_0,\omega,\eta}_{[0,1]}$ such that
        \begin{equation}
            \sup\limits_{(X,t) \in \bX \times [0,1]} |F(X,t) - G(X,t)| \leq \epsilon
        \end{equation}
    \end{proposition}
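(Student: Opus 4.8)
The plan is to match the target family in (\ref{app:eqn:full_expressive_family}) against the RKHS elements of the form (\ref{app:eqn:RKHS_F_[0,1]}) \emph{term by term}, thereby reducing the whole statement to a single uniform approximation problem on a compact set of \emph{restricted} paths. The bridge is the restriction identity $\Sig(\omega^{\Xe})_{s,t} = \Sig(\omega^{\Xe}_{[s,t]})_{0,1}$ from Definition~\ref{app:def:restriction}, which lets me reinterpret each $\sprod{\alpha_i}{\Sig(\omega^{\Xe})_{0,t}}$ and $\sprod{\beta_j}{\Sig(\omega^{\Xe})_{s,t}}$ appearing in $G$ as \emph{linear functionals on the signature of the restricted path}. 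Consequently the generic continuous maps $\Psi$ and $\Phi$ in $F$, evaluated at $\omega^{\Xe}_{[0,t]}$ and $\omega^{\Xe}_{[s,t]}$ respectively, need only be approximated by such linear signature functionals, after which the two sums assemble into an element of $\Hs^{\tiny (\cdot)_0,\omega,\eta}_{[0,1]}$ with coefficients $\alpha_i,\beta_j$ that are finitely supported (hence in $l^2(\mathbb{W}_{d_{\omega}})$).

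First I would introduce the set $\mathcal{R} := \{\omega^{\Xe}_{[s,t]} : X \in \bK,\ 0 \le s \le t \le 1\} \subseteq C_{1,0}([0,1];\R^{d_{\omega}})$ and argue it is compact, as the image of the compact set $\bK \times \{(s,t) : s \le t\}$ under the restriction map, which I must check is continuous into the $1$-variation topology (so that signature coordinates are genuinely continuous on $\mathcal{R}$). On $\mathcal{R}$ I would then run the Stone–Weierstrass argument underlying Theorem~\ref{universal_approx}: the linear span of the coordinate maps $\gamma \mapsto \Sig(\gamma)^{(I)}_{0,1}$ is a subalgebra (via the shuffle identity) containing the constants (the empty word). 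The essential hypothesis is \emph{point separation}, and this is exactly where $\omega^{\Xe,1}_t \equiv t$ and $\omega^{\Xe,2}_t \equiv t^2$ are used: since the restricted paths do not literally satisfy the time-augmentation hypothesis of Theorem~\ref{universal_approx} (their leading coordinate is clamped outside $[s,t]$), I invoke Lemma~\ref{app:prop:no_tree_like} instead, which guarantees that $\Sig(\omega^{\Xe})_{s,t} = \Sig(\omega^{\Ye})_{s',t'}$ forces $\omega^{\Xe}_{[s,t]} = \omega^{\Ye}_{[s',t']}$, so distinct elements of $\mathcal{R}$ have distinct signatures. Stone–Weierstrass then yields, for any $\epsilon' > 0$, coefficients with $\sup_{\gamma \in \mathcal{R}} |\Psi_i(\gamma) - \sprod{\alpha_i}{\Sig(\gamma)_{0,1}}| \le \epsilon'$ and likewise for each component $\Phi_j$.

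Finally I would build $G$ from exactly these $\alpha_i, \beta_j$ and bound the two contributions separately. For the initial-value term, $|\sum_i (\Psi_i(\omega^{\Xe}_{[0,t]}) - \sprod{\alpha_i}{\Sig(\omega^{\Xe})_{0,t}}) X_0^i| \le \epsilon' \sum_i |X_0^i|$, uniformly bounded on $\bK$ by continuity of the $(\cdot)_0$-gate. For the integral term I would pull the uniform estimate through the integral against $d\xi^{\Xe}$, giving a bound $\le \epsilon' \sum_j \norm{\xi^{\Xe,j}}_{1-var,[0,1]}$, again uniformly controlled on $\bK$ by continuity of the $\xi$-gate. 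Summing and taking $\epsilon'$ small relative to these finite suprema gives the desired uniform error $\le \epsilon$. I expect the main obstacle to be the compactness/continuity step for $\mathcal{R}$ in $1$-variation, ensuring the Stone–Weierstrass approximation is uniform jointly in $X$ and $t$ (and in $s$ for the integral term); by contrast the conceptually central point-separation is already supplied by Lemma~\ref{app:prop:no_tree_like}, and the error bookkeeping is routine once boundedness of the gates on $\bK$ is in hand.
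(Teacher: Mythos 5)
Your proposal is correct and follows essentially the same route as the paper's proof: compactness of the image of the restriction map $(X,s,t)\mapsto \omega^{\Xe}_{[s,t]}$, point separation via Lemma~\ref{app:prop:no_tree_like}, the identity $\Sig(\omega^{\Xe})_{s,t}=\Sig(\omega^{\Xe}_{[s,t]})_{0,1}$ to recast the RKHS elements as linear signature functionals of restricted paths, Stone--Weierstrass on that compact set, and uniform bounds on $\norm{X_0}$ and the $1$-variation of $\xi^{\Xe}$ to push the pointwise error through the sum and the integral. Your added care about the clamped leading coordinate of the restricted paths (hence invoking the lemma directly for separation rather than the time-augmented universality theorem) is a slightly more explicit version of what the paper does implicitly.
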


    \begin{proof}
    Note first that the map 
    \[
    \bX \times [0,1] \times [0,1] \to C_{1,0}([0,1];\R^{d_{\omega}}) \quad (X,s,t) \mapsto \omega^{\Xe}_{[s,t]}
    \]
    is a continuous map from a compact space, thus the image must be compact too. 
    Moreover by Prop. \ref{app:prop:no_tree_like} the Signature separates the points in this image.
    Since any $G$ as above has form
    \begin{align*}
        G(X,t) & = \sum_{i=1}^{d_0} X_0^i \sprod{ \alpha_i}{\Sig(\omega^{\Xe})_{0,t}}
            + \sum_{j=1}^{d_{\xi}}  \int_0^t \sprod{\beta_j}{\Sig(\omega^{\Xe})_{s,t}} d\xi^{\Xe, j}_s
            \\
            & = \sum_{i=1}^{d_0} X_0^i \sprod{ \alpha_i}{\Sig(\omega^{\Xe}_{[0,t]})_{0,1}}
            + \sum_{j=1}^{d_{\xi}}  \int_0^t \sprod{\beta_j}{\Sig(\omega^{\Xe}_{[s,t]})_{0,1}} d\xi^{\Xe, j}_s
    \end{align*}
    the proof follows from the uniform density on compact sets of linear functionals on the (truncated) Signature (Thm. \ref{universal_approx}), by also uniformly bounding thanks to compactness and continuity the norms of $X_0$ and $\xi^{\Xe}_1$.
    \end{proof}

    \begin{remark}
     The specific restriction of $\omega$ to subsets of $[0,1]$ is a crucial part of the result.
     The family of approximable maps \emph{does not} include \emph{all} path-to-path causal\footnote{A \emph{causal} map is one which does not "look in the future" \emph{cf.} Appendix \ref{app:sect:path-path}.}
     functions $t \mapsto Y_t^{\Xe}$ but a subset of them, of type $t \mapsto Y_t^{\Xe} := \Psi(\omega^{\Xe}_{[0,t]})$, satisfying the specific \emph{time-homogeneity} specified by the form of the restriction, akin to that in \cite{li2022approximation}.
 \end{remark}

    \subsubsection{Generic Weights are fully expressive}

    We have seen how linear maps on $Z^{\Xe}_t$ are in the uniform closure of $\Hs^{\tiny (\cdot)_0,\omega,\eta}_{[0,1]}$, and we have explicitly characterized this closure. 
    It is then natural to ask "how much" of this closure the $Z^{\Xe}_t$ are able to "explore". 
    The present section not only shows that the $Z^{\Xe}_t$ "explore" all the closure, but also that a generic choice of weights is enough to eventually do this with high probability.

    The fact that these maps are "universal" in the above sense is not surprising, since it is well known that Linear CDEs are universal for path-to-point tasks \emph{cf.} \cite{kidger2022neural}, what is surprising is that this universality can be achieved probabilistically with one of the \emph{standard} parametrizations used in ML practice (LeCun)
    \footnote{It can be proved, using the results of \cite{words_matrices}, that the sampling measure does not have to be Gaussian if it satisfies certain moment requirements.}.

     \begin{theorem}\label{app:thm:density_rkhs}
        Fix $\mathbb{X}$ compact and $\epsilon>0$. For all $F \in \Hs^{\tiny (\cdot)_0,\omega,\eta}_{[0,1]}$ there exist a choice of hidden dimension $N \geq 1$ and parameters $v \in \R^N, A_i \in \R^{N \times N}, B \in \R^{N \times d_{\xi}}, C \in \R^{N \times d_0}$ such that 
        \begin{equation}\label{app:eqn:dense_RKHS}
            \sup\limits_{(X,t) \in \bX \times [0,1]} |F(X,t) - \sprod{v}{Z^{\Xe}_{t}}| \leq \epsilon
        \end{equation}

        Moreover generic weight choices suffice with high probability, in the sense that under LeCun initialization 
        \[
        [A_j]_{n,n'} \iid \mathcal{N}(0,\frac{1}{N}) \quad [C]_{n,i}, [B]_{n,j} \iid \mathcal{N}(0,1)
        \]
        the following holds
        \[
        \lim\limits_{N \to \infty} \mathbb{P}\big[
        \exists v \in \R^N : \text{  (\ref{app:eqn:dense_RKHS}) holds } 
        \big] = 1
        \]
    \end{theorem}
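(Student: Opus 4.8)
The plan is to reduce the approximation of an RKHS element $F$ to matching finitely many signature coefficients, and then to realise those coefficients as inner products of a trained readout $v$ against an \emph{almost-orthogonal} family of vectors generated by the random matrices $A_i, B, C$. Note that, since the target $F$ is itself a linear functional on the feature map $T(X)_{0,t}$ (Prop.~\ref{app:prop:RKHS_characterization}), and so is $\sprod{v}{Z^{\Xe}_t}$ (Prop.~\ref{app:prop:Z_exp}), the whole problem is a question of matching coefficients on a common feature vector.

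\emph{Step 1 (truncation).} By Prop.~\ref{app:prop:RKHS_characterization}, $F$ is determined by sequences $\alpha_i, \beta_j \in l^2(\mathbb{W}_{d_\omega})$. Using the factorial decay of signature terms (Prop.~\ref{!decay}) together with the uniform bounds on $\norm{X_0}$, $\norm{\xi^{\Xe}}$ and $\norm{\omega^{\Xe}}_{1-var}$ afforded by compactness of $\bX$ and continuity of the gates --- exactly the tail estimate already carried out in Prop.~\ref{app:prop:H_dense_Z} --- I fix a word length $M$ so that the truncation $F_M$ retaining only $|I|\le M$ satisfies $\sup|F - F_M|\le \epsilon/3$. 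It therefore suffices to reproduce the finitely many numbers $\{\alpha_i^I,\beta_j^I : |I|\le M\}$ while keeping the $|I|>M$ contribution of $\sprod{v}{Z^{\Xe}_t}$ small.

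\emph{Step 2 (reduction to a linear system).} By Prop.~\ref{app:prop:Z_exp} the functional $\sprod{v}{Z^{\Xe}_t}$ has coefficients $v^\top A_I C_i$ and $v^\top A_I B_j$ on the entries of $T(X)_{0,t}$. So I seek matrices and a readout with
$$ v^\top A_I C_i = \alpha_i^I, \qquad v^\top A_I B_j = \beta_j^I \qquad \text{for all } |I|\le M. $$
Collecting $\{A_I C_i, A_I B_j\}_{|I|\le M}$ as vectors in $\R^N$, this is a finite linear system in $v$, solvable \emph{exactly} whenever those vectors are linearly independent; this already yields the deterministic existence statement for $N$ at least the number of such vectors (generic $A_i, B, C$ make them independent, reflecting the embedding of the free algebra into matrices). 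The probabilistic claim then strengthens this by controlling the conditioning of the system under LeCun initialization.

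\emph{Step 3 (almost-orthogonality --- the crux).} For the random case I would show that, as $N\to\infty$, the normalised Gram matrix of $\{A_I C_i, A_I B_j\}_{|I|\le M}$ concentrates to the identity with probability tending to one. Since the $A_j$ are independent $\mathcal{N}(0,1/N)$ matrices, each factor approximately preserves Euclidean norm ($\E\norm{Av}^2=\norm v^2$), giving $\norm{A_I C_i}^2/N \to 1$; and for distinct words or distinct source columns the intervening independent factors force $\sprod{A_I C_i}{A_{I'} C_{i'}}/N\to 0$. Controlling these products of Gaussian matrices indexed by words that may \emph{share letters} is the main obstacle, and is precisely the content of the results of \cite{words_matrices}, which I would invoke. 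On the event that the Gram matrix is well-conditioned I solve the system of Step~2 exactly; the minimal-norm solution then scales as $\norm v = O(N^{-1/2})\,\big(\sum_i\norm{\alpha_i}^2+\sum_j\norm{\beta_j}^2\big)^{1/2}$.

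\emph{Step 4 (tail control and conclusion).} It remains to verify that the $|I|>M$ terms of $\sprod{v}{Z^{\Xe}_t}$ do not spoil the bound. Here the scalings conspire: $\norm v = O(N^{-1/2})$ exactly compensates the $O(N^{1/2})$ norms of the Gaussian columns $C_i, B_j$, while the spectral norms obey $\norm{A_I}\le\prod_m\norm{A_{i_m}}$ with $\norm{A_j}\to 2$ by standard random-matrix bounds. Hence the effective coefficient bound $\lambda$ appearing in the tail estimate of Prop.~\ref{app:prop:H_dense_Z} stays bounded \emph{uniformly in $N$}, and the remainder is dominated by $\sum_{m>M}(\lambda\, d_\omega\, K)^m/m!$, the tail of a convergent exponential, which is $\le\epsilon/3$ after enlarging $M$ if necessary. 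Summing the truncation error, the vanishing system-solving error, and this tail gives the required $\epsilon$-bound on an event whose probability tends to one, establishing both the existence and the high-probability statements.
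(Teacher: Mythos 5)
Your proposal is correct and follows essentially the same route as the paper's probabilistic proof: truncate $F$ at word length $M$ via factorial decay, realise the truncated coefficients through the almost-orthogonal family $\{A_I C_i,\, A_I B_j\}$ of word-indexed products of random matrices, and control the $|I|>M$ tail. The only cosmetic differences are that the paper takes the explicit readout $v=\frac{1}{N}\left(\sum_{i,|I|\le M}\alpha_i^I A_I C_i+\sum_{j,|I|\le M}\beta_j^I A_I B_j\right)$ and bounds the coefficient mismatch in $L^2$ via the moment estimates of \cite{cirone2023neural} followed by Markov's inequality, rather than exactly inverting the Gram matrix on a well-conditioned event, and it supplements the argument with a separate deterministic construction in the truncated tensor algebra where the vectors are exactly orthonormal and $A_I=0$ for $|I|>M$, so no tail control is needed there.
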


    We propose two proofs, the first one of a deterministic character concerns the first claim in the theorem, the second one is probabilistic and concerns the whole result. The deterministic proof follows the same arguments employed by \cite{kidger2022neural} and is included to highlight the main idea of the probabilistic result, which reduces to a "spin" on the central argument of this proof.

    \begin{proof}[Deterministic Proof.]
    
        Any $F \in \Hs^{\tiny (\cdot)_0,\omega,\eta}_{[0,1]}$ has form
        \begin{equation}
                F(X,t) = \sum_{i=1}^{d_0} X_0^i \sprod{ \alpha_i}{\Sig(\omega^{\Xe})_{0,t}}
                + \sum_{j=1}^{d_{\xi}}  \int_0^t \sprod{\beta_j}{\Sig(\omega^{\Xe})_{s,t}} d\xi^{\Xe, j}_s
        \end{equation}
        for fixed $\alpha_i, \beta_j \in l^2(\mathbb{W}_{d_{\omega}})$.
        Consider an integer $M \geq 0$ such that
        \begin{equation}\label{app:thm:det_epsilon}
            \sup\limits_{(x,t) \in \bX \times [0,1]} |F(X,t) - \sum_{i=1}^{d_0} X_0^i \sprod{ \pi_M \alpha_i}{\Sig(\omega^{\Xe})_{0,t}}
                - \sum_{j=1}^{d_{\xi}}  \int_0^t \sprod{\pi_M \beta_j}{\Sig(\omega^{\Xe})_{s,t}} d\xi^{\Xe, j}_s| \leq \epsilon
        \end{equation}
        where $\pi_M$ is the truncation at length $M$.

        Fix $d = d_0 + d_{\omega} + d_{\xi}$. Consider $\mu(M,d) \in \N$ such that $\R^{\mu(M,d)} \simeq T^M(\R^{d})$. 
        We are going to write $e_I \in \R^{\mu(M,d)}$ to mean the image of $e_I \in T^M(\R^d)$ through this identification.
        Note that $(\cdot) \otimes_M e_k : T^M(\R^d) \to T^M(\R^d) $ is a linear map, it does then correspond to a matrix $\Lambda_k \in \R^{\mu(M,d) \times \mu(M,d)}$.
        Write $\varepsilon^{\xi}_j := e_{d_0 + j}$ for $j = 1, \dots, d_{\xi}$ and $\varepsilon^{\omega}_k := e_{d_0 + d_{\xi} + k}$ for $k = 1, \dots, d_{\omega}$.
        Then the solution to
        \begin{equation}\label{eqn:truncated_telescope_cde}
             \R^{\mu(M,d)} \ni \tilde{Z}_t = \sum_{i=1}^{d_0} X_0^i e_i + \sum_{j=1}^{d_{\xi}} \xi^{\Xe, j}_t \varepsilon^B_j 
            + \sum_{k=1}^{d_{\omega}} \int_0^t  \Lambda_{d_0 + d_{\xi} + k} \tilde{Z}_s d\omega^{\Xe, k}_s
        \end{equation}
           
        is the object in $\R^{\mu(M,d)}$ corresponding to the truncated tensor $\pi_M(T(X)_{0,t})$.

        This $\tilde{Z}_t$ is of the form $Z^{\Xe}_t$ with $N = \mu(M,d)$, $A_k = \Lambda_{d_0 + d_{\xi} + k}$, $B = \left[\varepsilon^{\xi}_1|\cdots|\varepsilon^{\xi}_{d_{\xi}} \right]$ and
        $C = \left[ e_1 | \cdots | e_{d_0} \right]$. 

        In particular note how these matrices are such that 
        \begin{equation}\label{app:eqn:orthogonality}
            e_J^T A_I C_i = \mathbb{I}(e_J = e_i \otimes \varepsilon^{\omega}_I),
            \quad
             e_J^T A_I B_j = \mathbb{I}(e_J = \varepsilon^{\xi}_j \otimes \varepsilon^{\omega}_I),
        \end{equation}
        since it holds that 
        \begin{equation}
            A_I C_i = e_i \otimes \varepsilon^{\omega}_I, \quad 
            A_I B_j = \varepsilon^{\xi}_j \otimes \varepsilon^{\omega}_I,
        \end{equation}
        and for all $|I| > M$ one has necessarily $A_I = 0$.

        Our strategy is that of using these equaities to create a vector $v \in \R^N$ corresponding to the $\pi_M \alpha_i$ and $\pi_M\beta_j $.
        Define the vector
        \begin{equation}
            v :=  \sum_{i=1}^{d_0}\sum_{|I|\leq M} \alpha_i^I ~ e_i \otimes \varepsilon^{\omega}_I + \sum_{j=1}^{d_{\omega}}\sum_{|I|\leq M} \beta_j^I ~\varepsilon^{\xi}_j \otimes \varepsilon^{\omega}_I \in \R^{\mu(M,d)}
        \end{equation}
        Then expanding $Z^{\Xe}_t$ as in (\ref{app:eqn:linear_on_sig}) and  using the equalities above one has
        \begin{align*}
            \sprod{v}{Z^{\Xe}_t} =& \sum_{i=1}^{d_0}\sum_{I} v^{\top}A_IC_i ~ X_0^i \Sig(\omega^{\Xe})_{0,t} 
            + \sum_{j=1}^{d_{\omega}}\sum_{I} v^{\top}A_IB_j ~  \int_0^t \Sig(\omega^{\Xe})_{s,t} d\xi^{\Xe, j}_s
            \\
            =&  \sum_{i=1}^{d_0} \sum_{|I|\leq M} \alpha_i^I X_0^i \Sig(\omega^{\Xe})_{0,t}
                + \sum_{j=1}^{d_{\xi}}\sum_{|I|\leq M} \beta_j^I \int_0^t \Sig(\omega^{\Xe})_{s,t} d\xi^{\Xe, j}_s
            \\
            =& \sum_{i=1}^{d_0} X_0^i \sprod{ \pi_M \alpha_i}{\Sig(\omega^{\Xe})_{0,t}}
                + \sum_{j=1}^{d_{\xi}}  \int_0^t \sprod{\pi_M \beta_j}{\Sig(\omega^{\Xe})_{s,t}} d\xi^{\Xe, j}_s
        \end{align*}
        proving that for such $v$ and $Z^{\Xe}$ it holds
        \begin{equation}
            \sup\limits_{(x,t) \in \bX \times [0,1]} |F(X,t) - \sprod{v}{Z^{\Xe}_t}| \leq \epsilon
        \end{equation}
    \end{proof}

    The crucial ingredient for the success of this proof is the possibility to recreate the space $T^M(\R^{d_0+d_{\omega}+d_{\xi}})$ as an euclidean space.
    To do this one needs $\mu(M,d_0+d_{\omega}+d_{\xi}) \sim (d_0+d_{\omega}+d_{\xi})^M$ orthogonal vectors and a way to express them using the matrices $A_i,B$ and $C$, the essential equations which capture this are given by (\ref{app:eqn:orthogonality}).

    The core idea of the following probabilistic proof of this same result is that of allowing for some error in (\ref{app:eqn:orthogonality}), so the idea is that of exhibiting only \emph{approximately} orthogonal vectors. 
    At the cost of losing exactness, one can leverage results of the Johnson-Lindenstrauss \cite{Dasgupta2003AnEP} type to find on the order of $\sim e^{\varepsilon^2 N}$ vectors in $\R^N$ orthogonal up to an $\varepsilon$ error, using random projections. 
    This idea in the context of Signature goes back to \cite{cuchiero2021expressive}, and allows for much smaller hidden dimensions.
   
    \begin{proof}(Probabilistic Proof)
    Any $F \in \Hs^{\tiny (\cdot)_0,\omega,\eta}_{[0,1]}$ has form
    \begin{equation}
            F(X,t) = \sum_{i=1}^{d_0} X_0^i \sprod{ \alpha_i}{\Sig(\omega^{\Xe})_{0,t}}
            + \sum_{j=1}^{d_{\xi}}  \int_0^t \sprod{\beta_j}{\Sig(\omega^{\Xe})_{s,t}} d\xi^{\Xe, j}_s
    \end{equation}
    for fixed $\alpha_i, \beta_j \in l^2(\mathbb{W}_{d_{\omega}})$.
    Consider an integer $M \geq 0$ such that
    \begin{equation}\label{app:thm:prob_epsilon}
        \sup\limits_{(x,t) \in \bX \times [0,1]} |F(X,t) - \sum_{i=1}^{d_0} X_0^i \sprod{ \pi_M \alpha_i}{\Sig(\omega^{\Xe})_{0,t}}
            - \sum_{j=1}^{d_{\xi}}  \int_0^t \sprod{\pi_M \beta_j}{\Sig(\omega^{\Xe})_{s,t}} d\xi^{\Xe, j}_s| \leq \epsilon
    \end{equation}
    where $\pi_M$ is the truncation at length $M$. 

    From \cite{cirone2023neural}[Appendix C] we know that 
    \begin{gather}
        \norm{\frac{1}{N} C_i^{\top}A_I^{\top}A_JC_j - \delta_{iI}^{jJ}}_{L^2} =  \mathcal{O}(\frac{1}{\sqrt{N}}) 2^{\frac{|I|+|J|}{2}} (|I| + |J|)!!
        \\
        \norm{\frac{1}{N} C_i^{\top}A_I^{\top}A_JB_j}_{L^2} =  \mathcal{O}(\frac{1}{\sqrt{N}}) 2^{\frac{|I|+|J|}{2}} (|I| + |J|)!!
        \\
        \norm{\frac{1}{N} B_i^{\top}A_I^{\top}A_JB_j - \delta_{iI}^{jJ}}_{L^2} =  \mathcal{O}(\frac{1}{\sqrt{N}}) 2^{\frac{|I|+|J|}{2}} (|I| + |J|)!!
    \end{gather}
    Our strategy is that of using these bounds to create a vector $v \in \R^N$ "acting" like the $\pi_M \alpha_i$ and $\pi_M\beta_j $.
    Define, noting that the $A_i,C_i,B_j$ depend on $N$, the vector
    \begin{equation}
        v^{\Ne} := \frac{1}{N} \left( \sum_{i=1}^{d_0}\sum_{|I|\leq M} \alpha_i^I ~ A_IC_i + \sum_{j=1}^{d_{\omega}}\sum_{|I|\leq M} \beta_j^I ~ A_IB_j \right)
    \end{equation}

    Then expanding $Z^{\Xe}_t$ as in (\ref{app:eqn:linear_on_sig})
    \begin{align*}
        R_M := & \norm{\sup\limits_{(X,t) \in \bX \times [0,1]} \left|\sprod{v^{\Ne}}{Z^{\Xe}_t} - \sum_{i=1}^{d_0} X_0^i \sprod{ \pi_M \alpha_i}{\Sig(\omega^{\Xe})_{0,t}}
            - \sum_{j=1}^{d_{\xi}}  \int_0^t \sprod{\pi_M \beta_j}{\Sig(\omega^{\Xe})_{s,t}} d\xi^{\Xe, j}_s \right| }_{L^2} 
            \\ \leq &
            \sum_{i=1}^{d_0}\sum_{|I|\leq M} \norm{(v^{\Ne})^{\top}A_IC_i - \alpha_i^I}_{L^2} \sup\limits_{(X,t) \in \bX \times [0,1]}|X_0^i\Sig(\omega^{\Xe})^I_{0,t}|
            \\ & +
            + \sum_{i=1}^{d_0}\sum_{|I|> M} \norm{(v^{\Ne})^{\top}A_IC_i}_{L^2} \sup\limits_{(X,t) \in \bX \times [0,1]}|X_0^i\Sig(\omega^{\Xe})^I_{0,t}|
            \\ & +
            \sum_{j=1}^{d_{\omega}}\sum_{|I|\leq M} \norm{(v^{\Ne})^{\top}A_IB_j - \beta_j^I}_{L^2} \sup\limits_{(X,t) \in \bX \times [0,1]}|\int_0^t Sig(\omega^{\Xe})^I_{s,t} d\xi^{\Xe, j}_s|
            \\ & +
            + \sum_{j=1}^{d_{\omega}}\sum_{|I|> M} \norm{(v^{\Ne})^{\top}A_IB_j}_{L^2} \sup\limits_{(X,t) \in \bX \times [0,1]}|\int_0^t Sig(\omega^{\Xe})^I_{s,t} d\xi^{\Xe, j}_s|
    \end{align*}

    Note how for $|I|\leq M$ one has 
    \[
    \norm{(v^{\Ne})^{\top}A_IC_i - \alpha_i^I}_{L^2} \leq  \mathcal{O}_M(\frac{1}{\sqrt{N}})
    \] 
    and that similarly for $|I|>M$
    \[
    \norm{(v^{\Ne})^{\top}A_IC_i}_{L^2} \leq  \mathcal{O}_M(\frac{1}{\sqrt{N}}) 2^{\frac{|I|}{2}}(M + |I|)!!
    \] 

    Which leads, thanks to the same bounds of \cite{cirone2023neural}[Appendix C], to 
    \begin{equation}
        R_M = \frac{1}{\sqrt{N}} \mathcal{O}_{M,\bX}(1)
    \end{equation}

    But then by Markov's inequality it holds that
    \begin{equation}
        \mathbb{P} \left[
    \sup\limits_{(X,t) \in \bX \times [0,1]} \left|\sprod{v^{\Ne}}{Z^{\Xe}_t} - \sum_{i=1}^{d_0} X_0^i \sprod{ \pi_M \alpha_i}{\Sig(\omega^{\Xe})_{0,t}}
            - \sum_{j=1}^{d_{\xi}}  \int_0^t \sprod{\pi_M \beta_j}{\Sig(\omega^{\Xe})_{s,t}} d\xi^{\Xe, j}_s \right| \leq \epsilon
    \right] \to 1
    \end{equation}

    and thus there must be a choice of $N, \{A_i\}, B, C$ such that the inequality holds, and we thus obtain using (\ref{app:thm:prob_epsilon})
        \[
         \sup\limits_{(X,t) \in \bX \times [0,1]} |F(X,t) - \sprod{v^{\Ne}}{Z^{\Xe}_t}| \leq 2\epsilon
        \]
        and we conclude by arbitrariness of $\epsilon$.
    
    \end{proof}

\subsection{The Diagonal Case}

Here we study the particular, but empirically important, case where the matrices $A_i$ are taken to be diagonal\footnote{It is equivalent to ask for them to be commuting.}.

What we'll discover is that the $Z^{\Xe}_t$ cannot differentiate between $\Sig(\omega^{\Xe})_{s,t}^{I}$ and other $\Sig(\omega^{\Xe})_{s,t}^{\sigma(I)}$ for any permutation $\sigma$ of the letters in the word $I$.

\subsubsection{Diagonal Expansion for $Z^{\Xe}_t$}

\begin{proposition} \label{app:prop:Z_exp_diag}
        For any choice of $V \in \R^{N \times d_{\omega}} , B \in \R^{N \times d_{\xi}}$ and $C \in \R^{N \times d_0}$, writing $A_i := \diag(V_i)$, the unique solution to 
        \begin{equation} \label{app:eqn:Z_CDE_diag} \begin{aligned}
            & dZ^{\Xe}_t = \sum_{i=1}^{d_{\omega}}  A_i Z^{\Xe}_t d\omega^{\Xe, i}_t  + B d\xi^{\Xe}_t
            \\
            & Z^{\Xe}_0 = CX_0 \in \R^N 
        \end{aligned} \end{equation}
        is given by 
        \begin{equation}
            Z^{\Xe}_t = e^{\diag(V\omega^{\Xe}_t)}CX_0 + \int_0^t e^{\diag(V(\omega^{\Xe}_t-\omega^{\Xe}_s))}Bd\xi^{\Xe}_s
        \end{equation}
        which can be expanded as
        \begin{equation} \label{app:eqn:linear_on_sig_diag}
        \begin{aligned}
            Z^{\Xe}_t 
            = & \sum_{i=1}^{d_0}\sum_{I \in \mathbb{W}_{d_{\omega}}} A^{sym}_I C_i ~ X_0^i\Sig(\omega^{\Xe})_{0,t}^{sym,I}  +
            \sum_{j=1}^{d_{\xi}} \sum_{I \in \mathbb{W}_{d_{\omega}}} A^{sym}_I B_{j}  \int_0^t  \Sig(\omega^{\Xe})_{s,t}^{sym,I} d\xi^{\Xe, j}_s \in \R^N
        \end{aligned} \end{equation}
        where
        \begin{equation}
        A^{sym}_I := \frac{1}{|I|!} \sum_{\sigma \in S_k} A_{\sigma(I)} = A_I
        \quad
        \Sig(\omega^{\Xe})_{s,t}^{sym, I}  := \frac{1}{|I|!} \sum_{\sigma \in S_k} \Sig(\omega^{\Xe})_{s,t}^{\sigma(I)}.
        \end{equation}

    \end{proposition}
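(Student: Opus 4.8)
The plan is to prove the two displayed identities in turn: first the closed-form matrix-exponential solution, and then its re-expansion over symmetrised signature coefficients. For the closed form I would exploit that diagonality makes the system \emph{decouple} across coordinates. Writing $Z^{\Xe}_t=(z^1_t,\dots,z^N_t)^\top$ and setting $g^n_t:=(V\omega^{\Xe}_t)_n=\sum_{i=1}^{d_\omega}V_{n,i}\,\omega^{\Xe,i}_t$, the $n$-th coordinate solves the \emph{scalar} linear CDE $dz^n_t=z^n_t\,dg^n_t+\sum_{j}B_{n,j}\,d\xi^{\Xe,j}_t$ with $z^n_0=(CX_0)_n$. Since $g^n$ is absolutely continuous the integrating factor $e^{-g^n_t}$ is legitimate, and scalar variation of constants gives $z^n_t=e^{g^n_t}(CX_0)_n+\int_0^t e^{g^n_t-g^n_s}\sum_j B_{n,j}\,d\xi^{\Xe,j}_s$ (using $g^n_0=0$). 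Stacking the coordinates and recognising that the diagonal matrix with $n$-th entry $e^{g^n_t}$ is exactly $e^{\diag(V\omega^{\Xe}_t)}$, and likewise that the kernel $e^{g^n_t-g^n_s}$ assembles into $e^{\diag(V(\omega^{\Xe}_t-\omega^{\Xe}_s))}$, yields the claimed formula. Equivalently one may verify it directly: differentiating, using the Leibniz rule on the parametrised integral together with $d\,e^{M_t}=(dM_t)\,e^{M_t}$ for $dM_t=\sum_i A_i\,d\omega^{\Xe,i}_t$ (valid because the diagonal family $\{M_t\}_t$ commutes), recovers \eqref{app:eqn:Z_CDE_diag}; uniqueness of linear CDE solutions then concludes.

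For the signature expansion I would \emph{not} re-expand the exponentials, but rather specialise the general expansion of Proposition \ref{app:prop:Z_exp}, which already expresses $Z^{\Xe}_t$ as a linear combination of the scalars $X_0^i\,\Sig(\omega^{\Xe})^I_{0,t}$ and $\int_0^t\Sig(\omega^{\Xe})^I_{s,t}\,d\xi^{\Xe,j}_s$ with matrix coefficients $A_IC_i$ and $A_IB_j$. The single ingredient proper to the diagonal case is \emph{commutativity}: for diagonal $A_i$ one has $A_I=A_{i_{|I|}}\cdots A_{i_1}=A_{\sigma(I)}$ for every $\sigma\in S_{|I|}$, whence immediately $A_I=\tfrac{1}{|I|!}\sum_{\sigma}A_{\sigma(I)}=A^{sym}_I$. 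It then suffices to show that the signature coefficients may be symmetrised at no cost, i.e. $\sum_{|I|=k}A_I\,\Sig(\omega^{\Xe})^I_{s,t}=\sum_{|I|=k}A_I\,\Sig(\omega^{\Xe})^{sym,I}_{s,t}$ for each length $k$. I would prove this by reindexing: writing $\Sig(\omega^{\Xe})^{sym,I}_{s,t}=\tfrac{1}{k!}\sum_\sigma\Sig(\omega^{\Xe})^{\sigma(I)}_{s,t}$, replacing $A_I$ by $A_{\sigma(I)}$ (permissible by commutativity), and using that $I\mapsto\sigma(I)$ is for each fixed $\sigma$ a bijection of the length-$k$ words, collapses the double sum back to $\sum_{|I|=k}A_I\,\Sig(\omega^{\Xe})^I_{s,t}$. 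Applying this identity to both the $X_0$-block and the $d\xi^{\Xe}$-block gives the stated expansion.

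The only genuinely delicate point is this reindexing when $I$ contains repeated letters: then distinct permutations $\sigma$ may produce coinciding words $\sigma(I)$, so the sum over $S_k$ carries nontrivial multiplicities. The cancellation nevertheless survives, since it is driven solely by the bijectivity of $\sigma$ on words together with the permutation-invariance $A_I=A_{\sigma(I)}$, neither of which is affected by repetition. A minor secondary check, in the direct-verification route for the closed form, is the exponential-derivative identity $d\,e^{\diag(V\omega^{\Xe}_t)}=\big(\sum_i A_i\,d\omega^{\Xe,i}_t\big)e^{\diag(V\omega^{\Xe}_t)}$, which relies on $\{\diag(V\omega^{\Xe}_t)\}_t$ commuting for all $t$; this is precisely where diagonality is indispensable and where the dense analogue would break. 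As an independent cross-check of the expansion one could instead Taylor-expand $e^{\diag(V\omega^{\Xe}_t)}$ by the multinomial theorem and match coefficients using the shuffle identity $\Sig(\omega^{\Xe})^{sym,I}_{s,t}=\tfrac{1}{|I|!}\prod_{i}(\omega^{\Xe,i}_t-\omega^{\Xe,i}_s)^{k_i}$, where $k_i$ is the multiplicity of letter $i$ in $I$; I would keep this resummation route in reserve, as the reindexing argument is shorter.
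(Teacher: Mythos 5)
Your proof is correct. For the signature expansion you use exactly the paper's key mechanism — commutativity gives $A_I=A_{\sigma(I)}$, and averaging over $S_k$ costs nothing because $I\mapsto\sigma(I)$ is a bijection on length-$k$ words — just run in the opposite direction (you symmetrise the coefficients of the general expansion from Proposition~\ref{app:prop:Z_exp}, whereas the paper symmetrises the word-sum defining the Wronskian $W^{\Xe}_{s,t}=\sum_I A_I\Sig(\omega^{\Xe})^I_{s,t}$ and then resums it into the exponential via the shuffle identity $\sum_{\sigma}\Sig^{\sigma(I)}=\prod_i\Sig^{I_i}$). Your worry about repeated letters is correctly resolved: the multiplicities are irrelevant precisely because the reindexing is over all $(I,\sigma)$ pairs, not over distinct images. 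For the closed form you differ slightly: the paper invokes its Wronskian machinery (Theorems~\ref{thm:uniqueness_wronskian} and~\ref{thm:general_variation_constants}) and verifies that $\exp(\diag(V(\omega^{\Xe}_t-\omega^{\Xe}_s)))$ satisfies the defining matrix CDE using commutativity, while you decouple coordinate-wise into scalar equations and apply the scalar integrating factor. Your route is more elementary and self-contained (it needs only $\omega^{\Xe}_0=0$ and absolute continuity of the drivers to justify the integrating factor), at the cost of not exhibiting the two-parameter flow $W^{\Xe}_{s,t}$ that the paper reuses elsewhere; the paper's route buys the link between the exponential form and the word expansion for free, which is why it does not need to cite Proposition~\ref{app:prop:Z_exp} separately as you do. Both arguments are complete.
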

\begin{proof}
By Theorem \ref{thm:uniqueness_wronskian} and Theorem \ref{thm:general_variation_constants} we know that the solution of
\begin{equation}
    Z^{\Xe}_t = Z^{\Xe}_0 + \sum_{i=1}^{d_{\omega}} \int_0^t A_i Z^{\Xe}_t d\omega^{\Xe,i}_t + \int_0^t Bd\xi^{\Xe}_t
\end{equation}
is explicitly given by 
\begin{equation}
    Z_t^{\Xe} = W^{\Xe}_{0,t} Z^{\Xe}_0 + \int_0^t W^{\Xe}_{s,t} B d\xi^{\Xe}_s
\end{equation}
where $W_{s,t}$ is the unique solution to 
\begin{equation}
    W^{\Xe}_{s,t} = Id + \sum_{i=1}^{d_{\omega}} \int_s^t A_i W^{\Xe}_{s,r} d\omega^{\Xe,i}_r
\end{equation}
In case the $A_i$s are commuting matrices one can explicitly write the solution as 
\begin{equation}
    W^{\Xe}_{s,t} = \exp{\left( \sum_{i=1}^{d_{\omega}} \int_s^t A_i  d\omega^{\Xe,i}_r \right)} 
    = \exp{\left( \diag(V(\omega^{\Xe}_t - \omega^{\Xe}_s))\right)}
\end{equation}
since for fixed $s$ one has, using commutativity, that 
\[
d W^{\Xe}_{s,t} = W^{\Xe}_{s,t} \left( \sum_{i=1}^{d_{\omega}}  A_i  d\omega^{\Xe,i}_t \right) = \sum_{i=1}^{d_{\omega}}  A_i  W^{\Xe}_{s,t} d\omega^{\Xe,i}_t
\]
On the other hand we know, Theorem \ref{app:thm:Sig_Wronskian}, that 
\begin{equation}
    W^{\Xe}_{s,t} = \sum_{I \in \mathbb{W}_{d_{\omega}}} A_I \Sig(\omega^{\Xe})_{s,t}^I = \sum_{k=0}^{\infty} \sum_{I \in \mathbb{W}_{d_{\omega}}^k} A_I \Sig(\omega^{\Xe})_{s,t}^I
\end{equation}

The two views are reconciled by noticing that the symmetric group $S_k$ acts on $\mathbb{W}_{d_{\omega}}^k$, the space of words of lenth $k$, by permuting the letters and, by commutativity, 
\[
\forall \sigma \in S_k. \forall I \in \mathbb{W}_{d_{\omega}}^k. \hspace{5pt} A_I = A_{\sigma(I)}
\]

{
Then we have
\begin{align*}
\sum_{I \in \mathbb{W}_{d_{\omega}}^k} A_I \Sig(\omega^{\Xe})_{s,t}^I 
= & 
\sum_{I \in \mathbb{W}_{d_{\omega}}^k}  \frac{1}{k!} \sum_{\sigma \in S_k} A_{\sigma(I)} \Sig(\omega^{\Xe})_{s,t}^{\sigma(I)} 
=
\sum_{I \in \mathbb{W}_{d_{\omega}}^k}  \frac{ A_{I} }{k!} \sum_{\sigma \in S_k}\Sig(\omega^{\Xe})_{s,t}^{\sigma(I)} 
\end{align*}
recalling then how $e_{I_1} \shuffle \cdots \shuffle e_{I_k} = \sum_{\sigma \in S_k} e_{\sigma(I)}$ we get to 
\begin{align*}
&\sum_{I \in \mathbb{W}_{d_{\omega}}^k} A_I \Sig(\omega^{\Xe})_{s,t}^I\\ 
&=  
\sum_{I \in \mathbb{W}_{d_{\omega}}^k}  \frac{ A_{I} }{k!} \sum_{\sigma \in S_k}\Sig(\omega^{\Xe})_{s,t}^{\sigma(I)} 
 = 
\sum_{I \in \mathbb{W}_{d_{\omega}}^k}  \frac{ A_{I} }{k!} 
\prod_{i = 1}^k \Sig(\omega^{\Xe})^{I_i}_{s,t}
\\ = & 
\sum_{I \in \mathbb{W}_{d_{\omega}}^k}  \frac{ 1 }{k!} 
\prod_{i = 1}^k A_{I_i} \Sig(\omega^{\Xe})^{I_i}_{s,t}
= 
\frac{ 1 }{k!} \left( \sum_{i=1}^{d_{\omega}} A_{i} \Sig(\omega^{\Xe})^{i}_{s,t} \right)^k
= 
\frac{ 1 }{k!} \left( \sum_{i=1}^{d_{\omega}} \int_s^t A_{i} d\omega^{\Xe,i}_r \right)^k
\end{align*}
}

In particular we see how in the commuting case
\begin{equation}
    W^{\Xe}_{s,t} = \sum_{I \in \mathbb{W}_{d_{\omega}}} A_I^{sym} \Sig(\omega^{\Xe})_{s,t}^{sym, I}  
\end{equation}
where
\[
A^{sym}_I := \frac{1}{|I|!} \sum_{\sigma \in S_k} A_{\sigma(I)} = A_I
\quad
\Sig(\omega^{\Xe})_{s,t}^{sym, I}  := \frac{1}{|I|!} \sum_{\sigma \in S_k} \Sig(\omega^{\Xe})_{s,t}^{\sigma(I)}.
\]
\end{proof}

\subsubsection{Diagonal Expressiveness}\label{app:sub:diag_expr}

\begin{theorem}\label{app:thm:Main_Diag}
        Fix a compact input set $\bX$ and continuous gates $(\cdot)_0, \omega, \xi$. For any $\epsilon>0$ and any
        \begin{equation}
        F \in \left\{
                (X,t) \mapsto \psi(\omega^{\Xe}_t) \cdot X_0 +  \int_0^t \phi(\omega^{\Xe}_t - \omega^{\Xe}_s) \cdot d\xi^{\Xe}_{s}
        \right\}
        \end{equation}
        for $\psi \in C^0(\R^{d_{\omega}}; \R^{d_0})$ and $\phi \in C^0(\R^{d_{\omega}}; \R^{d_{\xi}})$, there exist a choice of hidden dimension $N \geq 1$ and parameters $v \in \R^N, B \in \R^{N \times d_{\xi}}, C \in \R^{N \times d_0}$ and diagonal $A_i \in \R^{N \times N}$ such that 
        \begin{equation}
            \sup\limits_{(X,t) \in \bX \times [0,1]} |F(X,t) - \sprod{v}{Z^{\Xe}_{t}}| \leq \epsilon
        \end{equation}

       Moreover the "reverse" also holds \emph{i.e.} given any choice of matrices $A_i,B,C$ there is an $\epsilon$-close map $F$ in the family.
\end{theorem}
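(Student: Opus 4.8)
The plan is to read off from the diagonal solution formula in Proposition~\ref{app:prop:Z_exp_diag} that a linear readout of the diagonal CDE is \emph{exactly} a finite sum of exponentials of linear functionals of the gate $\omega^{\Xe}$, and then to invoke Stone--Weierstrass to approximate the arbitrary continuous $\psi,\phi$ by such exponential sums. Concretely, writing $r_n\in\R^{d_\omega}$ for the $n$-th row of $V$ (so that $A_i=\diag(V_{\cdot,i})$), the formula $Z^{\Xe}_t=e^{\diag(V\omega^{\Xe}_t)}CX_0+\int_0^t e^{\diag(V(\omega^{\Xe}_t-\omega^{\Xe}_s))}Bd\xi^{\Xe}_s$ gives, after taking the inner product with $v$,
\begin{align*}
\sprod{v}{Z^{\Xe}_t}=&\sum_{i=1}^{d_0}\Big(\sum_{n=1}^{N}v_nC_{n,i}\,e^{\sprod{r_n}{\omega^{\Xe}_t}}\Big)X_0^i\\
&+\sum_{j=1}^{d_\xi}\int_0^t\Big(\sum_{n=1}^{N}v_nB_{n,j}\,e^{\sprod{r_n}{\omega^{\Xe}_t-\omega^{\Xe}_s}}\Big)d\xi^{\Xe,j}_s .
\end{align*}
Thus the realizable readouts are governed entirely by finite linear combinations of the maps $y\mapsto e^{\sprod{r}{y}}$, and the target family is recovered once each coordinate $\psi_i$ and $\phi_j$ is approximated uniformly by such combinations.

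The key analytic input I would establish next is that the span $\mathcal E:=\mathrm{span}\{\,y\mapsto e^{\sprod{r}{y}}:r\in\R^{d_\omega}\,\}$ is dense in $C^0(K)$ for every compact $K\subset\R^{d_\omega}$. This is a direct application of Stone--Weierstrass: $\mathcal E$ is a subalgebra because $e^{\sprod{r}{y}}e^{\sprod{r'}{y}}=e^{\sprod{r+r'}{y}}$, it contains the constants (take $r=0$), and it separates points since for $y\neq y'$ one may choose $r$ with $\sprod{r}{y}\neq\sprod{r}{y'}$. I would apply this on the two sets $K_0:=\{\omega^{\Xe}_t:(X,t)\in\bX\times[0,1]\}$ and $K_\Delta:=\{\omega^{\Xe}_t-\omega^{\Xe}_s:X\in\bX,\,0\le s\le t\le 1\}$, which are compact as continuous images of compact sets, using continuity of the gate $\omega$ and compactness of $\bX$.

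With these pieces the construction is immediate. Fix $\delta>0$, pool a common finite exponent set $\{r_1,\dots,r_N\}$, and choose coefficients so that $\sum_n C_{n,i}e^{\sprod{r_n}{\cdot}}$ approximates $\psi_i$ on $K_0$ and $\sum_n B_{n,j}e^{\sprod{r_n}{\cdot}}$ approximates $\phi_j$ on $K_\Delta$, each to within $\delta$ (coefficients of unused exponents set to zero); then take $v_n\equiv 1$, let $V$ have rows $r_n$, and read off $C,B$ from these coefficients. The remaining step is the error bound: the $\psi$-part contributes at most $\delta\sum_i\sup_X|X_0^i|$ and the integral $\phi$-part at most $\delta\sum_j\sup_X\norm{\xi^{\Xe,j}}_{1-var,[0,1]}$, both finite by compactness of $\bX$ and continuity of the gates, so taking $\delta$ small yields the claimed $\epsilon$.

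The \emph{reverse} direction is essentially free: for arbitrary diagonal $A_i$, $B$, $C$ and readout $v$, the displayed identity already exhibits $\sprod{v}{Z^{\Xe}_t}$ as a member of the target family with $\psi_i(y)=\sum_n v_nC_{n,i}e^{\sprod{r_n}{y}}$ and $\phi_j(y)=\sum_n v_nB_{n,j}e^{\sprod{r_n}{y}}$, which are genuinely continuous, so here one even gets exact equality. I expect the only delicate point to be the bookkeeping of the error bound for the integral term, where one must uniformly control the $1$-variation of $\xi^{\Xe}$ over the compact input set; everything else reduces to the standard Stone--Weierstrass density of exponential sums.
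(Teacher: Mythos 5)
Your proof is correct, and it takes the direct route that the paper only gestures at in a closing remark rather than the one it actually writes out. The paper's proof factors through the symmetrized RKHS $Sym({\Hs}^{\tiny (\cdot)_0,\omega,\eta}_{[0,1]})$: it first shows (by the tail bounds of Proposition~\ref{app:prop:H_dense_Z}) that diagonal readouts land in the closure of the symmetric part of the feature space, then identifies the symmetric signature terms $\Sig(\omega^{\Xe})^{sym,I}_{s,t}$ with monomials in the increments $\omega^{\Xe}_t-\omega^{\Xe}_s$, applies Stone--Weierstrass to polynomials to characterize the closure, and only at the very end invokes Proposition~\ref{app:prop:exponential_stone} to get density of actual readouts. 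You instead start from the closed-form exponential solution of Proposition~\ref{app:prop:Z_exp_diag}, observe that $\sprod{v}{Z^{\Xe}_t}$ is \emph{exactly} an exponential sum in $\omega^{\Xe}_t$ and $\omega^{\Xe}_t-\omega^{\Xe}_s$, and apply Stone--Weierstrass for exponential sums on the two compact sets $K_0$ and $K_\Delta$ directly. This is precisely the shortcut the authors acknowledge in the remark following their proof (``it directly follows from Proposition~\ref{app:prop:exponential_stone}''), and it buys you a shorter, self-contained argument in which the reverse direction holds with exact equality rather than up to $\epsilon$; what you lose is the structural information the paper's detour is designed to exhibit, namely that diagonal systems see only the symmetric part of the signature, which is the conceptual payload of that section. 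Your bookkeeping is sound: the compactness of $K_0$ and $K_\Delta$ follows from continuity of $(X,s,t)\mapsto\omega^{\Xe}_t-\omega^{\Xe}_s$, the uniformity in $(X,t)$ is absorbed into those sets, and the integral error is controlled by $\sup_{X\in\bX}\norm{\xi^{\Xe}}_{1;[0,1]}$, which is finite because the gate $\xi$ is continuous into $C_{1,0}$ with the $1$-variation norm and $\bX$ is compact.
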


\begin{proof}
    This is just a repetition of the arguments used for the dense case with little more care to get the uniformity in time.
    
    One defines the subset $Sym({\Hs}^{\tiny (\cdot)_0,\omega,\eta}_{[0,1]}) \subset {\Hs}^{\tiny (\cdot)_0,\omega,\eta}_{[0,1]}$ of those $F$ of type (\ref{app:eqn:RKHS_F_[0,1]}) defined by $\alpha_i,\beta_j \in l^2(\mathbb{W}_{d_{\omega}})$ such that for any word $I$ and any permutation $\sigma(I)$ of it 
    \begin{equation}
        \alpha_i^I = \alpha_i^{\sigma(I)}
        \quad
        \beta_j^I = \beta_j^{\sigma(I)}
    \end{equation}

    The same argument of Proposition \ref{app:prop:H_dense_Z} shows that the uniform closure of the space of linear maps on the $Z^{\Xe}_t$ is contained in the uniform closure of $Sym({\Hs}^{\tiny (\cdot)_0,\omega,\eta}_{[0,1]})$, and the same bounds show that this latter closure is the same as that of its subset composed of those $F \in Sym({\Hs}^{\tiny (\cdot)_0,\omega,\eta}_{[0,1]})$ having entries eventually equal to $0$.

    Since
    \[
    \Sig(\omega^{\Xe})_{s,t}^{sym, I}  := \frac{1}{|I|!} \sum_{\sigma \in S_k} \Sig(\omega^{\Xe})_{s,t}^{\sigma(I)}
    = \frac{1}{|I|!} \prod_{i=1}^{|I|} (\omega^{\Xe, I_i}_t - \omega^{\Xe, I_i}_s),
    \]
    such maps can be expressed exactly in the form 
    \[
    P(\omega^{\Xe}_t) \cdot X_0 +  \int_0^t Q(\omega^{\Xe}_t - \omega^{\Xe}_s) \cdot d\xi^{\Xe}_{s}
    \]
    for polynomial maps $P,Q$ fixed in time. 
    The usual compactness and continuity argument, together with an application of Stone-Weiestrass, thus proves that the uniform closure of $Sym({\Hs}^{\tiny (\cdot)_0,\omega,\eta}_{[0,1]})$ has the form needed.

    The final ingredient is the density of the space of linear maps on the $Z^{\Xe}_{t}$ in $Sym({\Hs}^{\tiny (\cdot)_0,\omega,\eta}_{[0,1]})$; this is another consequence of Stone-Weiestrass as seen from Proposition \ref{app:prop:exponential_stone}.
\end{proof}

\begin{remark}
    Notice how here there is no need to augment the paths in creative ways in order to ensure separability of the points. 
    The map $(\omega,s,t) \mapsto \omega_{[s,t]} \in C_{1,0}([0,1];\R^{d_{\omega}})$ is replaced by $(\omega,s,t) \mapsto \omega_t - \omega_s \in \R^{d_{\omega}}$ and the space of polynomials always separates points in $\R^{d_{\omega}}$. 
\end{remark}

\begin{remark}
    It is not necessary to pass through $Sym({\Hs}^{\tiny (\cdot)_0,\omega,\eta}_{[0,1]})$ to prove the previous result, since it directly follows from Proposition \ref{app:prop:exponential_stone}. 
    This choice of presentation has been motivated by the conviction of the usefulness of drawing parallels and comparisons.
\end{remark}

\begin{proposition}\label{app:prop:exponential_stone}
    Fix a compact set $\mathbb{K} \subset \R^d$ and a $d$-dimensional convex cone $C$ containing the origin. The space 
    \[
    \mathcal{E} := Span\left( \mathbb{K} \ni x \mapsto e^{\sprod{\alpha}{x}_{\R^d}} \in \R: \alpha \in C \right)
    \]
    is uniformly dense in $C^0(\mathbb{K};\R)$.
\end{proposition}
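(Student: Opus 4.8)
The plan is to deduce the claim directly from the Stone--Weierstrass theorem. Since $\mathbb{K}$ is compact (hence compact Hausdorff) and each map $x \mapsto e^{\langle \alpha, x\rangle}$ is continuous, we have $\mathcal{E} \subseteq C^0(\mathbb{K};\R)$, and it suffices to check that $\mathcal{E}$ is a subalgebra of $C^0(\mathbb{K};\R)$ which contains the constant functions and separates the points of $\mathbb{K}$. Each of the two nontrivial conditions will be supplied by exactly one of the two hypotheses on $C$.

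First I would verify the algebra structure. The key identity is the multiplicativity of the exponential,
\[
e^{\langle \alpha, x\rangle}\, e^{\langle \beta, x\rangle} = e^{\langle \alpha + \beta, x\rangle},
\]
so that $\mathcal{E}$ is closed under products as soon as $C$ is closed under addition. This is where convexity of the cone enters: for $\alpha,\beta \in C$ convexity gives $\tfrac{1}{2}(\alpha+\beta) \in C$, and the cone property (closure under nonnegative scaling) then yields $\alpha + \beta = 2\cdot\tfrac{1}{2}(\alpha+\beta) \in C$. Hence the generators of $\mathcal{E}$ are closed under multiplication, and since $\mathcal{E}$ is their linear span it is a subalgebra. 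The constants are present because $0 \in C$ by assumption, so $e^{\langle 0, x\rangle} \equiv 1 \in \mathcal{E}$.

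Next I would check separation of points, which is where $d$-dimensionality of $C$ is used. Given $x \neq y$ in $\mathbb{K}$, set $z := x - y \neq 0$. If every $\alpha \in C$ satisfied $\langle \alpha, z\rangle = 0$, then $C$ would lie in the hyperplane $z^{\perp}$, contradicting that $C$ spans $\R^d$ (its being $d$-dimensional). Thus there exists $\alpha \in C$ with $\langle \alpha, z\rangle \neq 0$, whence $e^{\langle \alpha, x\rangle} \neq e^{\langle \alpha, y\rangle}$ and $\mathcal{E}$ separates $x$ and $y$. With the algebra, constants, and separation properties in hand, Stone--Weierstrass gives that $\mathcal{E}$ is uniformly dense in $C^0(\mathbb{K};\R)$.

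I do not expect a genuine obstacle here: the entire content is recognizing that the two hypotheses translate precisely into the two Stone--Weierstrass conditions. The only points requiring care are (i) that a \emph{convex} cone, not merely a cone, is needed for additive closure and hence for the algebra property --- a union of rays would fail this --- and (ii) that one should invoke the dimension hypothesis in the contrapositive form ``$C \not\subseteq$ any proper subspace'', rather than trying to exhibit a separating $\alpha$ constructively. Alternatively one could drop the assumption $0 \in C$ and use the version of Stone--Weierstrass for algebras that vanish nowhere, since exponentials are strictly positive; but as stated the constant-containing version is cleanest.
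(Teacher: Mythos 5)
Your proof is correct and follows essentially the same route as the paper's: both invoke Stone--Weierstrass, using convexity plus the cone property for additive closure of the exponents (hence the algebra structure), $0 \in C$ for the constants, and $d$-dimensionality of $C$ for point separation. Your write-up merely spells out the additive-closure and separation steps in slightly more detail than the paper does.
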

\begin{proof}
    This is an application of Stone-Weiestrass: $\mathcal{E}$ is a sub-algebra since 
    \[
    e^{\sprod{\alpha}{x}} e^{\sprod{\beta}{x}_{\R^d}} = e^{\sprod{\alpha + \beta}{x}_{\R^d}}
    \]
    and $\alpha, \beta \in C \implies \alpha + \beta \in C$ by convexity of the cone;
     $\mathcal{E}$ contains the constant function $e^{\sprod{0}{x}} = 1$ and is clearly point separating since the cone, being $d$-dimensional, it contains a basis of the whole space.
\end{proof}

\begin{remark}
    The usefulness of stating the previous result in such a general setting is the following: with this formalism we can, for example, restrict to $\alpha \leq 0$, in this way we would have a method to control the stability (\emph{cf.} Appendix \ref{app:subsect:stability}) of the Linear CDEs by choosing the gate with a.s. $\dot{\omega}^{\Xe} \geq 0$. 
\end{remark}

\begin{corollary}[Mamba Case]\label{app:thm:diagonal_expr_mamba}
         In the Mamba setting, the closure reduces to
        \begin{equation}\label{eqn:main_poly_family_mamba_app}
        \left\{
                (X,t) \mapsto \sum_{i=1}^{d_\omega} \psi_i(\omega^{\Xe,i}_t) +  \sum_{i=1}^{d_\omega} \int_0^t \phi_i(\omega^{\Xe,i}_t - \omega^{\Xe,i}_s) ~ d\xi^{\Xe,i}_{s}
        \right\}
        \end{equation}
        for continuous $\psi_i: \R \to \R$ and $\phi_i : \R\to \R$.
\end{corollary}

\begin{proof}
In this setting one runs in parallel $d_{\omega}$ diagonal systems and then takes a linear combination of the stacked hidden state. The maps in the closure of the whole system are then just the sums of maps in the closure of the subsystems.
\end{proof}

\newpage
\section{Stability and Chaining of Diagonal Systems}
\label{app:sect:chaining}

For this section consider, unless otherwise stated, a fixed $N \geq 0$, compact $\bX$ and gates $(\cdot)_0, \omega, \xi$.

We will study the stability and chaining of diagonal systems defined by the choice of a matrix $V \in \R^{N \times d_{\omega}}$ such that $A_i := diag(V_i)$, where $V = [V_1 | \cdots | V_{d_{\omega}}]$. 

Note that the present discussion holds even for non-diagonal but \emph{commuting} matrices, since these can be simultaneously diagonalized (at the cost of considering the complex plane).

\subsection{Stability}\label{app:subsect:stability}

Here we explore the stability of the dynamical system $Z^{\Xe}$, thus we need to study the eigenvalues of the $W^{\Xe}_{s,t}$.
Recall how in this setting

\begin{equation}
     W^{\Xe}_{s,t} = \exp{\left( \sum_{i=1}^{d_{\omega}} \int_s^t A_i  d\omega^{\Xe,i}_r \right)} 
     = \exp{\left( \diag(\int_s^t V d\omega^{\Xe}_r) \right)}
     = \exp{\left( \diag\big( V (\omega^{\Xe}_t - \omega^{\Xe}_s) \big)  \right)}
\end{equation}

Note that because $\omega^{\Xe}$ is continuous and of bounded variation, it can be reparameterised to be Lipschitz continuous,  hence absolutely continuous. Thus we can assume that $\omega^{\Xe}$ is almost everywhere differentiable and its derivative $\dot \omega \in L^1$.
 
 The stability of the dynamical system then depends on the alignment between $\omega^{\Xe}_t - \omega^{\Xe}_s$ and the singular vectors of $V$.  
If $V \dot \omega^{\Xe}_t \leq 0$ for all times, where the inequality is coordinate-wise, then $W^{\Xe}_{s,t}$ has eigenvalues all in $[0,1]$ thus the system is stable making training easier \cite{orvieto2023resurrecting}.

Consider the singular value decomposition (SVD) of the matrix $V$
\begin{equation}
    V = \sum_{k = 1}^K \sigma_k ~ v_k u_k^\top
\end{equation}
Then, a sufficient condition for stability is that for any $k=1,...,K$
\begin{align}
    0 > \sigma_k \in \R, \quad 0 \leq v_k \in \R^N, \quad \text{and} \quad \sprod{u_k}{\dot \omega^{\Xe}_t} \geq 0 \ \text{ for any } t \in [0,T].
\end{align}

\subsubsection{The case of Mamba}
    In the case of Mamba \cite{gu2023mamba} the matrices are diagonal and 
    \[
    d\omega^{\Xe}_t = softplus(Wx_t + \lambda) dt,
    \quad 
    d\xi^{\Xe}_t = x_t \odot d\omega^{\Xe}_t,
    \]
    moreover the proposed choices of $V$ are all of type
    \[
    V = - \mathbf{v} \otimes \boldsymbol{1}_{d_{\omega}}
    \]
    for some choice of $ 0 \leq \mathbf{v} \in \R^N$.  Note that $softmax$ is just a smooth approximation of $ReLU$ and that  $Im(ReLU) \subseteq \{ w \in \R^{d_{\omega}} : \sprod{\boldsymbol{1}_{d_{\omega}}}{w} \geq 0\}$ hence mamba is implicitly ensuring that the dynamical system is approximately always well-conditioned.
    
\subsection{Chaining}

The diagonal case differs from the general one not only in the fact that the class of approximable functions is much weaker but also in the necessity for the presence of $\xi^{\Xe}$ in order to obtain any path-dependence. The term 
\[
\int_0^t \phi(\omega^{\Xe}_t - \omega^{\Xe}_s) \cdot d\xi^{\Xe}_{s}
\]
becomes then a crucial component. 
At first sight one might think that such a term allows to recover at least level two components of the Signature of $(\omega^{\Xe}, \xi^{\Xe})$, unfortunately things are not as easy as they may seem.
Notice how inside of the integral time is "going backwards" from the perspective of $\omega^{\Xe}$, thus we can in general approximate terms of type 
\[
\int_0^t \int_s^t d\omega^{\Xe,i}_r d\xi^{\Xe,j}_s = \int_{1-t}^{1}\int_{1-t}^r d\overleftarrow{\omega}^{\Xe,i}_r d\overleftarrow{\xi}^{\Xe,j}_s
=\Sig((\overleftarrow{\omega}^{\Xe}, \overleftarrow{\xi}^{\Xe}))^{i_{\omega}j_{\xi}}_{1-t,1}
\]
which are indeed terms of the Signature, but of the reverse paths $\overleftarrow{\omega}^{\Xe}_r = \omega^{\Xe}_{1-r}$ and $\overleftarrow{\xi}_s = \xi^{\Xe}_{1-s}$!

\begin{proposition}\label{app:prop:sig_2_recovery}
    Fix a compact input set $\bX$, continuous gates $(\cdot)_0, \omega, \xi$ and $X_0^1 = 1$.
    If the components of $\xi^{\Xe}$ are linear combinations of those of $\omega^{\Xe}$, with time-independent weights, then linear functionals on $Z^{\Xe}_t$ can, uniformly in $\bX \times [0,1]$, approximate arbitrarily well the following level $2$ terms of $Sig((\omega^{\Xe}, \xi^{\Xe}))_{0,t}$:
    \[
     \int_0^t \int_0^s d\omega^{\Xe,i}_r d\xi^{\Xe,j}_s =\Sig((\omega^{\Xe}, \xi^{\Xe}))_{0,t}^{i_{\omega}j_{\xi}}
    \]
\end{proposition}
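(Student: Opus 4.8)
The plan is to exhibit the target iterated integral as an \emph{exact} member of the diagonal family appearing in Theorem~\ref{app:thm:Main_Diag}, so that uniform approximation by linear functionals on $Z^{\Xe}_t$ follows immediately from that theorem. First I would unwind the signature term. Because the gates take values in $C_{1,0}$, every path starts at the origin, so $\int_0^s d\omega^{\Xe,i}_r = \omega^{\Xe,i}_s$ and the target reads
\[
\Sig((\omega^{\Xe},\xi^{\Xe}))^{i_\omega j_\xi}_{0,t} = \int_0^t \omega^{\Xe,i}_s \, d\xi^{\Xe,j}_s .
\]
The obstacle, flagged in the discussion preceding the statement, is that the diagonal family only grants access to integrands of the form $\phi(\omega^{\Xe}_t - \omega^{\Xe}_s)$, i.e. increments measured \emph{backwards} from the current endpoint $t$, whereas the forward partial path $\omega^{\Xe}_s$ appears above.

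The key step is a rearrangement. Since $\omega^{\Xe,i}_t$ is constant in the integration variable $s$ and $\xi^{\Xe,j}_0 = 0$, one has
\[
\int_0^t (\omega^{\Xe,i}_t - \omega^{\Xe,i}_s)\, d\xi^{\Xe,j}_s = \omega^{\Xe,i}_t \, \xi^{\Xe,j}_t - \int_0^t \omega^{\Xe,i}_s \, d\xi^{\Xe,j}_s ,
\]
and therefore
\[
\int_0^t \omega^{\Xe,i}_s\, d\xi^{\Xe,j}_s = \omega^{\Xe,i}_t \, \xi^{\Xe,j}_t - \int_0^t (\omega^{\Xe,i}_t - \omega^{\Xe,i}_s)\, d\xi^{\Xe,j}_s .
\]
The second summand is already of the diagonal integral type: taking the linear map $\phi(u) := -u_i\, e_j$, with $e_j$ the $j$-th standard basis vector of $\R^{d_\xi}$, it equals $\int_0^t \phi(\omega^{\Xe}_t - \omega^{\Xe}_s)\cdot d\xi^{\Xe}_s$, and $\phi$ is continuous.

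It then remains to realise the boundary term $\omega^{\Xe,i}_t \, \xi^{\Xe,j}_t$ as $\psi(\omega^{\Xe}_t)\cdot X_0$ for some continuous $\psi$, and this is exactly where the hypotheses are used. Writing the linear-combination assumption as $\xi^{\Xe,j}_t = \sum_k c_{jk}\,\omega^{\Xe,k}_t$ with time-independent $c_{jk}$, the boundary term becomes $\omega^{\Xe,i}_t \xi^{\Xe,j}_t = \sum_k c_{jk}\, \omega^{\Xe,i}_t \omega^{\Xe,k}_t$, a polynomial---hence continuous---function of the single value $\omega^{\Xe}_t$. Setting $\psi(u) := (\sum_k c_{jk}\, u_i u_k, 0, \dots, 0) \in \R^{d_0}$ and invoking $X_0^1 = 1$ yields $\psi(\omega^{\Xe}_t)\cdot X_0 = \omega^{\Xe,i}_t \xi^{\Xe,j}_t$. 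Combining the two pieces, the target functional equals $\psi(\omega^{\Xe}_t)\cdot X_0 + \int_0^t \phi(\omega^{\Xe}_t - \omega^{\Xe}_s)\cdot d\xi^{\Xe}_s$ exactly, so it lies in the family of Theorem~\ref{app:thm:Main_Diag}; that theorem then supplies, for every $\epsilon>0$, diagonal $A_i$, matrices $B,C$, and a readout $v$ with $\sup_{(X,t)\in\bX\times[0,1]} |\Sig((\omega^{\Xe},\xi^{\Xe}))^{i_\omega j_\xi}_{0,t} - \sprod{v}{Z^{\Xe}_t}| \le \epsilon$. The main obstacle is precisely the backward-time mismatch, and the twin roles of the rearrangement (trading the forward integral for a backward-increment integral plus a boundary term) and of the linear-combination hypothesis (making that boundary term a function of $\omega^{\Xe}_t$ alone, so it can be absorbed into the $\psi$-term through $X_0^1=1$) are what make the reduction go through.
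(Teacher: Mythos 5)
Your proposal is correct and follows essentially the same route as the paper's proof: both split the double integral via $\omega^{\Xe,i}_t\xi^{\Xe,j}_t = \int_0^t\int_0^s d\omega^{\Xe,i}_r d\xi^{\Xe,j}_s + \int_0^t(\omega^{\Xe,i}_t-\omega^{\Xe,i}_s)d\xi^{\Xe,j}_s$, absorb the boundary term into $\psi(\omega^{\Xe}_t)\cdot X_0$ via the linear-combination hypothesis and $X_0^1=1$, and take $\phi(u)=-u_i e_j$ for the backward-increment integral before invoking the diagonal expressivity theorem. Your choices of $\psi$ and $\phi$ coincide with the paper's (with $\alpha_j=(c_{jk})_k$), so there is nothing further to add.
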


\begin{proof}
    Under these hypotheses we know that  linear functionals on $Z^{\Xe}_t$ are uniformly dense, for continuous $\psi,\phi$, in 
    \[
    \left\{
                (X,t) \mapsto \psi(\omega^{\Xe}_t) \cdot X_0 +  \int_0^t \phi(\omega^{\Xe}_t - \omega^{\Xe}_s) \cdot d\xi^{\Xe}_{s}
        \right\}.
    \]
    Assume $\xi^{\Xe,j}_s = \sprod{\alpha_j}{\omega^{\Xe}_t}$ and consider the choices
    \begin{equation}\label{app:eqn:psi_phi_choices}
        \psi(x) = (x^i \sprod{\alpha_j}{x}, 0, \cdots, 0)^{\top},
        \quad
        \phi(x) = -(0,\cdots,0, x^i, 0, \cdots, 0).
    \end{equation}
    so that 
    \begin{equation}
        \psi(\omega^{\Xe}_t) \cdot X_0 = \omega^{\Xe,i}_t \xi^{\Xe, j}_t
     \quad
     \phi(\omega^{\Xe}_t - \omega^{\Xe}_s) \cdot d\xi^{\Xe}_{s} = - (\omega^{\Xe,i}_t - \omega^{\Xe,i}_s) d\xi^{\Xe,j}_{s}.
    \end{equation}

    To conclude note that
    \begin{align*}
        \omega^{\Xe,i}_t \xi^{\Xe,j}_t =&
        \int_{s=0}^t \int_{r=0}^t  d\omega^{\Xe,i}_r d\xi^{\Xe,j}_s
        =
        \int_{s=0}^t \int_{r=0}^s  d\omega^{\Xe,i}_r d\xi^{\Xe,j}_s +  \int_{s=0}^t \int_{r=s}^t  d\omega^{\Xe,i}_r d\xi^{\Xe,j}_s
        \\ =& 
         \int_0^t \int_0^s d\omega^{\Xe,i}_r d\xi^{\Xe,j}_s  +  \int_{s=0}^t (\omega^{\Xe,i}_t - \omega^{\Xe,i}_s)  d\xi^{\Xe,j}_s
    \end{align*}
    hence
    \begin{equation*}
        \int_0^t \int_0^s d\omega^{\Xe,i}_r d\xi^{\Xe,j}_s = \omega^{\Xe,i}_t \xi^{\Xe,j}_t - \int_{s=0}^t (\omega^{\Xe,i}_t - \omega^{\Xe,i}_s)  d\xi^{\Xe,j}_s
        = \psi(\omega^{\Xe}_t) \cdot X_0 +  \int_0^t \phi(\omega^{\Xe}_t - \omega^{\Xe}_s) \cdot d\xi^{\Xe}_{s}
    \end{equation*}
\end{proof}

If $X \in C_{1,0}([0,1];\R^d)$ we can use the previous result to compute its Signature entries by chaining \emph{diagonal} Linear CDEs. 
\begin{theorem}
   Assume a compact input set $\bX \subset C_{1,0}([0,1];\R^d)$. 
   For any $I \in \mathbb{W}_{d}$ with $|I| \geq 2$ and $\epsilon > 0$ there is a sequence of linear maps $W_k \in \R^{N_k \times 1}$ and weights for the following family of \emph{chained} Linear CDEs
    \begin{gather}
        dZ_t^{1,\Xe} = \sum_{i=1}^{d} A^{(1)}_i Z^{1,\Xe}_t dX^i_t + B^{(1)}d{{X}}_t\in \R^{N_1}, \quad  Z^{1,\Xe}_0 = Z_0^1,
        \\
       dZ_t^{k+1,\Xe} = \sum_{i=1}^{d+1} A^{(k+1)}_i Z^{k+1,\Xe}_t d\begin{bmatrix}
           W_kZ^{k, \Xe}
           \\ {X}
       \end{bmatrix}^i_t + B^{(k+1)}dX_t\in \R^{N_{k+1}}, \quad   Z^{k+1,\Xe}_0 = Z_0^{k+1},
    \end{gather}
    such that for some $v \in \R^{N_{|I|-1}}$ one has
    \begin{equation}
        \sup\limits_{(X,t) \in \bX \times [0,1]} |Sig(X)^{I}_{0,t} - \sprod{v}{Z_t^{|I|-1, \Xe}} | \leq \epsilon
    \end{equation}
\end{theorem}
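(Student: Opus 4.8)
The plan is to argue by induction on the length $m := |I|$, writing $I = i_1 i_2 \cdots i_m$, and at the $k$-th stage to realise the depth-$(k{+}1)$ signature coordinate $P^{(k+1)}_t := \Sig(X)^{(i_1\cdots i_{k+1})}_{0,t}$ as the uniform limit of a linear readout of the $k$-th chained state $Z^{k,\Xe}_t$, so that the final layer $k = m-1$ produces $\Sig(X)^{(I)}_{0,t}$. The mechanism at every stage is exactly Proposition \ref{app:prop:sig_2_recovery}, combined with the recursive identity for the signature, $\Sig(X)^{(i_1\cdots i_{k+1}i_{k+2})}_{0,t} = \int_0^t \Sig(X)^{(i_1\cdots i_{k+1})}_{0,s}\, dX^{i_{k+2}}_s$. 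Its right-hand side is a level-$2$ iterated integral of the form $\int_0^t\!\int_0^s d\omega^{i}_r\, d\xi^{j}_s$ once we arrange for $\omega$ to carry $P^{(k+1)}$ as one of its channels and set $\xi = X$; Proposition \ref{app:prop:sig_2_recovery} is precisely the tool that recovers such terms from a single diagonal layer.

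For the base case $m=2$ I would run the first CDE with $\omega = \xi = X$, using the initial-value gate $X_0$ with a constant component equal to $1$ as required by Proposition \ref{app:prop:sig_2_recovery}. Since the channels of $\xi = X$ are trivially linear combinations of those of $\omega = X$, the proposition yields a readout $W_1$ with $W_1 Z^{1,\Xe}_t$ uniformly close to $\int_0^t\!\int_0^s dX^{i_1}_r\, dX^{i_2}_s = P^{(2)}_t$. For the inductive step, suppose $W_k Z^{k,\Xe}_t$ approximates $P^{(k+1)}_t$ to within $\delta_k$ uniformly. I drive the $(k{+}1)$-th CDE by the augmented control $\omega^{(k+1)} = \bigl[\,W_k Z^{k,\Xe}\ \vert\ X\,\bigr]^\top$ (after the harmless constant shift making the new channel start at the origin, so that it lies in $C_{1,0}$; only its increments enter the dynamics) and set $\xi = X$, retaining the constant channel. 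Because the components of $\xi = X$ form a sub-collection of the channels of $\omega^{(k+1)}$, Proposition \ref{app:prop:sig_2_recovery} applies and produces a readout $W_{k+1}$ for which $W_{k+1} Z^{k+1,\Xe}_t$ is uniformly close to $\int_0^t (W_k Z^{k,\Xe})_s\, dX^{i_{k+2}}_s$; at the terminal stage $k+1 = m-1$ this readout is the required $v$.

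The quantitative heart of the argument is controlling how the per-stage error propagates, which is where I expect the main difficulty. First, I would use stability of the integration: from $\sup_s|W_k Z^{k,\Xe}_s - P^{(k+1)}_s| \leq \delta_k$ together with the uniform bound $C := \sup_{X\in\bX}\norm{X}_{1\text{-var};[0,1]} < \infty$ (finite by compactness), one gets
\[
\sup_t\Bigl|\int_0^t (W_k Z^{k,\Xe})_s\, dX^{i_{k+2}}_s - P^{(k+2)}_t\Bigr| \leq C\,\delta_k ,
\]
so that, combining with the fresh approximation error $\epsilon_{k+1}$ contributed by Proposition \ref{app:prop:sig_2_recovery} at stage $k+1$, the total error satisfies the linear recursion $\delta_{k+1} \leq \epsilon_{k+1} + C\,\delta_k$; choosing the stage tolerances $\epsilon_k$ small enough (solving this recursion backwards from the target $\epsilon$) forces $\delta_{m-1}\le\epsilon$. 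Second, one must verify that Proposition \ref{app:prop:sig_2_recovery} remains applicable at every stage, i.e. that the augmented control $\omega^{(k+1)}$ ranges over a \emph{compact} set of paths as $X$ varies over $\bX$: this holds because the solution map $X \mapsto Z^{k,\Xe}$ of a linear CDE is continuous, its image is therefore compact, and hence $X \mapsto \omega^{(k+1),\Xe}$ is a continuous map on the compact set $\bX$, preserving the hypotheses (continuity of gates, compact input). Assembling the base case, the inductive step, and the error recursion then yields the claimed uniform bound $\sup_{(X,t)}\bigl|\Sig(X)^{(I)}_{0,t} - \sprod{v}{Z^{m-1,\Xe}_t}\bigr| \le \epsilon$.
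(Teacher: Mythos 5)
Your proposal is correct and follows essentially the same route as the paper's proof: induction on $|I|$, invoking Proposition \ref{app:prop:sig_2_recovery} at each stage via the recursion $\Sig(X)^{Ij}_{0,t} = \int_0^t \Sig(X)^{I}_{0,s}\,dX^j_s$, and controlling error propagation through the uniform $1$-variation bound on $\bX$ (your backward-solved recursion $\delta_{k+1}\le\epsilon_{k+1}+C\delta_k$ is the same estimate the paper gets by requesting tolerance $\epsilon/M$ at the previous stage). Your explicit check that the augmented control ranges over a compact path set is a detail the paper leaves implicit, but it does not change the argument.
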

\begin{proof}
    For $|I|=2$ we can apply Prop. \ref{app:prop:sig_2_recovery}. 
    Assume the theorem holds for $|I| \leq k$ and let $M := \sup_{X \in \bX} \norm{X}_{1-var}$. Fix $|Ij| = k+1$ and $W_{k-1} \in \R^{N_{k-1} \times 1}$ such that 
    \[
    \sup\limits_{(X,t) \in \bX \times [0,1]} |Sig(X)^{I}_{0,t} - W_{k-1}Z_t^{k-1, \Xe} | \leq \frac{\epsilon}{M}.
    \]
    Again by Prop. \ref{app:prop:sig_2_recovery} there are a $N_k$ and $v \in \R^{N_k}$ such that 
    \[
    \sup\limits_{(X,t) \in \bX \times [0,1]} |\int_{0}^t  W_{k-1}Z_s^{k-1, \Xe} dX^{j}_{s} - \sprod{v}{Z_t^{k, \Xe}} | \leq \epsilon.
    \]
    Then 
    \begin{align*}
        |\Sig(X)^{Ij}_{0,t} -\sprod{v}{Z_t^{k, \Xe}} | \leq & 
        |\int_{0}^t \Sig(X)^{I}_{0,s} dX^{j}_{s} - \int_{0}^t  W_{k-1}Z_s^{k-1, \Xe} dX^{j}_{s}| 
        + |\int_{0}^t  W_{k-1}Z_s^{k-1, \Xe} dX^{j}_{s} - \sprod{v}{Z_t^{k, \Xe}} |
        \\
        \leq & 
        \int_{0}^t  | \Sig(X)^{I}_{0,s} - W_{k-1}Z_s^{k-1, \Xe}| | dX^{j}_{s}| + \epsilon
        \\
        \leq & 
        \int_{0}^t  \frac{\epsilon}{M} | dX^{j}_{s}| + \epsilon \leq 2\epsilon
    \end{align*}
    thus concluding the proof.
\end{proof}

Then Proposition \ref{prop:stack} follows as a corollary by running in parallel the systems above to recover simultaneously multiple Signature entries.

\subsubsection{$ReLU$ activation choice}
\label{app:subs:relu_chaining}

Models like \emph{Mamba} do not only use diagonal matrices but also consider controls of a specific kind:
\[
\omega^{\Xe}_t = \int_0^t ReLU(WX_s + b) ds
\]
The choice of $ReLU$ enforces $\dot{\omega}_t \geq 0$ for all times as seen above,
but could, a priori, destroy information about $X$ which allows for the recovery, after chaining, of its Signature.

Does this choice keep some expressivity? Fortunately almost all of it: since 
\[
ReLU(x) - ReLU(-x) = x
\]
one can choose a linear map $W$ which allows to linearly recover 
\[
\tilde{\omega}_t^{\Xe} = \int_0^t X_s ds
\]
from $\omega^{\Xe}_t$. 
By correspondingly modifying the form of $\psi$ and $\phi$ in (\ref{app:eqn:psi_phi_choices}) such that 
\begin{equation}
        \psi(\omega^{\Xe}_t) \cdot X_0 = \tilde{\omega}^{\Xe,i}_t \xi^{\Xe, j}_t
     \quad
     \phi(\omega^{\Xe}_t - \omega^{\Xe}_s) \cdot d\xi^{\Xe}_{s} = - (\tilde{\omega}^{\Xe,i}_t - \tilde{\omega}^{\Xe,i}_s) d\xi^{\Xe,j}_{s}.
    \end{equation}
one is able, through a similar chaining procedure, to recover arbitrarily deep entries of the Signature of $\tilde{\omega}_t^{\Xe} = \int_0^t X_s ds$.

\newpage
\section{Path-to-Path}\label{app:sect:path-path}

\begin{definition}
    A map $G \in C^0(C_{1,0}([0,1];\R^{d}) \times [0,1];\R)$ is \emph{causal} iff for all $t \in [0,1]$ and paths $ \omega, \tilde{\omega} \in C_{1,0}([0,1];\R^{d})$ one has
    \[
        \omega|_{[0,t]} = \tilde{\omega}|_{[0,t]} \implies G(\omega,t) = G(\tilde{\omega},t)
    \]
    \emph{i.e.} G is \emph{causal} if it does not look in the future.
\end{definition}

\begin{proposition}
    Assume a compact input set $\bX$, continuous $(\cdot)_0, \omega, \xi$, $X_0^1 \equiv 1$ and $\omega^{\Xe,1}_t \equiv t$. 
    Then for all $\epsilon > 0$ and all \emph{causal} $G \in C^0(C_{1,0}([0,1];\R^{d_{\omega}}) \times [0,1];\R)$ there exist an integer $N \geq 0$, some Feed Forward neural network $F: \R^{N} \to \R$, and parameters $C \in \R^{N \times d_0} , A_i \in \R^{N \times N}, B \in \R^{N \times d_{\xi}}$ such that 
    \begin{equation}
        \sup\limits_{X \in \bX} \sup\limits_{t \in [0,1]} |F(Z^{\Xe}_t) - G(\omega^{\Xe},t)| < \epsilon
    \end{equation}
\end{proposition}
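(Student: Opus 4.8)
The plan is to reduce the claim to the path-to-point machinery already developed. The crucial observation is that causality, together with the time-augmentation $\omega^{\Xe,1}_t \equiv t$, turns $G(\omega^{\Xe},t)$ into a genuine continuous function of the \emph{single} restricted path $\omega^{\Xe}_{[0,t]}$; and since $X_0^1 \equiv 1$, functions of this form are exactly those produced by the initial-condition term $\Psi(\omega^{\Xe}_{[0,t]}) \cdot X_0$ of the family (\ref{app:eqn:full_expressive_family}) (so the $\Phi$-integral term, and hence the augmentation $\omega^{\Xe,2}_t \equiv t^2$, will never be needed). Granting this, Theorem \ref{app:thm:density_rkhs} realizes the resulting element of $\Hs^{\tiny (\cdot)_0,\omega,\eta}_{[0,1]}$ as a linear readout $\sprod{v}{Z^{\Xe}_t}$ up to $\tfrac{\epsilon}{2}$, and one simply takes $F$ to be the linear map $z \mapsto \sprod{v}{z}$, which is the simplest feed-forward network; no genuine nonlinearity in $F$ is required, matching the remark that $F$ only needs to interpolate a linear map time-consistently.

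First I would carry out the reduction. Let $\mathcal I := \{\omega^{\Xe}_{[0,t]} : (X,t) \in \bK \times [0,1]\} \subset C_{1,0}([0,1];\R^{d_\omega})$; exactly as in the proof of the dense expressivity result, the map $(X,t)\mapsto \omega^{\Xe}_{[0,t]}$ is continuous on the compact set $\bK \times [0,1]$, so $\mathcal I$ is compact. I would then define $\tilde G : \mathcal I \to \R$ by $\tilde G(\omega^{\Xe}_{[0,t]}) := G(\omega^{\Xe},t)$ and check it is well posed: if $\omega^{\Xe}_{[0,t]} = \omega^{\Ye}_{[0,t']}$ then, reading off the plateau value of the first (time) coordinate, we get $t=t'$, and the two paths agree on $[0,t]$, whence $G(\omega^{\Xe},t) = G(\omega^{\Ye},t')$ by causality. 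Continuity of $\tilde G$ follows because $(X,t)\mapsto \omega^{\Xe}_{[0,t]}$ is a continuous surjection from a compact space onto the Hausdorff space $\mathcal I$, hence a quotient map through which the continuous $G$ factors.

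Next I would approximate $\tilde G$ by a linear functional on the signature. The point-separation needed for Stone--Weierstrass holds on $\mathcal I$ using only the time augmentation: from $\Sig(\omega^{\Xe})_{0,t}$ the first-order entry recovers $t$, and on $[0,t]$ the first coordinate is strictly increasing, so the restricted path admits no nontrivial tree-like reduction and is determined by its signature; thus $\gamma \mapsto \Sig(\gamma)_{0,1}$ is injective on $\mathcal I$. This is precisely where restricting from the \emph{fixed} left endpoint $0$ makes $\omega^{\Xe,2}_t \equiv t^2$ unnecessary, in contrast with Lemma \ref{app:prop:no_tree_like} where both endpoints had to be decoded. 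Invoking the Stone--Weierstrass mechanism behind Theorem \ref{universal_approx}, there is a truncated linear functional $\alpha$ with $\sup_{(X,t)} |\tilde G(\omega^{\Xe}_{[0,t]}) - \sprod{\alpha}{\Sig(\omega^{\Xe})_{0,t}}| \le \tfrac{\epsilon}{2}$. Using $X_0^1 \equiv 1$, the right-hand side equals $\Psi(\omega^{\Xe}_{[0,t]})\cdot X_0$ for $\Psi := (\alpha,0,\dots,0)$, so it is an element of $\Hs^{\tiny (\cdot)_0,\omega,\eta}_{[0,1]}$ with $\Phi \equiv 0$.

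Finally I would apply Theorem \ref{app:thm:density_rkhs} to this element to obtain $N$, matrices $A_i, B, C$ and $v \in \R^N$ with $\sup_{(X,t)} |\sprod{\alpha}{\Sig(\omega^{\Xe})_{0,t}} - \sprod{v}{Z^{\Xe}_t}| \le \tfrac{\epsilon}{2}$, set $F(z) := \sprod{v}{z}$, and conclude by the triangle inequality. I expect the main obstacle to be the well-definedness and point-separation step: one must verify carefully that the signature distinguishes the restrictions $\omega^{\Xe}_{[0,t]}$ using only $\omega^{\Xe,1}_t \equiv t$ and that the decoded time is recovered continuously across $\mathcal I$. Everything downstream is a routine combination of signature universal approximation and the RKHS density theorem, so the whole argument is essentially a packaging of path-to-point universality plus the causal-to-restricted-path re-encoding.
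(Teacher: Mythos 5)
Your proof is correct, but it takes a genuinely different route from the paper's. The paper proceeds by time discretization: it approximates $G(\cdot,s_i)$ at finitely many times $s_i$ by RKHS elements $f_i$, uses equicontinuity of $G$ on the compact $\bK\times[0,1]$, and then builds $F$ as a neural network that performs a time-consistent \emph{linear interpolation} of the values $f_0(X,t),\dots,f_M(X,t)$ (with $t$ itself recovered via $\omega^{\Xe,1}_t\equiv t$); the resulting $F$ is genuinely nonlinear. You instead push the whole problem back to path-to-point form: causality plus the time channel makes $(X,t)\mapsto G(\omega^{\Xe},t)$ factor continuously through the compact set $\mathcal I=\{\omega^{\Xe}_{[0,t]}\}$, on which the signature is injective (the level-one time entry decodes $t$, and equality of signatures of time-augmented paths forces equality on $[0,t]$, exactly the fact used inside Lemma \ref{app:prop:no_tree_like}; the $t^2$ channel is indeed not needed since the left endpoint is fixed at $0$). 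Stone--Weierstrass, via the shuffle-product algebra structure, then gives a single truncated linear functional on $\Sig(\omega^{\Xe})_{0,t}$ that is uniformly close in \emph{both} $X$ and $t$, and Theorem \ref{app:thm:density_rkhs} converts it into a linear readout of $Z^{\Xe}_t$. Two points deserve care. First, Theorem \ref{universal_approx} as stated does not apply verbatim to $\mathcal I$ (its elements satisfy $\gamma^1_u=u$ only up to time $t$ and are constant afterwards), so you must, as you note, rerun the Stone--Weierstrass argument with the point-separation you establish directly; this is sound but should be written out. Second, your conclusion that a \emph{linear} $F$ suffices is strictly stronger than the paper's and appears to contradict the remark following the paper's proof that ``the non-linearity is crucial'' because $(\omega,t)\mapsto\sprod{t\alpha}{\Sig(\omega)_{0,t}}$ cannot be matched by $(\omega,t)\mapsto\sprod{\beta}{\Sig(\omega)_{0,t}}$. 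That remark fails under the proposition's own hypothesis $\omega^{\Xe,1}_t\equiv t$: then $t=\Sig(\omega^{\Xe})^{(1)}_{0,t}$ and the shuffle identity rewrites $t\cdot\Sig(\omega^{\Xe})^{I}_{0,t}$ as a finite linear combination of signature entries, so multiplication by $t$ stays inside the linear span. Hence your argument stands under the stated hypotheses; what the paper's interpolation construction buys is robustness to dropping the time augmentation from $\omega$ (feeding $t$ to the network separately), which is the regime in which its cautionary remark is actually true.
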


  \begin{proof}
        Fix $\epsilon > 0$. By \ref{app:prop:RKHS_characterization} the space ${\Hs}^{\tiny (\cdot)_0,\omega,\eta}_{[0,1]}$ contains all functionals of form 
        \[
            (X, t) \mapsto  \sprod{\alpha}{\Sig(\omega^{\Xe})_{0,t}}
        \]
        thus, by the properties of the signature and by compactness of $\bX$, for any fixed $s_0 \in [0,1]$ there is some $f \in {\Hs}^{\tiny (\cdot)_0,\omega,\eta}_{[0,1]}$ such that
        \[
        \sup\limits_{X \in \bX} | f(X,s_0) - G(\omega^{\Xe},s_0) | < \epsilon
        \]

        Using the fact that $G \in C^0([0,1]; C^0(C_{1,0}([0,1];\R^{d_{\omega}});\R))$ and compactness of $[0,1]$, we find a finite set $\{0 \leq s_0 \leq \dots \leq s_M \leq 1\}$ of points and $f_0, \dots, f_M \in {\Hs}^{\tiny (\cdot)_0,\omega,\eta}_{[0,1]}$ such that
        \begin{gather}
            \sup\limits_{(X, s) \in \bX \times [s_{i-1}, s_{i+1}]}  | G(\omega^{\Xe}, s) - G(\omega^{\Xe}, s_i) | < \epsilon
            \\
            \sup\limits_{X \in \bX} | f_i(X,s_i) - G(\omega^{\Xe}, s_i) | < \epsilon
            \\
            \sup\limits_{(X, s) \in \bX \times [s_{i-1}, s_{i+1}]} | f_i(X,s) - f_i(X,s_i) | < \epsilon
        \end{gather}
        for $i = 0, \dots, M-1$.
        Notice then how for all $X \in \bX$ and $s \in [s_{i-1}, s_{i+1}]$
        \begin{align*}
            | f_i(X,s) - G(\omega^{\Xe}, s) | \leq & 
            ~ | f_i(X,s) - f_i(X,s_i)| 
            +| f_{i}(X,s_i) - G(\omega^{\Xe}, s_i) | 
            + |G(\omega^{\Xe}, s_i)  - G(\omega^{\Xe}, s) | 
            \leq  ~ 3\epsilon
        \end{align*}

        It follows that the map $F \in C^0([0,1]\times \bX; \R)$ linearly interpolating the $f_i$ in time satisfies
        \[
        \sup\limits_{X \in \bX} \sup\limits_{t \in [0,1]} |F(X)_t - G(\omega^{\Xe}, t)| < 6 \epsilon
        \]

        To conclude note that $\bX$ being compact, the $f_i$ take values in a common compact set $K \subseteq \R$.
        There exist then a neural network $\Psi: [0,1] \times K^M \to \R$ such that 
        \[
        \sup\limits_{i \in 0, \dots, M-1} \sup\limits_{s \in [s_i,s_{i+1}]} |\Psi(t,z) - 
        \left( \frac{s_{i+1} - s}{s_{i+1} - s_i} z_i + \frac{s - s_i}{s_{i+1} - s_i} z_{i+1}  \right)| < \epsilon
        \]
        which means that 
        \[
        \sup\limits_{X \in \bX}  \sup\limits_{t \in [0,1]} | \Psi(t, f_0(X,t), \dots, f_M(X,t)) - F(X,t) | < \epsilon
        \]
  
        Recalling that $\omega^{\Xe,1}_t = t$ we get that $X \mapsto \{t \mapsto t\} \in {\Hs}^{\tiny (\cdot)_0,\omega,\eta}_{[0,1]}$ so that, given density of linear maps on $Z^{\Xe}$ in the space, $\Psi(t, f_0(X,t), \dots, f_M(X,t))$ can be uniformly approximated. 
        Triangular inequality gives finally
        \[
        \sup\limits_{X \in \bX} \sup\limits_{t \in [0,1]} |\Psi(t, f_0(X,t), \dots, f_M(X,t)) - G(\omega^{\Xe},t)| < 7 \epsilon
        \]
        which, by arbitrariness of $\epsilon$, gives the thesis.
    \end{proof}

    The non-linearity is crucial for the path-to-path result. A map of type $(\omega, t) \mapsto  \sprod{t\alpha}{\Sig(\omega)_{0,t}}$ cannot be approximated arbitrarily well by $(\omega, t) \mapsto  \sprod{\beta}{\Sig(\omega)_{0,t}}$. 
    
    In any case, note that in the proof the role of the neural network is only that of interpolating the RKHS elements in the right order and at the right time.
    All the non-linear complexity of learning the particular $G$ is offloaded and taken care of by the RKHS elements.

    \begin{remark}
        In the proof we have only considered the part of $T(X)$ concerning $\omega^{\Xe}$, but $T(X)_t$ depends linearly on $X_0$ and $\xi^{\Xe}$ suggesting that neural networks on ${\Hs}^{\tiny (\cdot)_0,\omega,\eta}_{[0,1]}$ have stronger generalization properties. In fact one can prove that it is possible to approximate all continuous $G(X_0,\omega^{\Xe},\xi^{\Xe}, t)$, this is done by reconstructing $X_0$ and $\xi^{\Xe}_{[0,1]}$ as in the classical SSM case \emph{cf.} \cite{orvieto2023universality}.
    \end{remark}

\newpage
\section{Wronskian Matrix Theory}\label{app:sect:Wronsky}

    In this section we obtain a unified theory studying the solutions to general Linear CDEs.
    The results presented here are not new and can be found in different terms in the literature \cite{friz_victoir_2010}, despite this we have decided to reproduce them from scratch for completeness, notational reasons and to present a self-contained theory.

    \begin{theorem}\label{thm:uniqueness_wronskian}
    For any choice $\{A^1, \dots, A^d\} \subseteq C^0([0,1];\R^{N\times N})$ and $\omega \in C_{1}([0,1];\R^d)$ there exist a unique map $W \in C^0([0,1]\times[0,1]; \R^{N\times N})$ solving the following CDE
    \begin{equation}
        W_{s,t} = Id_N + \sum_{i=1}^d \int_{\tau=s}^t A_{\tau}^i W_{s,\tau}  d\omega^i_{\tau}
    \end{equation}
    \end{theorem}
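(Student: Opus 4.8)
The plan is to solve the equation separately for each fixed base-point $s$ by Picard iteration, establish uniform (indeed factorially fast) convergence of the iterates via the standard $1$-variation estimate, and only then upgrade pointwise existence to joint continuity in $(s,t)$ and to uniqueness. First I would pass to a Lebesgue-integral formulation: since $\omega \in C_1$ is absolutely continuous with $\dot\omega \in L^1$, each Riemann--Stieltjes integral $\int_s^t A^i_\tau W_{s,\tau}\,d\omega^i_\tau$ equals $\int_s^t A^i_\tau W_{s,\tau}\,\dot\omega^i_\tau\,d\tau$, which is convenient for the continuity arguments below. Set $L := \max_i \sup_{\tau \in [0,1]} \norm{A^i_\tau}$, finite by continuity of the $A^i$ on the compact $[0,1]$, and $V := \norm{\omega}_{1;[0,1]}$. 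For fixed $s$ define the iterates $W^{(0)}_{s,t} := Id_N$ and
\[
    W^{(n+1)}_{s,t} := Id_N + \sum_{i=1}^d \int_s^t A^i_\tau W^{(n)}_{s,\tau}\, d\omega^i_\tau .
\]
An induction on $n$, using sub-multiplicativity of the operator norm together with the nested-simplex structure of the iterated integrals (the same mechanism underlying the Factorial Decay bound, Prop. \ref{!decay}), yields the key estimate
\[
    \norm{W^{(n+1)}_{s,t} - W^{(n)}_{s,t}} \;\le\; \frac{\bigl(L\,\norm{\omega}_{1;[s,t]}\bigr)^{n+1}}{(n+1)!} \;\le\; \frac{(LV)^{n+1}}{(n+1)!}.
\]

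Second, because this bound is summable and \emph{uniform} over all $(s,t) \in [0,1]^2$, the telescoping series $\sum_n \bigl(W^{(n+1)} - W^{(n)}\bigr)$ converges absolutely and uniformly, so $W^{(n)}_{s,t} \to W_{s,t}$ uniformly. Each iterate $W^{(n)}$ is jointly continuous in $(s,t)$: this follows inductively from continuity of $\tau \mapsto A^i_\tau$, integrability of $\dot\omega$, and dominated convergence applied to the Lebesgue form of the integral (the contribution of the moving lower limit $s$ being controlled by the integral over a shrinking interval). Since a uniform limit of jointly continuous functions is jointly continuous, $W \in C^0([0,1]^2; \R^{N \times N})$. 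Passing to the limit inside the integral, justified by uniform convergence of $W^{(n)}$ and the $L$-bound on the integrand, shows that $W$ satisfies the fixed-point identity, which gives existence.

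Finally, for uniqueness I would take two solutions $W, \tilde W$, set $D_{s,t} := W_{s,t} - \tilde W_{s,t}$, and observe that $D_{s,\cdot}$ solves the same linear equation with no inhomogeneous term and $D_{s,s} = 0$. Iterating the defining identity $n$ times and reusing the factorial estimate gives $\norm{D_{s,t}} \le \frac{(LV)^n}{n!}\sup_{(s,t)}\norm{D_{s,t}}$ for every $n$; letting $n \to \infty$ forces $D \equiv 0$. This is the Grönwall step, here made quantitative by the same decay. I expect the only genuinely delicate point to be the \emph{joint} continuity in both arguments, since $s$ enters simultaneously as the lower limit of integration and as the base-point of the whole family; the resolution is precisely the uniformity of the factorial bound across $(s,t)$, which transfers joint continuity from the iterates to their uniform limit.
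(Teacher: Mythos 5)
Your proposal is correct and follows essentially the same route as the paper: the paper defines the integral operator $\Gamma$ on $C^0([0,1]^2;\R^{N\times N})$, proves the identical factorial estimate $\norm{\Gamma^k(X)-\Gamma^k(Y)}_\infty \le \frac{(dM\norm{\omega}_{1\text{-var}})^k}{k!}\norm{X-Y}_\infty$, and invokes the Banach fixed point theorem for a map with a contracting iterate, which is just the packaged form of your Picard iteration plus the iterated-difference uniqueness step. Your explicit treatment of joint continuity in $(s,t)$ is a point the paper leaves implicit (it assumes $\Gamma$ preserves $\Omega$ without comment), and your bound is missing only the harmless combinatorial factor $d$ from the sum over channels.
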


    \begin{proof}
    We will use Banach fixed point theorem leveraging the completeness of the space
    $\Omega := C^0([0,1]\times[0,1]; \R^{N\times N})$ with the uniform norm
    \[
    \norm{X}_{\infty} := \sup_{s,t \in [0,1]} \norm{X_{s,t}}_{op}
    \]

    Define the map $\Gamma: \Omega \to \Omega$ as
    \[
        \Gamma(X)_{s,t} = Id_N + \sum_{i=1}^d \int_{\tau=s}^t A_{\tau}^i X_{s,\tau}  d\omega^i_{\tau}
    \]

    One has, for $X,Y \in \Omega$ and $k \in \mathbb{N}$ setting $\Gamma^{0} = Id_{\Omega}$, that
    \[
        \Gamma^{k+1}(X)_{s,t} - \Gamma^{k+1}(Y)_{s,t} = \sum_{i=1}^d \int_{\tau=s}^t A_{\tau}^i \left( \Gamma^{k}(X)_{s,\tau} - \Gamma^{k}(Y)_{s,\tau} \right) d\omega^i_{\tau}
    \]
    which iterated gives
    \begin{align*}
        \Gamma^{k+1}(X)_{s,t} - \Gamma^{k+1}(Y)_{s,t} 
        = &
        \sum_{\substack{ I \in \W_d \\  |I| = k+1}}
        \int_{\tau_{k+1}=s}^t  \cdots \int_{\tau_1 = s}^{\tau_{2}} 
        \left( \prod_{j=k+1}^{1} A^{I_j}_{\tau_{j}} \right) 
        \left( X_{s,\tau_1} - Z_{s,\tau_1} \right) 
        \prod_{j=1}^{k+1} d\omega^{I_j}_{\tau_j}
        \\
        = &
        \sum_{\substack{ I \in \W_d \\  |I| = k+1}}
        \int_{\tau \in \Delta^{k+1}_{[s,t]}}
        A^I_{\tau} 
        \left( X_{s,\tau_1} - Z_{s,\tau_1} \right) 
        d \omega^I_{\tau}
    \end{align*}

    where $\W_d$ is the set of words in the alphabet $\{1,\dots,d\}$ and
    \[
    \Delta^{k}_{[s,t]} := \{ (\tau_1, \dots, \tau_{k}) \in [0,1]^k : \forall j \in {1, \dots, k-1}. \hspace{5pt} \tau_j \leq \tau_{j+1}\}
    \]
    \[
    A^I_{\tau}  := \prod_{j=k+1}^{1} A^{I_j}_{\tau_{j}} \hspace{10pt} d \omega^I_{\tau} := \prod_{j=1}^{k+1} d\omega^{I_j}_{\tau_j}
    \]

    By defining $M = \max\{ \norm{A^i}_{\infty} : i \in \{1, \dots, d\}\}$ then one clearly has
    \begin{equation}
        \norm{\Gamma^{k}(X) - \Gamma^{k}(Y) }_{\infty} \leq \frac{(d M \norm{\omega}_{1-var})^k}{k!} \norm{X - Y}_{\infty}
    \end{equation}
    thus definitely (in $k$) the map $\Gamma^k$ is a contraction. 
    By Banach fixed point there exist a unique fixed point $W \in \Omega$.
    \end{proof}

    \begin{theorem}\label{app:thm:Sig_Wronskian}
        Under the assumptions of the previous theorem one can write $W_{s,t}$ explicitly as
        \begin{equation}
            W_{s,t} = \sum_{I\in \W_d} \int_{\tau \in \Delta^{|I|}_{[s,t]}} A^I_{\tau} d\omega^I_{\tau}
        \end{equation}
        moreover if for all $i$ the matrix-valued maps are constant on all $[0,1]$ \emph{i.e.} $A^i_t \equiv A_i$ then
        \begin{equation}
            W_{s,t} = \sum_{I\in \W_d} A_I \Sig(\omega)_{s,t}^I
        \end{equation}
        where $\Sig(\omega)_{s,t}^I$ is the Signature of the path $\omega$.
    \end{theorem}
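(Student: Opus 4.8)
The plan is to identify the explicit series as the limit of the Picard iterates already analysed in the proof of Theorem \ref{thm:uniqueness_wronskian}. Since the map $\Gamma$ defined there is eventually a contraction on $(\Omega, \norm{\cdot}_\infty)$, its unique fixed point $W$ is the limit of $\Gamma^n(X^{(0)})$ for \emph{any} starting point $X^{(0)} \in \Omega$. I would initialise the iteration at the constant map $X^{(0)}_{s,t} := Id_N$, which is precisely the empty-word ($|I|=0$) term of the claimed series, with the convention that the iterated integral over the degenerate simplex $\Delta^0_{[s,t]}$ equals $Id_N$.

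First I would show by induction on $n$ that
\[
    \Gamma^n(X^{(0)})_{s,t} = \sum_{\substack{I\in\W_d \\ |I|\le n}} \int_{\tau\in\Delta^{|I|}_{[s,t]}} A^I_\tau \, d\omega^I_\tau .
\]
The base case $n=0$ holds by construction. For the inductive step, applying $\Gamma$ prepends one outer integration $\int_s^t A^i_\tau\,(\cdot)\,d\omega^i_\tau$ to each term; relabelling the new outer variable as the largest node $\tau_{|I|+1}$ shows that this operation sends the word $I$ to $Ii$ (appending the letter $i$), enlarging $\Delta^k_{[s,t]}$ to $\Delta^{k+1}_{[s,t]}$ and producing exactly the correctly ordered matrix product $A^{Ii}_\tau = A^i_{\tau_{k+1}}A^{i_k}_{\tau_k}\cdots A^{i_1}_{\tau_1}$ against the differential $d\omega^{Ii}_\tau$. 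Summing over the letter $i$ and over $|I|\le n$, and adding back the constant $Id_N$, recovers all words of length $\le n+1$. This is the same bookkeeping already implicit in the iterated expression for $\Gamma^{k+1}(X)-\Gamma^{k+1}(Y)$ obtained in the previous proof.

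Next I would establish absolute convergence so that the limit is well defined and must coincide with $W$. Writing $M := \max_i \norm{A^i}_\infty$, one has $\norm{A^I_\tau}_{op} \le M^{|I|}$ on the simplex, so by the factorial decay bound (Prop. \ref{!decay}) each length-$k$ term is controlled in operator norm by $M^k\,\norm{\omega}_{1-var;[s,t]}^k/k!$. Since there are $d^k$ words of length $k$, the tail of the series is dominated by the convergent exponential $\sum_k (dM\norm{\omega}_{1-var})^k/k!$. Hence the partial sums $\Gamma^n(X^{(0)})$ converge uniformly, and by uniqueness of the fixed point their limit equals $W$, giving the first identity.

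Finally, for the constant-coefficient case $A^i_t\equiv A_i$, the factor $A^I_\tau = A_{i_k}\cdots A_{i_1} = A_I$ no longer depends on $\tau$ and pulls out of the integral, leaving $A_I\int_{\Delta^{|I|}_{[s,t]}} d\omega^I_\tau = A_I\,\Sig(\omega)^I_{s,t}$ by the very definition of the signature; summing over $I$ yields the stated expansion. I expect the only delicate point to be the careful tracking of the word/simplex ordering in the inductive step, namely verifying that prepending an outer integral corresponds to appending a letter and reproduces exactly the ordering convention $A_I := A_{i_n}\cdots A_{i_1}$; the convergence estimates are then a routine consequence of the factorial decay already in hand.
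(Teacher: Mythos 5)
Your proof is correct, and it reaches the identity by a slightly different route than the paper. The paper takes the candidate series $\tilde W_{s,t} = \sum_{I}\int_{\Delta^{|I|}_{[s,t]}} A^I_\tau\, d\omega^I_\tau$ as given, checks that it converges absolutely and uniformly, that each term lies in $\Omega$ (which requires a short continuity estimate in $(s,t)$), and then verifies directly that $\tilde W$ satisfies $\Gamma(\tilde W)=\tilde W$, concluding by uniqueness of the fixed point. You instead identify the $n$-th Picard iterate $\Gamma^n(Id_N)$ with the partial sum over words of length at most $n$, via the induction that prepending the outer integral $\int_s^t A^i_\tau(\cdot)\,d\omega^i_\tau$ appends the letter $i$ and reproduces the ordering $A_{Ii}=A^i A_{I}$, and then let $n\to\infty$ using the same factorial tail bound. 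Both arguments rest on the identical contraction machinery from Theorem \ref{thm:uniqueness_wronskian}; yours trades the paper's need to verify continuity of each $\tilde W^I_{s,t}$ and the fixed-point equation for the infinite sum against the need to carry the word/simplex bookkeeping explicitly through an induction (which the paper only uses implicitly in its contraction estimate). The combinatorial step you flag as delicate is indeed the only place where care is needed, and your relabelling of the outer variable as $\tau_{k+1}$ handles it correctly; the constant-coefficient specialisation is then immediate in both treatments.
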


    \begin{proof}
        The second assertion follows from the first by definition of the Signature of a path.

        Regarding the first notice how the series is absolutely convergent in $\R^{N\times N}$, uniformly in $s,t$ since
        \begin{align*}
            \sum_{I\in \W_d}  \norm{\int_{\tau \in \Delta^{|I|}_{[s,t]}} A^I_{\tau} d\omega^I_{\tau}}_{op} 
            \leq &
            \sum_{k=0}^{\infty} d^k M^k \frac{  \norm{\omega}_{1-var, [s,t]}^k}{k!} 
            \\
            =  &  e^{dM\norm{\omega}_{1-var, [s,t]}} \leq e^{dM\norm{\omega}_{1-var, [0,1]}}
        \end{align*}
        thus for any $s,t \in [0,1]$ the series defines an element of $\tilde W_{s,t} \in \R^{N \times N}$.

        Using the uniformity of this bound and the fact that for all $I \in \W_d$ one has 
        $$\tilde W_{s,t}^I := \int_{\tau \in \Delta^{|I|}_{[s,t]}} A^I_{\tau} d\omega^I_{\tau} \in \Omega$$
        as a function of $(s,t)$, which moreover is uniformly continuous
        \begin{align*}
            \norm{\tilde W_{s_1,t_1}^I - \tilde W_{s_2,t_2}^I}_{op} 
            = &
            \norm{
            \int_{\tau \in \Delta^{|I|}_{[s_1 \wedge s_2,t_1 \vee t_2]}} (\delta_{\tau \in \Delta^{|I|}_{[s_1,t_1]}} - \delta_{\tau \in \Delta^{|I|}_{[s_2,t_2]}}) A^I_{\tau} d\omega^I_{\tau} 
            }_{op}
            \\
            \leq & M^{|I|} \int_{\tau \in \Delta^{|I|}_{[s_1 \wedge s_2,t_1 \vee t_2]}} \left|\delta_{\tau \in \Delta^{|I|}_{[s_1,t_1]}} - \delta_{\tau \in \Delta^{|I|}_{[s_2,t_2]}}\right| |d\omega^I_{\tau}|
            \\
            \leq & M^{|I|} {\norm{\omega}^{|I|}_{[s_1 \wedge s_2, s_1 \vee s_2] \cup [t_1 \wedge t_2, t_1 \vee t_2]}},
        \end{align*}
        one concludes that $\tilde W_{s,t} \in \Omega$.
        Finally notice that $\tilde W_{s,t}$ is a fixed point of $\Gamma$
        \begin{align*}
            \Gamma(\tilde W_{s,t}) 
            = & 
            Id_N + \sum_{i=1}^d \int_{\tau=s}^t A_{\tau}^i  
            \left( \sum_{I\in \W_d} \int_{\tau \in \Delta^{|I|}_{[0,1]}} A^I_{\tau} d\omega^I_{\tau} \right) 
            d\omega^i_{\tau} 
            \\
            = & Id_N + \sum_{\substack{ I \in \W_d \\  |I| \geq 1}} \int_{\tau \in \Delta^{|I|}_{[0,1]}} A^I_{\tau} d\omega^I_{\tau} 
            d\omega^i_{\tau}
            \\
            = &
            \sum_{I\in \W_d} \int_{\tau \in \Delta^{|I|}_{[0,1]}} A^I_{\tau} d\omega^I_{\tau} = \tilde W_{s,t}
        \end{align*}
        and conclude by uniqueness.
    \end{proof}

    \begin{proposition}
        Under the previous conditions, the unique solution of the $N$-dimensional CDE
        \begin{equation}
            dX_t = X_0 + \sum_{i=1}^d \int_{\tau=0}^t A^i_{\tau} X_{\tau} d\omega^i_{\tau}
        \end{equation}
        is given by 
        \begin{equation}
            X_t = W_{0,t} X_0
        \end{equation}
    \end{proposition}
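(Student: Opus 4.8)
The plan is to obtain the proposition as an immediate consequence of Theorem~\ref{thm:uniqueness_wronskian}, by first exhibiting $X_t := W_{0,t} X_0$ as an explicit solution and then establishing uniqueness through the very same contraction argument already used for the Wronskian matrix. First I would handle existence, and then uniqueness; the two together pin down $X_t = W_{0,t} X_0$.

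For existence, I would specialize the defining equation of the Wronskian to the initial time $s = 0$, which reads
\begin{equation*}
    W_{0,t} = Id_N + \sum_{i=1}^d \int_{\tau=0}^t A_\tau^i W_{0,\tau} \, d\omega^i_\tau .
\end{equation*}
Since $X_0 \in \R^N$ is a fixed vector, right-multiplication by $X_0$ is a bounded linear map and commutes with the Riemann--Stieltjes integral; applying it to both sides yields
\begin{equation*}
    W_{0,t} X_0 = X_0 + \sum_{i=1}^d \int_{\tau=0}^t A_\tau^i \big( W_{0,\tau} X_0 \big) \, d\omega^i_\tau .
\end{equation*}
Interpreting $X_t = W_{0,t} X_0$, this is exactly the integral equation in the statement, so $W_{0,t}X_0$ solves the CDE. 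In the constant-coefficient case $A^i_t \equiv A_i$, Theorem~\ref{app:thm:Sig_Wronskian} then upgrades this to the explicit signature expansion $X_t = \sum_{I \in \W_d} A_I \Sig(\omega)^I_{0,t}\, X_0$, recovering Proposition~\ref{app:prop:Z_exp} as a special case.

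For uniqueness, I would re-run the Banach fixed-point argument of Theorem~\ref{thm:uniqueness_wronskian}, now on the complete space $C^0([0,1];\R^N)$ endowed with the uniform norm, applied to the affine map $\Gamma(X)_t := X_0 + \sum_{i=1}^d \int_0^t A_\tau^i X_\tau \, d\omega^i_\tau$. Exactly as before, the difference $\Gamma^{k}(X) - \Gamma^{k}(Y)$ is a sum over words $I$ of length $k$ of iterated integrals of $\big(X_{\tau_1} - Y_{\tau_1}\big)$ against $\omega$, so with $M := \max_i \norm{A^i}_\infty$ one gets the factorial bound $\norm{\Gamma^k(X) - \Gamma^k(Y)}_\infty \leq \frac{(dM\norm{\omega}_{1-var})^k}{k!}\,\norm{X-Y}_\infty$. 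Hence $\Gamma^k$ is a contraction for $k$ large, giving a unique fixed point, which must therefore equal $W_{0,t}X_0$.

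There is no genuine obstacle here: the statement is essentially a vector-valued reading of the matrix Wronskian identity, obtained by acting on a constant vector. The only point needing minor care is to observe that the contraction estimate carries over verbatim from the matrix setting to the $\R^N$-valued setting, relying on continuity of the $A^i$ and the bounded variation of $\omega$; once this is noted, existence and uniqueness combine to give $X_t = W_{0,t}X_0$.
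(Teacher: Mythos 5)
Your proposal is correct and follows essentially the same route as the paper: the core step in both is to right-multiply the defining integral equation of the Wronskian $W_{0,t}$ by the fixed vector $X_0$ and read off that $W_{0,t}X_0$ satisfies the CDE. The only difference is that the paper disposes of uniqueness by citing a standard well-posedness result for linear CDEs, whereas you re-run the Picard/Banach contraction argument of the Wronskian existence theorem on $C^0([0,1];\R^N)$; this is a valid, self-contained alternative and the factorial bound does carry over verbatim, but it is not a genuinely different approach.
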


    \begin{proof}
        The solutions are unique by standard results \cite{friz_victoir_2010}[Thm. 3.7], moreover 
        \begin{align*}
            W_{0,t} X_0 
            = &
            \left( Id_N + \sum_{i=1}^d \int_{\tau=0}^t A_{\tau}^i W_{0,\tau}  d\omega^i_{\tau} \right) X_0
            = 
            X_0 + \sum_{i=1}^d \int_{\tau=0}^t A_{\tau}^i ( W_{0,\tau}X_0 ) d\omega^i_{\tau} 
        \end{align*}
    \end{proof}

    \begin{proposition}
    The Wronskian matrix has the following properties:
    \begin{enumerate}
        \item $\forall r,s,t \in [0,1]. \hspace{10pt} W_{r,t} = W_{s,t}W_{r,s}$
        \item $\forall s,t \in [0,1].  \hspace{10pt} W_{s,t}^{-1} = W_{t,s}$
        \item $\forall s,t \in [0,1]. \hspace{10pt} W_{s,t} = Id_{N} + \sum_{i=1}^d \int_{\sigma = s}^t W_{\sigma,t}A^i_{\sigma} d\omega^i_{\sigma}$
    \end{enumerate}
    \end{proposition}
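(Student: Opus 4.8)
The plan is to obtain all three identities as consequences of a single fact from Theorem~\ref{thm:uniqueness_wronskian}: the linear CDE $dh_t = \sum_{i=1}^d A^i_t h_t\, d\omega^i_t$ admits, on all of $[0,1]$, a unique solution for any prescribed value $h_s$ at an initial time $s$. Crucially, the contraction estimate in that proof only involves $\norm{\omega}_{1-var,[0,1]}$, so uniqueness holds regardless of the direction of integration. Differentiating the defining equation, the map $t \mapsto W_{s,t}$ is precisely the unique solution of this CDE with value $Id_N$ at $t = s$; I will repeatedly compare candidate maps against this characterization.

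For the flow property~(1), I would fix $r, s$ and compare $f(t) := W_{r,t}$ with $g(t) := W_{s,t} W_{r,s}$ as functions of $t$. By definition $f$ solves the CDE; right-multiplying the forward equation for $W_{s,\cdot}$ by the constant matrix $W_{r,s}$ shows $g$ solves the same CDE, and both equal $W_{r,s}$ at $t = s$ (since $W_{s,s} = Id_N$). Uniqueness then forces $W_{r,t} = W_{s,t} W_{r,s}$ for every $t$, with no ordering assumption on $r, s, t$. The inverse property~(2) is then immediate by specialization: choosing indices $(r,s,t)=(t,s,t)$ gives $Id_N = W_{t,t} = W_{s,t} W_{t,s}$ and choosing $(r,s,t)=(s,t,s)$ gives $Id_N = W_{s,s} = W_{t,s} W_{s,t}$, so $W_{t,s} = W_{s,t}^{-1}$.

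For the adjoint equation~(3), I would exploit that $\sigma \mapsto W_{\sigma,t} W_{s,\sigma}$ is constant (equal to $W_{s,t}$) by the flow property. Applying the Leibniz rule together with the forward equation $\partial_\sigma W_{s,\sigma} = \sum_i A^i_\sigma W_{s,\sigma} \dot\omega^i_\sigma$, then right-multiplying by $W_{s,\sigma}^{-1} = W_{\sigma,s}$ and using $W_{s,\sigma} W_{\sigma,s} = Id_N$, collapses the relation to $\partial_\sigma W_{\sigma,t} = -\sum_i W_{\sigma,t} A^i_\sigma \dot\omega^i_\sigma$. Integrating $\sigma$ from $s$ to $t$ and using $W_{t,t} = Id_N$ then yields exactly $W_{s,t} = Id_N + \sum_i \int_s^t W_{\sigma,t} A^i_\sigma\, d\omega^i_\sigma$.

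The main obstacle is rigor rather than ideas: since $\omega$ is only of bounded variation, the pointwise differentiation and Leibniz rule used for~(3) are valid only almost everywhere and should be recast in Riemann--Stieltjes form. I would make this precise by a telescoping argument: for a partition $s = \sigma_0 < \cdots < \sigma_n = t$, the flow property and the forward equation give $W_{\sigma_{k+1},t} - W_{\sigma_k,t} = -W_{\sigma_{k+1},t} \sum_i \int_{\sigma_k}^{\sigma_{k+1}} A^i_\tau W_{\sigma_k,\tau}\, d\omega^i_\tau$; summing and letting the mesh tend to zero, uniform continuity of $W$ together with $W_{\sigma_k,\tau} \to Id_N$ send the sums to $-\sum_i \int_s^t W_{\tau,t} A^i_\tau\, d\omega^i_\tau$. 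Alternatively, identity~(3) follows combinatorially from the series $W_{s,t} = \sum_I \int_{\Delta^{|I|}_{[s,t]}} A^I_\tau\, d\omega^I_\tau$ of Theorem~\ref{app:thm:Sig_Wronskian} by peeling off the smallest-time integration variable, whose matrix factor sits rightmost in $A^I_\tau$ and thus automatically reproduces the right-multiplied form $W_{\sigma,t} A^i_\sigma$.
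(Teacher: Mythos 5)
Your proof is correct and follows essentially the same route as the paper's: part (1) by uniqueness of the linear CDE solved by both $t\mapsto W_{r,t}$ and $t\mapsto W_{s,t}W_{r,s}$ with common value $W_{r,s}$ at $t=s$, part (2) by specializing (1), and part (3) by differentiating a product of Wronskians to obtain the adjoint equation $d_\sigma W_{\sigma,t} = -\sum_i W_{\sigma,t}A^i_\sigma\, d\omega^i_\sigma$ and integrating (the paper differentiates $W_{s,t}W_{t,s}=Id_N$ in $s$, you differentiate $W_{\sigma,t}W_{s,\sigma}=W_{s,t}$ in $\sigma$ --- the same computation). Your additional remarks on making the a.e.\ differentiation rigorous, and the alternative derivation of (3) by peeling off the smallest-time variable in the series of Theorem~\ref{app:thm:Sig_Wronskian}, go beyond the paper, which carries out the formal computation directly.
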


    \begin{proof}
        Regarding the first statement notice that for all $X_0 \in \R^N$ one has 
        \begin{align*}
            \tilde X_t 
            := & W_{s,t} W_{r,s} X_0 
            = 
            \left( Id_N + \sum_{i=1}^d \int_{\tau=s}^t A_{\tau}^i W_{s,\tau}  d\omega^i_{\tau} \right) W_{r,s} X_0 
            \\
            = &
            W_{r,s} X_0 + \sum_{i=1}^d \int_{\tau=s}^t A_{\tau}^i ( W_{s,\tau}W_{r,s}X_0 ) d\omega^i_{\tau} 
            \\
            = &
            W_{r,s} X_0 + \sum_{i=1}^d \int_{\tau=s}^t A_{\tau}^i \tilde X_{\tau} d\omega^i_{\tau} 
        \end{align*}
        and by the previous proposition also
        \[
        X_t :=  W_{r,t} X_0 = W_{r,t} X_0 + \sum_{i=1}^d \int_{\tau=r}^t A_{\tau}^i X_{\tau} d\omega^i_{\tau} 
        \]
        thus $X_t$ and $\tilde X_t$ solve the same $CDE$ and coincide at time $t=s$.
        This means, by uniqueness, that $X_t$ and $\tilde X_t$ coincide for all times; hence $W_{s,t} W_{r,s}$ and $W_{r,t}$ coincide for all times too since for any choice of $X_0$ one has $W_{s,t} W_{r,s} X_0 = W_{r,t} X_0$.

        The second statement follows from the previous one setting first $r=t$ and subsequently exchanging $s$ and $t$.

        To prove the third equality note that 
        \[
        0 = d_s (W_{s,t} W_{t,s}) = (d_s W_{s,t}) W_{t,s} + W_{s,t} (d_s W_{t,s})
        \]
        hence
        \[
        d_s W_{s,t} = - W_{s,t} (d_s W_{t,s}) W_{t,s}^{-1} = - W_{s,t} (\sum_{i=1}^d A^i_s W_{t,s} d\omega^i_s) W_{t,s}^{-1} 
        = - \sum_{i=1}^d W_{s,t} A^i_s  d\omega^i_s
        \]
    \end{proof}

    \begin{proposition}[Liouville's Formula] \label{prop:Liouville}
        Under the assumptions of the previous theorems, if $\omega \in C^1([0,1];\R^d)$ then
        \begin{equation}
            det(W_{s,t}) = 1 + \sum_{i=1}^d \int_{\tau=s}^t tr(A^i_{\tau})det(W_{s,t}) d\omega^i_{\tau} = \exp\left( \sum_{i=1}^d \int_{\tau=s}^t tr(A^i_{\tau}) d\omega^i_{\tau} \right)
        \end{equation}
    \end{proposition}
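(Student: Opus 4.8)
The plan is to exploit the hypothesis $\omega \in C^1([0,1];\R^d)$ to turn the Wronskian CDE into a classical linear matrix ODE, and then invoke Jacobi's formula for the derivative of a determinant. Fixing $s$ and regarding $t \mapsto W_{s,t}$ as the object of interest, Theorem~\ref{thm:uniqueness_wronskian} gives the equivalent differential form
\[
\frac{d}{dt} W_{s,t} = M_t\, W_{s,t}, \qquad M_t := \sum_{i=1}^d A_t^i\, \dot\omega_t^i, \qquad W_{s,s} = Id_N.
\]
Since the matrix fields $A^i$ are continuous and $\dot\omega$ is continuous, $M \in C^0([0,1];\R^{N\times N})$, so this is a genuine linear ODE with continuous coefficient and $t \mapsto W_{s,t}$ is continuously differentiable.

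Next I would differentiate the scalar quantity $g(t) := \det(W_{s,t})$. Using the invertibility of the Wronskian established in the preceding proposition, namely $W_{s,t}^{-1} = W_{t,s}$, Jacobi's formula yields
\[
\frac{d}{dt}\det(W_{s,t}) = \det(W_{s,t})\,\mathrm{tr}\!\left(W_{s,t}^{-1}\frac{d}{dt}W_{s,t}\right) = \det(W_{s,t})\,\mathrm{tr}\!\left(W_{s,t}^{-1} M_t W_{s,t}\right).
\]
By cyclic invariance of the trace, $\mathrm{tr}(W_{s,t}^{-1} M_t W_{s,t}) = \mathrm{tr}(M_t) = \sum_{i=1}^d \mathrm{tr}(A_t^i)\,\dot\omega_t^i$, so that
\[
g'(t) = g(t)\sum_{i=1}^d \mathrm{tr}(A_t^i)\,\dot\omega_t^i, \qquad g(s) = 1.
\]
Integrating from $s$ to $t$ recovers the first claimed identity $\det(W_{s,t}) = 1 + \sum_{i=1}^d \int_s^t \mathrm{tr}(A_\tau^i)\,\det(W_{s,\tau})\, d\omega_\tau^i$ (the determinant inside the integral is evaluated at the running time $\tau$). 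Since the coefficient $h(\tau) := \sum_{i=1}^d \mathrm{tr}(A_\tau^i)\,\dot\omega_\tau^i$ is continuous, the unique solution of this scalar linear ODE with $g(s)=1$ is $g(t) = \exp\big(\int_s^t h(\tau)\, d\tau\big)$, which is precisely the closed form
\[
\det(W_{s,t}) = \exp\!\left(\sum_{i=1}^d \int_s^t \mathrm{tr}(A_\tau^i)\, d\omega_\tau^i\right).
\]

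The only genuinely delicate step is the clean justification of Jacobi's formula together with the differentiability of $W_{s,\cdot}$; both are secured here by the $C^1$ assumption on $\omega$ and by the invertibility provided earlier. If one wished to avoid Jacobi's formula altogether, an equivalent and self-contained route is a product-integral argument: partition $[s,t]$, write $W_{s,t}$ as the limit of ordered products $\prod_k \big(Id_N + M_{\tau_k}\,\Delta\omega_k\big)$, use multiplicativity of the determinant together with the first-order expansion $\det(Id_N + \varepsilon B) = 1 + \varepsilon\,\mathrm{tr}(B) + O(\varepsilon^2)$, and pass to the limit in the resulting Riemann sum for $\log g$. Either way, the core content is simply that the trace is the infinitesimal generator of the determinant, and everything else is the explicit solution of a one-dimensional linear equation.
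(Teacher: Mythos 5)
Your proof is correct and follows essentially the same route as the paper: the paper's proof is a one-liner observing that the $C^1$ hypothesis lets one rewrite $\sum_i \int_s^t A^i_\tau W_{s,\tau}\,d\omega^i_\tau$ as $\int_s^t \big(\sum_i A^i_\tau \dot\omega^i_\tau\big) W_{s,\tau}\,d\tau$ and then invoke the classical Liouville formula, which is exactly your reduction, with the classical case (Jacobi's formula, cyclicity of the trace, and the scalar linear ODE) spelled out explicitly. You also correctly flag that the integrand in the first displayed identity should read $\det(W_{s,\tau})$ at the running time $\tau$.
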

    \begin{proof}
        This just follows from the classical case since we can write
        \[
            \sum_{i=1}^d \int_{\tau=s}^t A^i_{\tau} W_{s,\tau} d\omega^i_{\tau} = \int_{\tau=s}^t \left( \sum_{i=1}^d A^i_{\tau} \dot \omega^i_{\tau} \right) W_{s,\tau} d\tau
        \]
    \end{proof}

    We can now state the main result of the section:
    \begin{theorem}\label{thm:general_variation_constants}
        Under the assumptions of the previous theorems, given continuous functions $\{B^1, \dots, B^t\} \in (\R^d)^{[0,1]}$ the unique solution of the $N$-dimensional  CDE
        \begin{equation}
            X_t = X_0 + \sum_{i=1}^d \int_{\tau=0}^t
            \left( A^i_{\tau} X_{\tau} + B^i_{\tau}\right)
            d\omega^i_{\tau}
        \end{equation}
        is given \emph{explicitly} by 
        \begin{equation}
            X_t = W_{0,t}X_0 + \sum_{i=1}^d \int_0^t W_{s,t} B^i_s d\omega^i_s
        \end{equation}
        where $W_{s,t} \in C^{0}([0,1]\times[0,1]; \R^{N\times N})$ is the Wronskian matrix defined by
        \begin{equation}
            W_{s,t} = \sum_{I\in \W_d} \int_{\tau \in \Delta^{|I|}_{[s,t]}} A^I_{\tau} d\omega^I_{\tau}
        \end{equation}
    \end{theorem}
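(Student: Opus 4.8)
The plan is to verify directly that the proposed formula solves the integral equation and then invoke uniqueness. Since the homogeneous Wronskian $W_{s,t}$ has already been constructed in Theorem \ref{thm:uniqueness_wronskian} and shown there to satisfy the defining forward equation
\[
W_{s,t} = Id_N + \sum_{i=1}^d \int_{\tau=s}^t A^i_\tau W_{s,\tau}\, d\omega^i_\tau,
\]
and since uniqueness of solutions to such linear CDEs is standard \cite{friz_victoir_2010}, it suffices to substitute the candidate $X_t := W_{0,t}X_0 + \sum_{i=1}^d \int_0^t W_{s,t}B^i_s\, d\omega^i_s$ into the right-hand side $X_0 + \sum_i \int_0^t (A^i_\tau X_\tau + B^i_\tau)\, d\omega^i_\tau$ and check that the identity closes.

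First I would expand $\sum_i \int_0^t A^i_\tau X_\tau\, d\omega^i_\tau$ into a homogeneous part $\sum_i \int_0^t A^i_\tau W_{0,\tau}X_0\, d\omega^i_\tau$ and an inhomogeneous double integral $\sum_{i,j}\int_0^t A^i_\tau \big(\int_0^\tau W_{s,\tau}B^j_s\, d\omega^j_s\big)\, d\omega^i_\tau$. The homogeneous part collapses, via the forward equation started at $s=0$, to exactly $(W_{0,t}-Id_N)X_0$. The crucial step is the double integral: here I would apply Fubini on the simplex $\{0<s<\tau<t\}$ to exchange the order of integration, rewriting it as $\sum_j \int_0^t \big(\sum_i\int_s^t A^i_\tau W_{s,\tau}\, d\omega^i_\tau\big) B^j_s\, d\omega^j_s$. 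The inner $\tau$-integral is again recognized, through the forward equation started at $s$, as $W_{s,t}-Id_N$, producing $\sum_j \int_0^t W_{s,t}B^j_s\, d\omega^j_s - \sum_j \int_0^t B^j_s\, d\omega^j_s$. Collecting contributions, the bare terms $\sum_j \int_0^t B^j_s\, d\omega^j_s$ cancel against the explicit inhomogeneous term $\sum_i\int_0^t B^i_\tau\, d\omega^i_\tau$, and the remaining pieces reassemble into $W_{0,t}X_0 + \sum_j\int_0^t W_{s,t}B^j_s\, d\omega^j_s = X_t$, completing the check.

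The main obstacle is justifying the Fubini exchange rigorously in the Riemann--Stieltjes setting. This is where I would lean on the bounded variation of $\omega$ and the continuity of $A^i$, $B^i$ and $W$: the absolute convergence estimate established inside the proof of Theorem \ref{app:thm:Sig_Wronskian} guarantees that all iterated and simplex integrals are finite and well-defined, so the interchange of the order of integration is licit. An equivalent route, which trades the Fubini argument for a product-rule justification, is classical variation of parameters: writing $X_t = W_{0,t}Y_t$ and applying the Leibniz rule together with the forward equation $dW_{0,t} = \sum_i A^i_t W_{0,t}\, d\omega^i_t$ yields $W_{0,t}\, dY_t = \sum_i B^i_t\, d\omega^i_t$, hence $dY_t = \sum_i W_{t,0}B^i_t\, d\omega^i_t$; integrating with $Y_0 = X_0$ and using the invertibility $W_{0,t}^{-1}=W_{t,0}$ and the cocycle identity $W_{0,t}W_{s,0}=W_{s,t}$ already established recovers the same explicit formula.
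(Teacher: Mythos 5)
Your argument is correct, but it proceeds along a genuinely different route from the paper. The paper takes the (unique) solution $X_t$ as given, fixes $t$, and differentiates the map $s \mapsto W_{s,t}X_s$ in the backward variable $s$, using the backward equation $d_s W_{s,t} = -\sum_i W_{s,t}A^i_s\,d\omega^i_s$ (item 3 of the preceding proposition on the Wronskian): the product rule makes the $A^i$ terms cancel, leaving $d_s(W_{s,t}X_s) = \sum_i W_{s,t}B^i_s\,d\omega^i_s$, and integrating from $s=0$ to $s=t$ with $W_{t,t}=Id_N$ gives the formula in two lines. You instead verify by direct substitution that the candidate satisfies the forward integral equation, which forces you to handle a double integral over the simplex $\{0<s<\tau<t\}$ via Fubini, and then you invoke uniqueness. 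Both are sound; the paper's backward-differentiation trick buys brevity and avoids the Fubini interchange entirely, while your substitution check is more self-contained in that it only uses the forward defining equation of $W$ (plus uniqueness) and never needs the backward equation, the cocycle identity, or invertibility of $W$. Your Fubini step is licit, though the cleanest justification is not the convergence estimate from Theorem \ref{app:thm:Sig_Wronskian} but simply that each $d\omega^i$ induces a finite signed measure (bounded variation) and the integrand $(s,\tau)\mapsto A^i_\tau W_{s,\tau}B^j_s$ is continuous on a compact set. Your second sketched route, variation of parameters via $X_t = W_{0,t}Y_t$, is essentially the forward-variable mirror image of the paper's argument and relies on the same auxiliary identities ($W_{s,t}^{-1}=W_{t,s}$ and the cocycle property) that the paper's backward formulation packages into item 3 of its Wronskian proposition.
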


    \begin{proof}
        Given the unique solution $X_t$ one has
        \begin{align*}
            d_s(W_{s,t}X_s) =&
            d_s(W_{s,t}) X_s + W_{s,t} d_s(X_s)
            \\
            = &
            \sum_{i=1}^d \left( - W_{s,t} A^i_s X_s +  W_{s,t} A^i_{s} X_{s} +  W_{s,t} B^i_{s}\right) d\omega^i_s
            \\
            = &
            \sum_{i=1}^d W_{s,t} B^i_{s} d\omega^i_s
        \end{align*}
        hence
        \[
            X_t - W_{0,t}X_0 = W_{t,t}X_t - W_{0,t}X_0 = \sum_{i=1}^d \int_{s=0}^t W_{s,t} B^i_{s} d\omega^i_s
        \]
    \end{proof}

\newpage
\section{ZOH and Exact Solutions}\label{app:sect:ZOH}

Consider a Linear CDE as the one of (\ref{eqn:Linear_CDE_full})
\[
dZ_t = \sum_{i=1}^{d_{\omega}}  A_i Z_t d\omega^{i}_t  + B d\xi_t
\]
and recall how the solution can be explicitly written, for times $s<t$, as
\[
Z_t = W_{s,t}Z_s + \int_{s}^t W_{r,t}Bd\xi_r
\]

Assume moreover that in the interval $[s,t]$ both drivers have constant derivative \emph{i.e.} 
\[
\omega_r = \omega_s + \boldsymbol{w} (r-s)
\quad
\xi_r = \xi_s + \boldsymbol{v} (r-s)
\]

Then if $\mathbb{A}_{\boldsymbol{w}} := \sum_{i=1}^{d_{\omega}}  A_i \boldsymbol{w}^i$
we get that $W_{r,t} = e^{\mathbb{A}_{\boldsymbol{w}}(t-r)}$ thus 
\begin{equation}
    Z_t = e^{\mathbb{A}_{\boldsymbol{w}}(t-s)}Z_s + \int_{s}^t e^{\mathbb{A}_{\boldsymbol{w}}(t-r)}B\boldsymbol{v}dr
    = e^{\mathbb{A}_{\boldsymbol{w}}(t-s)}Z_s + \left( 
    \int_{s}^t e^{\mathbb{A}_{\boldsymbol{w}}(t-r)}dr \right)
    B\boldsymbol{v}
\end{equation}

But the integral can be explicitly solved as 
\begin{equation}
    \int_{s}^t e^{\mathbb{A}_{\boldsymbol{w}}(t-r)}dr
    =
    \left( - \mathbb{A}_{\boldsymbol{w}}^{-1} e^{\mathbb{A}_{\boldsymbol{w}}(t-r)} \right|_{r=s}^t
    =  \mathbb{A}_{\boldsymbol{w}}^{-1} 
    \left( e^{\mathbb{A}_{\boldsymbol{w}}(t-s)} - \mathbb{I}\right)
\end{equation}
leaving us with 
\begin{equation}
    Z_t 
    = e^{\mathbb{A}_{\boldsymbol{w}}(t-s)}Z_s + 
    \mathbb{A}_{\boldsymbol{w}}^{-1} 
    \left( e^{\mathbb{A}_{\boldsymbol{w}}(t-s)} - \mathbb{I}\right)
    B\boldsymbol{v}
\end{equation}
which, setting $\Delta = t-s$, can be rewritten as 
\begin{equation}
    Z_t 
    = e^{\mathbb{A}_{\boldsymbol{w}}\Delta}Z_s + 
    (\mathbb{A}_{\boldsymbol{w}}\Delta)^{-1} 
    \left( e^{\mathbb{A}_{\boldsymbol{w}}\Delta} - \mathbb{I}\right)
    (B\Delta)\boldsymbol{v}
\end{equation}
\emph{i.e.} exactly the ZOH scheme.

\end{document}